\def\eqref#1{equation~\ref{#1}}
\def\1{\bm{1}}
\def\eps{{\epsilon}}
\def\rmA{{\mathbf{A}}}
\def\rmB{{\mathbf{B}}}
\def\rmC{{\mathbf{C}}}
\def\rmF{{\mathbf{F}}}
\def\rmG{{\mathbf{G}}}
\def\rmH{{\mathbf{H}}}
\def\rmI{{\mathbf{I}}}
\def\rmO{{\mathbf{O}}}
\def\rmQ{{\mathbf{Q}}}
\def\rmS{{\mathbf{S}}}
\def\rmU{{\mathbf{U}}}
\def\rmV{{\mathbf{V}}}
\def\rmW{{\mathbf{W}}}
\def\rmY{{\mathbf{Y}}}
\def\rmZ{{\mathbf{Z}}}
\def\vmu{{\bm{\mu}}}
\def\vnu{{\bm{\nu}}}
\def\vb{{\bm{b}}}
\def\vf{{\bm{f}}}
\def\vg{{\bm{g}}}
\def\vh{{\bm{h}}}
\def\vm{{\bm{m}}}
\def\vs{{\bm{s}}}
\def\vu{{\bm{u}}}
\def\vw{{\bm{w}}}
\def\vx{{\bm{x}}}
\def\vz{{\bm{z}}}
\def\mepsilon{{\bm{\varepsilon}}}
\def\mxi{{\bm{\xi}}}
\DeclareMathAlphabet{\mathsfit}{\encodingdefault}{\sfdefault}{m}{sl}
\SetMathAlphabet{\mathsfit}{bold}{\encodingdefault}{\sfdefault}{bx}{n}
\def\gL{{\mathcal{L}}}
\def\gX{{\mathcal{X}}}
\newcommand{\E}{\mathbb{E}}
\newcommand{\R}{\mathbb{R}}
\newcommand{\Cov}{\mathrm{Cov}}
\definecolor{DarkPink}{rgb}{0.5,0.0,0.18}
\definecolor{DarkGreen}{rgb}{0.1,0.5,0.1}
\definecolor{DarkRed}{rgb}{0.5,0.1,0.1}
\definecolor{DarkBlue}{rgb}{0.1,0.1,0.7}
\definecolor{DarkYellow}{rgb}{.79,.79,0}
\newcommand{\name}{SWAN} 
\newcommand{\gradnorm}{\mathtt{GradNorm}}
\newcommand{\gradwhite}{\mathtt{GradWhitening}}
\definecolor{commentcolor}{RGB}{110,154,155}   %
\definecolor{codegreen}{rgb}{0,0.6,0}
\definecolor{codegray}{rgb}{0.5,0.5,0.5}
\definecolor{backcolour}{RGB}{245,248,250}
\definecolor{emph}{RGB}{166,88,53}
\definecolor{nightblue}{RGB}{9,49,105}
\definecolor{keywords}{RGB}{207,33,46}
\definecolor{lightpurple}{RGB}{130,81,223}
\lstdefinestyle{mystyle}{
    backgroundcolor=\color{backcolour},   
    commentstyle=\color{codegreen},
    keywordstyle=\color{keywords},
    stringstyle=\color{nightblue},
    basicstyle=\fontsize{6.5}{6.5}\ttfamily,
    breakatwhitespace=true,         
    breaklines=true,                 
    captionpos=b,                    
    keepspaces=true,                 
    numberstyle=\tiny\color{codegray},
    numbersep=2pt,                  
    showspaces=false,                
    showstringspaces=false,
    showtabs=false,                  
    tabsize=2,
    captionpos=t,
    emph={},
    emphstyle={\color{lightpurple}},
    linewidth=0.98\columnwidth,
    frame=tb,    
    xrightmargin=0pt,
    xleftmargin=0.23cm,
    numbers=left,
    aboveskip=0.4cm,
    belowskip=0.4cm,
}
\definecolor{lightorange}{RGB}{255, 229, 204}  
\definecolor{lightgreen}{RGB}{204, 255, 204}  
\definecolor{lightblue}{RGB}{	193, 230, 255}  
\newtcolorbox{keyquestionorange}{  
  colback=lightorange,  
  colframe=orange,  
  boxrule=1pt,  
  arc=4pt,  
  left=6pt,  
  right=6pt,  
  top=6pt,  
  bottom=6pt,  
  fontupper=\color{black}
}  
\newtcolorbox{keyquestiongreen}{  
  colback=lightgreen,  
  colframe=green,  
  boxrule=1pt,  
  arc=4pt,  
  left=6pt,  
  right=6pt,  
  top=6pt,  
  bottom=6pt,  
  fontupper=\color{black}
}  
\newtcolorbox{keyquestionblue}{  
  colback=lightblue,  
  colframe=blue,  
  boxrule=1pt,  
  arc=4pt,  
  left=6pt,  
  right=6pt,  
  top=6pt,  
  bottom=6pt,  
  fontupper=\color{black}
} 
\renewcommand\thanks[1]{ % Redefine \thanks to use dagger
  \footnotemark[2]% Use footnote mark 2 (dagger)
  \protected@xdef\@thanks{\@thanks
    \protect\footnotetext[2]{#1}}% Remove \textsuperscript{\dag} from here
}
\let\oldmaketitle\maketitle % Patch \maketitle to temporarily change footnote symbols
\renewcommand{\maketitle}{%
  \begingroup
  \renewcommand*{\thefootnote}{\fnsymbol{footnote}}%
  \oldmaketitle
  \endgroup
}
\g@addto@macro\@maketitle{\setcounter{footnote}{0}} % Reset footnote counter after title
\title{\name{}: SGD with Normalization and Whitening Enables Stateless LLM Training}
\theoremstyle{plain}
\newtheorem{assumption}{Assumption}
\newtheorem{theorem}{Theorem}
\newtheorem*{theorem*}{Statement}
\newtheorem{proposition}{Proposition}
\newtheorem*{proposition*}{Statement}
\newtheorem{corollary}{Corollary}
\newtheorem{definition}{Definition}
\theoremstyle{remark}
\newenvironment{focusquestion}
  {\begin{center}\itshape\bfseries}
  {\end{center}}
\begin{document}
\enlargethispage{2\baselineskip}

\author{    
  Chao Ma$^{\ast}$ \\     
  Microsoft Research\\  
  \And    
  Wenbo Gong$^{\ast}$ \\    
  Microsoft Research\\ 
  \And    
  Meyer Scetbon$^{\ast}$ \\    
  Microsoft Research\\ 
  \And  
  Edward Meeds \\    
  Microsoft Research\\   
  \\  
  $^{\ast}$ These authors contributed equally to this work. 
}    

% \author{    
% Chao Ma$^{\ast}$, Wenbo Gong$^{\ast}$, Meyer Scetbon$^{\ast}$, Edward Meeds \\
% \textbf{Microsoft Research} \\
% \texttt{\{chaoma, wenbogong, t-mscetbon, ted.meeds  \}@microsoft.com } \\
% $^{\ast}$ These authors contributed equally to this work.  
% }    

% Define the custom environment
\newtcolorbox{focusquestionbox}{
    colback=blue!5!white, % Background color
    colframe=blue!75!black, % Border color
    fonttitle=\bfseries,
    coltitle=black,
    title=Focus Question, % Default title
    boxrule=0.75mm,       % Border thickness
    arc=4mm,              % Rounded corners
    width=\textwidth,     % Box width
    left=1mm,             % Left padding
    right=1mm,            % Right padding
    top=2mm,              % Top padding
    bottom=2mm,           % Bottom padding
}

\maketitle
\newcommand{\outidx}{j}
\newcommand{\inidx}{i}
\newcommand{\outdim}{n}
\newcommand{\indim}{m}
\newcommand{\batchidx}{b}
\newcommand{\batchdim}{B}
% \vspace{-1.8em}
\begin{abstract}
Adaptive optimizers such as Adam~\citep{adam} have been central to the success of large language models. However, they often require maintaining optimizer states throughout training, which can result in memory requirements several times greater than the model footprint. This overhead imposes constraints on scalability and computational efficiency.
Stochastic Gradient Descent (SGD), in contrast, is a \emph{stateless} optimizer, as it does not track state variables during training. Consequently, it achieves optimal memory efficiency. However, its capability in LLM training is limited \citep{zhao2024deconstructing}. 
In this work, we show that pre-processing SGD using normalization and whitening in a stateless manner can achieve the same performance as the Adam optimizer for LLM training, while maintaining the same memory footprint of SGD. Specifically, we show that normalization stabilizes gradient distributions, and whitening counteracts the local curvature of the loss landscape. This results in SWAN (\ul{S}GD with \ul{W}hitening \ul{A}nd \ul{N}ormalization), a stochastic optimizer that eliminates the need to store any optimizer states. Empirically, SWAN achieves $\approx 50\%$ reduction on total end-to-end memory compared to Adam. In language modeling tasks, SWAN demonstrates comparable or even better performance than Adam: when pre-training the LLaMA model with 350M and 1.3B parameters, SWAN achieves a 2× speedup by reaching the same evaluation perplexity using half as many tokens.

\end{abstract}
\section{Introduction}\label{sec:introduction}

% \section{Intro trial 1}
Adaptive optimizers, such as Adam and its variants~\citep{adam, adamw, Adafactor, pagliardini2024ademamix, sophia, Zhao2024GaLoreML}, have been central to the success of training large language models (LLMs) \citep{gpt3, llama2, llama3, Bi2024DeepSeekLS, bai2023qwen, zhang2022opt}. However, most adaptive optimizers for LLMs are \emph{stateful}, meaning they require tracking and maintaining internal states. While achieving remarkable empirical success, these states introduce significant memory overhead. For instance, Adam \citep{adam} -- the de facto optimizer for LLM training -- involves the tracking of exponential moving averages (EMAs), effectively doubling  memory requirements. AdEMAMix \citep{pagliardini2024ademamix}, an extension of Adam that achieves significant convergence speed boost, requires storing even more states, tripling the memory requirements. This overhead can be significant especially in distributed settings, where the optimizer states could consume a significant amount of the GPU memory \citep{llama3, korthikanti2023reducing}.  On the other hand, while  stochastic gradient descent (SGD) is optimal in terms of memory efficiency (i.e., it is \emph{stateless}), their capability to train LLMs is limited \citep{zhao2024deconstructing, zhang2020adaptive, kunstner2023noise, kunstner2024heavy}.
% , and the performance gap with Adam is significant \citep{zhang2020adaptive, kunstner2023noise, kunstner2024heavy}. 
Therefore, a natural question arises:

\begin{focusquestion}
Can LLMs be trained efficiently, without storing any optimizer states?
\end{focusquestion}

% There is a growing body of research that has contributed to answering this question positively, by developing novel optimizers that reduce the memory requirements associated with tracking internal
% state variables during the training of LLMs. Many of them achieving similar or even speedup boost performance compared to Adam.

% Several studies contributed towards answering this question positively 

There is a growing body of research that has contributed to answering this question positively by developing novel optimizers that reduce the memory requirements associated with tracking internal state variables during the training of LLMs while achieving similar or even speedup boost performance compared to Adam. For instance, some methods rely solely on tracking the first moment of gradients~\citep{xu2024adamlearningratescaling, jordan2024muon}, while others introduce an additional one-dimensional tracking variable on top of first moments~\citep{zhang2024adam, zhao2024deconstructingmakesgoodoptimizer}. Alternatively, approaches focusing exclusively on pre-conditioner tracking have also been proposed~\citep{pooladzandi2024curvature, li2017preconditioned}. Another line of work focuses on using low-rank approximations to store the first and second moments, thereby reducing the memory cost associated with tracking optimizer states~\citep{Lialin2023ReLoRAHT, Hao2024FloraLA, Zhao2024GaLoreML}. Finally, the concurrent work of ~\citep{zhu2024apollo} proposed an approximate gradient scaling method, that tracks the channel-wise or tensor-wise surrogate learning rates, further improving the memory efficiency.

In this work, we address this question by proposing to simply pre-process the instantaneous stochastic gradient in a stateless manner. The result is \name{} (SGD with Whitening And Normalization), a novel stochastic optimizer that eliminates \emph{all} internal optimizer states and empirically achieves comparable or even better performance compared to Adam on several LLM pre-training tasks.

Our optimizer consists of combining two well-known operators to pre-process the raw gradients: $\gradnorm$ and $\gradwhite$. $\gradnorm$ applies a row-wise normalization on the gradient matrix, while $\gradwhite$ orthogonalizes the normalized gradient matrix. We show that these operators aims at \emph{stabilizing} the stochasticity of gradient distributions during training, and \emph{neutralizing} the local geometry of the loss landscape, respectively. When applied together, both operators enables \name{} to rely solely on the statistics of the current gradient matrices. This approach eliminates the need to track state variables, thereby matching the memory footprint of SGD. In addition to memory savings, \name{} also demonstrates significant computational efficiency: our empirical evaluations on pre-training LLaMA \citep{touvron2023llama} models on the C4 dataset with multiple model sizes show consistently the same or better performance than Adam and other low-rank optimizers. Remarkably, at the 350M and 1.3B scale, our method achieves up to 2X faster convergence in terms of tokens seen compared to Adam. Our contributions are summarized below:

\begin{itemize}[leftmargin=1em]
    \item \textbf{A practical, stateless, adaptive optimizer.}  \name{} (\Cref{alg: no_ema}), is a novel optimizer based on pre-processing instantaneous stochastic gradients with two stateless operators---$\gradnorm$ and  $\gradwhite$ (\Cref{fig: \name{}}). 
    They perform gradient stabilization and loss landscape whitening, respectively, using information solely from the current gradient. \name{} offers:
    \begin{enumerate}
        \item Memory efficiency: \name{} only requires the memory footprint of SGD (w/o momentum), that is: $\approx 50\%$ reduction on total memory, and $\approx 100\%$ reduction on optimizer states compared to Adam. This can be further reduced with LOMO technique \citep{lv2023full} (\Cref{fig: preview} (c)).
        \item Sample efficiency: through experiments (\Cref{sec: exp}) on LLM pretraining tasks, we demonstrate that \name{} not only reduces memory overhead but also achieves similar or even better performance compared to Adam. Notably, \name{} achieves convergence speed-ups of over 2$\times$ in terms of the number of tokens used for both 350M and 1.3B models (\Cref{fig: preview}).  
        \item Computational efficiency: we propose a efficient scheme to accelerate the computation of \name{} (\Cref{sec: practical}), achieving a similar throughput to Adam without the need for distributed computation (\Cref{sec: memory_exp}) required by recent optimizers such as shampoo \cite{gupta2018shampoopreconditionedstochastictensor}.
        \item Robustness to hyperparameters: the hyperparameters of \name{} are lazily tuned via heuristics; and they are shared across all model sizes. Even when compared to tuned-baselines obtained with learning rate sweeps, \name{} still delivers strong performances on LLM pretraining. This is a promising indication of applicability to real-world problems.
        % Interestingly, \name{} still consistently converges \emph{without} learning rate warm-up (\Cref{sec: llm_exp}, \Cref{sec: ablation}).
    \end{enumerate}
    \item \textbf{Theoretical consistency with LLM dynamnics.} We show that 
    (1) $\gradnorm$ can stabilize the heterogeneous covariance of LLM gradients, leveraging the redundancies in LLM gradient flows (\Cref{thm: stability} in \Cref{app: discussion}); and (2) $\gradwhite$ can be derived as a non-diagonal second-order update under a specific structural assumption of the Hessian (\Cref{sec: gradwhite_discuss}). Additionally, we highlight that in the quadratic case, $\gradwhite$ leads to convergence rates that are robust to the condition number of the local curvature (\Cref{thm_sve_optimal}, \Cref{sec: analysis2}). 
    % \item \textbf{Theoretical analysis of \name{}.} We show that 
    % (1) $\gradnorm$ can stabilize the heterogeneous covariance of LLM gradients, leveraging the redundancies in LLM gradient flows (\Cref{thm: stability} in \Cref{app: discussion}); and (2) $\gradwhite$ can be derived as a non-diagonal second-order update under a specific structural assumption of the Hessian (\Cref{sec: gradwhite_discuss}). Additionally, we highlight that in the quadratic case, $\gradwhite$ leads to convergence rates that are robust to the condition number of the local curvature (\Cref{thm_sve_optimal}, \Cref{sec: analysis2}). 
    % % offering theoretical advantages over SGD and Adam (\Cref{thm_sve_dam_group}). 
    
    % \item \textbf{Fast convergence for LLM pretraining.} Through experiments (\Cref{sec: exp}) on LLM pretraining tasks, we demonstrate that \name{} not only reduces memory overhead but also achieves similar or even better performance compared to Adam. Notably, \name{} achieves convergence speed-ups of over 2$\times$ in terms of the number of tokens used for both 350M and 1.3B models (\Cref{fig: preview}).    
\end{itemize}

\begin{table}[t]

\centering
\caption{The summary of memory consumption breakdown of optimizers, and the corresponding learning rate expressiveness. We assume all optimizers are applied on a 2D tensor of $m$ by $n$ ($m < n$). The optimizer state is defined as all intermediate variables maintained over time; while expressiveness is defined as the number of adaptive learning rates that a optimizer can model per tensor. }
\label{tab: summary table}
\resizebox{\textwidth}{!}{
\begin{tabular}{l|lllllll}
\hline
& Adam   & SGD  & SGD-Sal  &  Apollo (rank $r$) & Apollo (mini) & GaLore (rank $r$)  & \textcolor{blue}{\name{}}    \\ \hline
Optimizer States $\downarrow$  & $2mn$ & $0$ &$mn$  &  $2nr + 2 + (mr) $ \footnotemark    & $2n + 2 + (m)$  & $2nr+mr$  & $0$    \\ 
Model weights $\downarrow$  & $mn$ & $mn$ &$mn$  &  $mn$ & $mn$  & $mn$  & $mn$    \\
\hline
Expressiveness $\uparrow$ & $mn$ & $1$ & blockwise & $n$ & $1$ & $mn$ & $mn$ \\
\hline
\end{tabular}}
\label{tab: savings}
\end{table}

\begin{figure*}[t]
\captionsetup[subfigure]{labelformat=empty} % Remove numbering for subfigures
    \centering
    \begin{minipage}{0.45\textwidth}
        \centering
        \includegraphics[width=\textwidth]{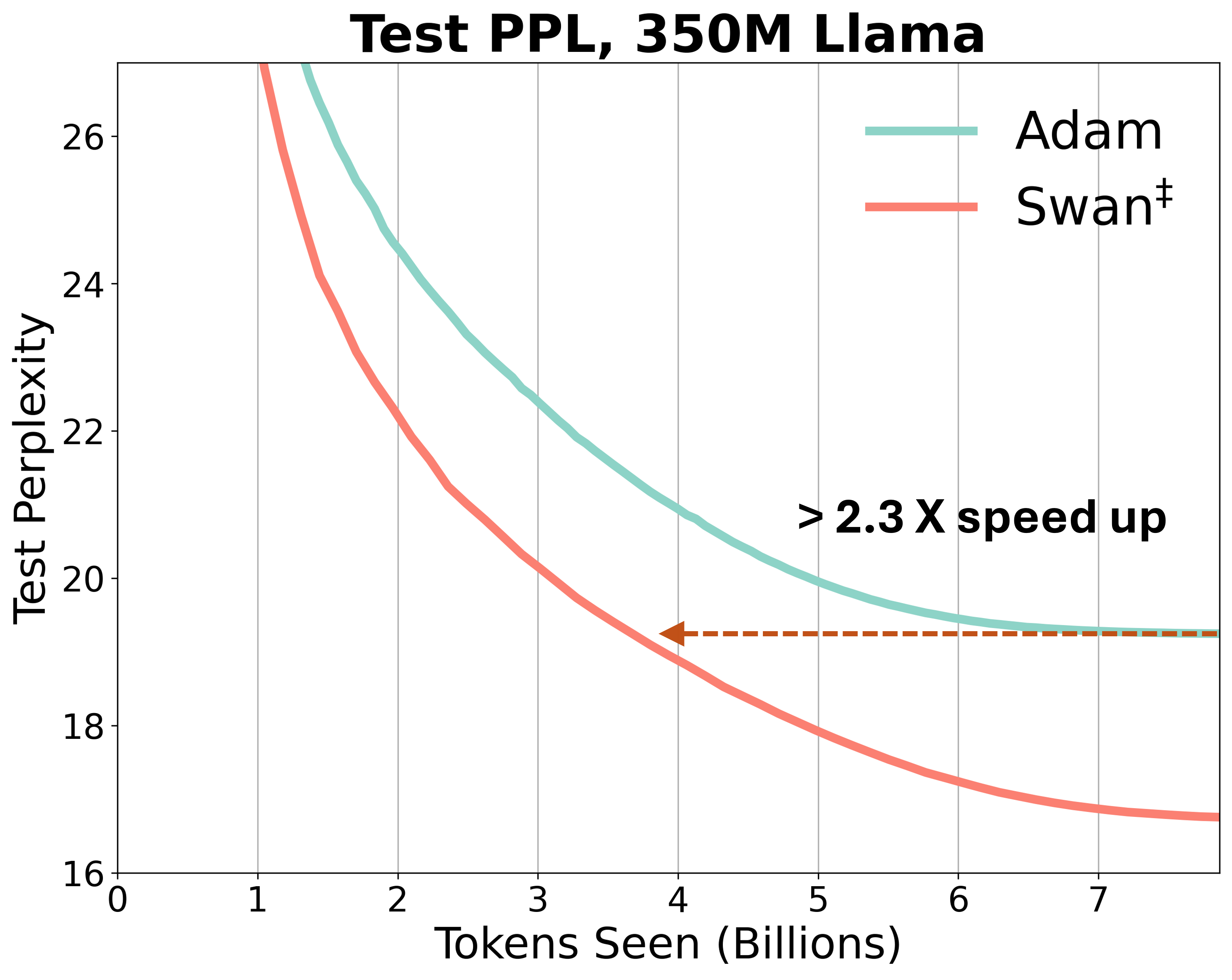}
        \caption*{(a) 350M model pretraining}
    \end{minipage}\hfill
    \begin{minipage}{0.45\textwidth}
        \centering
        \includegraphics[width=\textwidth]{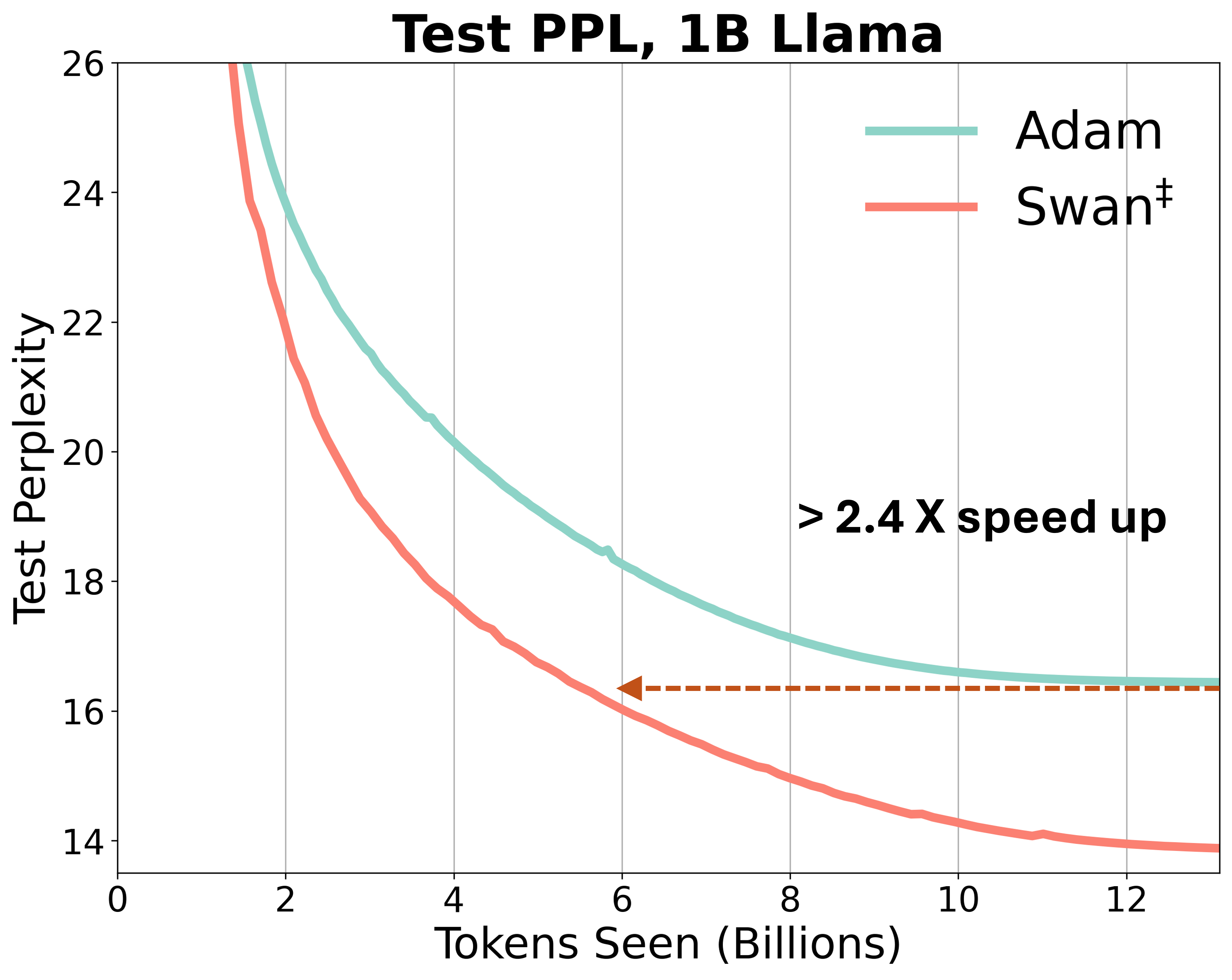}
        \caption*{(b) 1.3B model Pretraining}
    \end{minipage}\hfill
    
    \centering
    \begin{minipage}{0.45\textwidth}
        \centering
        \includegraphics[width=\textwidth]{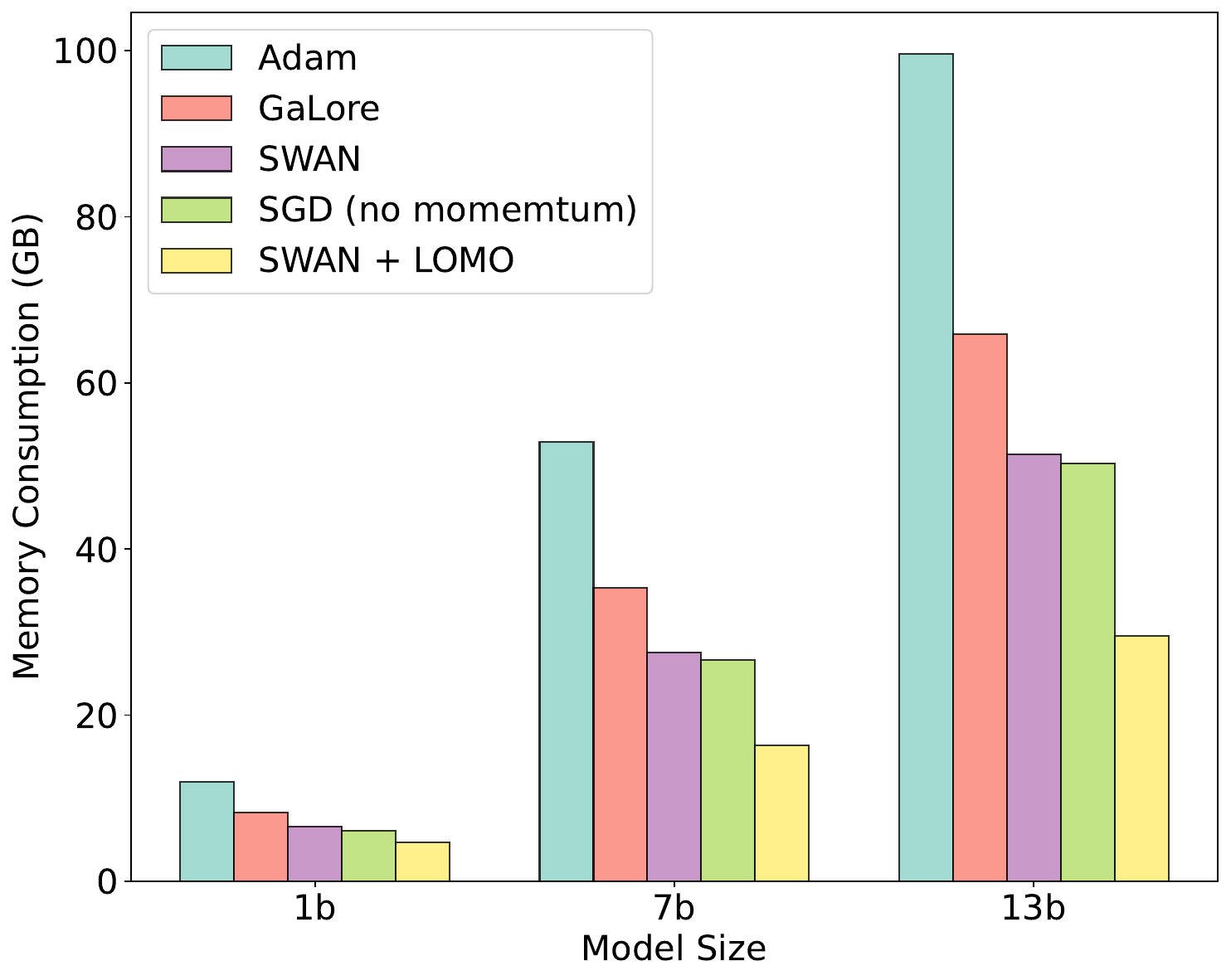}
        \caption*{(c) End-to-end memory footprint}
    \end{minipage}\hfill
     \begin{minipage}{0.45\textwidth}
        \centering
        \includegraphics[width=\textwidth]{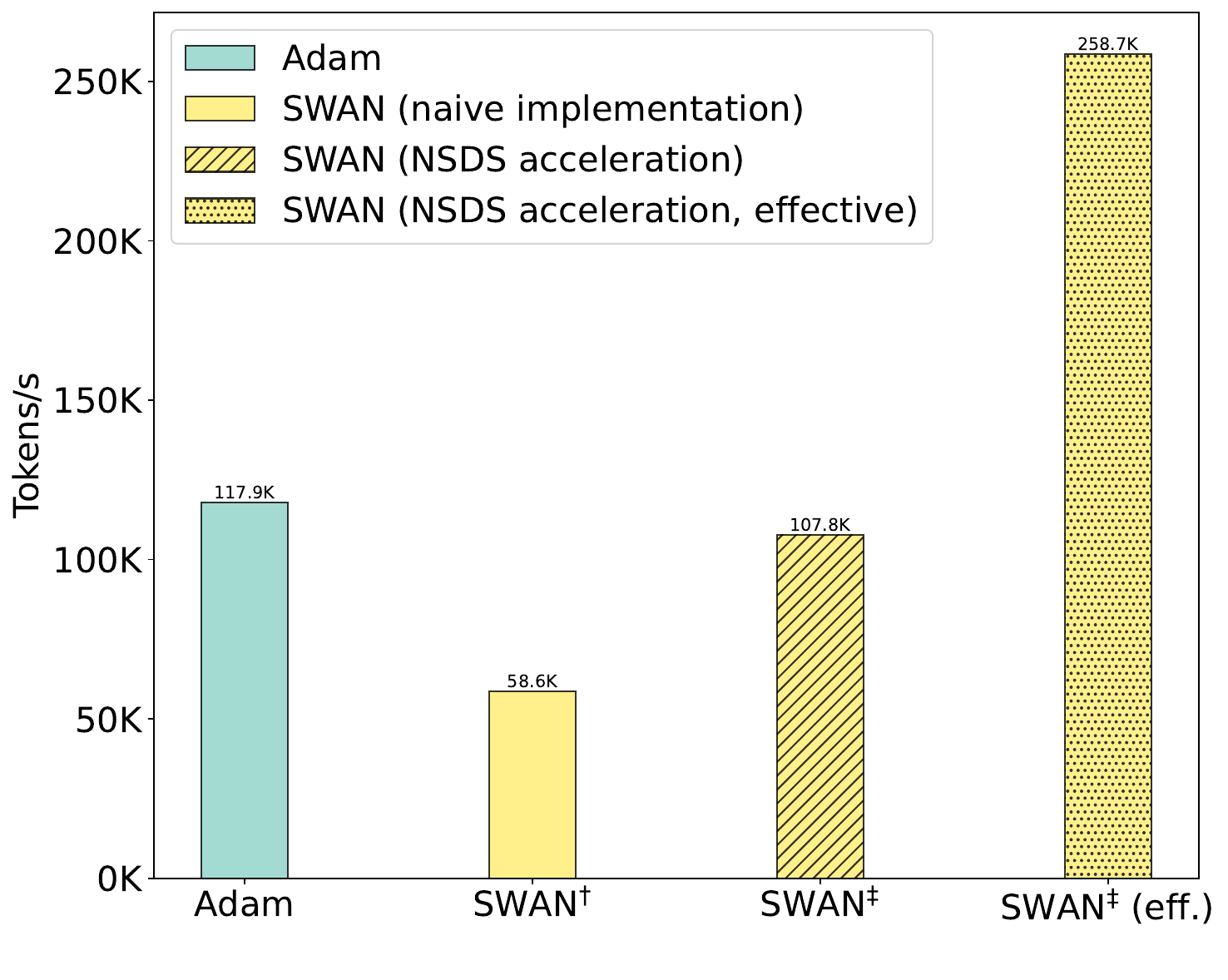}
        \caption*{(d) Throughput w/o distributed processing}
    \end{minipage}\hfill
    % \begin{minipage}{0.25\textwidth}
    %     \centering
    %     \includegraphics[width=\textwidth]{figs/throughput_aligned.pdf}
    %     \caption*{(d) Effective training throughput (same batch size)}
    % \end{minipage}
\caption{\textbf{\name{} Performance on LLM Pretraining.} \textbf{(a)} and \textbf{(b)}: On both 350M and 1.3B LLama architectures, \name{} achieves over 2X speed-up compared to Adam in terms of tokens seen. \textbf{(c)}: Memory footprint. We measure end-to-end memory usage during full-model training with a batch size of 1 sequence. \name{} achieves nearly 100\% reduction in optimizer states memory, and 50\% total memory reduction (up to 70\% when combined with per-layer training technique \cite{Zhao2024GaLoreML, lv2023full}), and  \textbf{(d)}: Training throughput of a 1.3B model without distributed gradient pre-processing. With our acceleration scheme (denoteed by \textbf{\name{}$^\ddag$}), we closely match the throughput of Adam without the need for distributed computation \citep{shi2023distributed}. The rightmost bar shows the effective throughput, adjusted for the token efficiency of the optimizer relative to Adam.}
\label{fig: preview}
\end{figure*}

\section{Related works} \label{sec: related}

\paragraph{Towards Stateless LLM Training.}
Adaptive optimizers generally rely on tracking internal state variables to perform weight updates, which can substantially increase memory consumption when training large models. 
Several recent works have successfully managed to reduce the memory requirements associated with storing additional state variables for training LLMs.
Muon \citep{jordan2024muon, bernstein2024old}, a newly proposed optimizer, has demonstrated strong acceleration and memory saving for LLM training by \footnotetext{The factor $mr$ comes from the storage of random projection tensors. It can be reduced by only storing the random seeds; however, this might cause challenges in a distributed setting.} simplifying shampoo-like optimizers \citep{gupta2018shampoopreconditionedstochastictensor, anil2021scalablesecondorderoptimization, shi2023distributed, wang20244, vyas2024soap, peirson2022fishy, lin2024can} and requiring only the tracking of a first-moment estimate.  SGD-Sal~\citep{xu2024no} only stores the first moment with a learning rate estimated at the beginning of training. Sign-based methods such as \citep{lion} have also demonstrated success on training transformer-based models by only tracking first moments. There are also several works that aim to enhance the memory efficiency of Adam by reducing the memory cost associated with second moments.  Adam-mini~\citep{zhang2024adam} significantly reduces memory usage by storing only scalar values for each parameter block, while Adalayer~\citep{zhao2024deconstructingmakesgoodoptimizer} retains only the scalar average of the second moment for each layer. Alternatively,~PSGD~\citep{pooladzandi2024curvature, li2017preconditioned} focuses on exclusively tracking a pre-conditioner, eliminating the need to track a first moment estimate. Finally, ~\citep{zhu2024apollo} proposes a optimizer based on approximate gradient scaling to train LLMs, which requires tracking channel-wise or tensor-wise approximation to element-wise adaptive learning rates. However, all the aforementioned optimizers still require the storage of state variables. In contrast, \name{} completely eliminates the need to store internal states for both the first and second moments by employing a combination of $\mathtt{GradNorm}$ and $\mathtt{GradWhitening}$ steps, which is discussed next.

\paragraph{Pre-processing Gradients.} Gradient pre-processing is a common technique to improve optimizer performance. Various pre-processing procedures have been proposed in the literature, such as signed gradient \citep{bernstein2018signsgd, crawshaw2022robustness, lion, kunstner2023noise}, gradient clipping \citep{zhang2020adaptive}, normalization \citep{zhang2020adaptive, you2019lamb}, and whitening~\citep{yang2008principal, adam, hwang2024fadam, jordan2024muon, bernstein2024oldoptimizernewnorm, bernstein2024modulardualitydeeplearning, carlson2015preconditioned}. In this work, we focus on normalization and whitening. We apply normalization row-wise on gradient matrices, together with gradient whitening under a specific structural assumption of the Fisher Information \citep{hwang2024fadam, martens2018kronecker}, recovering the orthogonalization step used in~\citep{jordan2024muon, tuddenham2022orthogonalising}. Our key result is that {\em composing} normalization and whitening on stochastic gradients can be sufficient to enable the efficient training of LLMs in a stateless manner. In our setting, we show empirically that both removing any one of these two pre-processing steps from \name{} results in significant performance degradation (\Cref{sec: ablation}). Compared to Lamb~\citep{you2019lamb}, our normalization operation is applied on raw gradients row-wise; while LAMB applied layer-wise global normalization on Adam states instead. Compared to Muon~\citep{jordan2024muon}, \name{} removes first-moment tracking and instead uses row-wise normalization, making the optimizer fully statless. Moreover, our proposed efficient heuristic scheme for computing square root inverse requires $\mathcal{O}(m^2)$ instead of $\mathcal{O}(m^3)$ of the standard NS scheme \citep{song2022fast, li2018towards, huang2019iterative}.

\paragraph{Low-rank methods.} Low-rank optimization techniques have been explored in the context of large language model (LLM) training as a means to reduce memory consumption. These methods focus on applying low-rank approximations to model weights, gradients, and/or optimizer state variables. A seminal work in this domain is LORA~\citep{hu2021lora} to fine-tune pre-trained models using additional low-rank weight matrices at each layer, thereby significantly reducing memory usage to update the weights. More recently, methods such as ReLoRA~\citep{Lialin2023ReLoRAHT}, FLORA~\citep{Hao2024FloraLA}, and Galore~\citep{Zhao2024GaLoreML} have advanced low-rank optimization techniques for memory-efficient LLM pre-training. These approaches leverage low-rank gradient projections to enable full-rank learning, thereby achieving memory savings without compromising model capacity. Notably, they have achieved substantial reductions in the memory consumption of optimizer states, with only minimal impact on model performance. While these approaches effectively reduce the memory footprint of LLM training, they still necessitate storing internal states, resulting in higher memory consumption compared to SWAN.

\section{Preliminaries} \label{sec: adam}

Adam \citep{adam} is the current standard choice for adaptive optimizers across a multitude of machine learning training tasks, including LLM pre-training. Adam is an example of a \emph{stateful} optimizer, which means it accumulates and stores internal states throughout training. It combines the advantages of two earlier methods: AdaGrad \citep{Duchi2011AdaptiveSM}, which adapts learning rates based on the historical gradients' magnitudes, and RMSProp \citep{tieleman2012lecture}, which mitigates the aggressive decrease in learning rates by using a decaying average of squared gradients. 

Consider a loss function $\gL_{\rmW}: \gX \rightarrow \R$, parameterized by weight matrices $\rmW \in \R^{m \times n}$, and denote $\vx^{(t)}$ a mini-batch of inputs provided at the $t$-th training step that is sampled from data distribution $p_{\text{data}}(\vx)$. Let $\rmG^{(t)}$ be the stochastic gradient of $\gL_{\rmW}$ (i.e., a random variable induced by sampling $\vx^{(t)}$). Then, Adam can be broken down into the following steps:
\begin{align*}
        \rmG^{(t)} &= \nabla_{\rmW} \gL_{\rmW}(\vx^{(t)}), \quad    \vx^{(t)} \sim p_{\text{data}}(\vx) \hspace{1cm} &  \text{(stochastic gradient)}\\
        \vm^{(t)} &= \beta_1 \vm^{(t-1)} + (1-\beta_1) \rmG^{(t)} , %\hat{\vm}^{(t)} = \frac{\vm^{(t)}}{1-\beta_1^\top } 
        \hspace{1cm} & \text{(EMA first moment)}\\
        \vnu^{(t)} &= \beta_2 \vnu^{(t-1)} + (1-\beta_2) {\rmG^{(t)}}^2 , \quad  
        %
        % \hat{\vnu}^{(t)} = \frac{\vnu^{(t)}}{1-\beta_2^\top } 
        %
        \hspace{1cm} & \text{(EMA second moment)} \\
        \rmW^{(t + 1)} &= \rmW^{(t)} - \eta \big( \frac{\hat{\vm}^{(t)}}{\sqrt{\hat{\vnu}^{(t)}}+\eps} \big)\hspace{1cm}& \text{(weight update)}
    \end{align*}
where $\vm^{(t)}$ and $\vnu^{(t)}$ are EMAs of the first and second moments of the gradients; and $\eta$ is a global step size. Intuitively, Adam estimates the signal-to-noise (SNR) ratios and use it to adjust learning rates element-wise. Tracking and storing these two EMA estimates triple the total memory consumption required to train a LLM model. For example, for LLaMA 405B model, storing model weights requires 810 GB of memory, while the Adam optimizer states requires an additional 1.6TB of memory.

% \paragraph{Desired properties of Adam.}There is a rich literature on understanding adaptive methods’ inner workings and unreasonable effectiveness. Notably, the key desired properties of Adam can be categorized as \emph{gradient smoothing}, \emph{gradient invariance} and \emph{gradient whitening}. 
% (1) \emph{Gradient smoothing}: the EMA operations in Adam naturally  reduce the influence of mini-batch noise \citep{cutkosky2020momentum,crawshaw2022robustness};
% (2) \emph{Gradient invariance}: recent work suggest the performance gap between SGD and Adam might lie in Adam's \emph{sign-descent}-like nature \citep{bernstein2018signsgd, crawshaw2022robustness, lion}. For example, Adam is robust to the rescaling of gradient diagonals \citep{adam}; and is invariant to any sign-preserving scalings (under $\beta_1 = \beta_2 = 0$)~\citep{bernstein2018signsgd}. And 
% (3) \emph{Gradient whitening}: it is known that the second moment EMA of Adam $\frac{\hat{\vm}^{(t)}}{\sqrt{\hat{\vnu}^{(t)}}+\eps}$ approximates the square root inverse of the diagonal of Fisher information matrix \citep{adam, hwang2024fadam}, and biases the optimization trajectories towards well-conditioned regions \citep{jiang2024does}.  It has been shown that this whitened update $\hat{\rmF}^{-\frac{1}{2}}\mathtt{vec} (\vm)$ provides a better approximation to the geodesic flow, compared with the default natural gradient update \citep{yang2008principal}.

\paragraph{Desired properties of Adam.} There is a rich literature on understanding adaptive methods’ inner workings and unreasonable effectiveness. Notably, the key desired properties of Adam can be categorized as \emph{gradient whitening}, \emph{gradient smoothing}, and \emph{gradient invariance}.

\begin{itemize}[leftmargin=1em]
    \item \emph{Gradient whitening}: it is known that the inverse second moment $ \frac{1} {\sqrt{\hat{\vnu}^{(t)}}+\eps} $ of Adam 
    % $ \frac{1} {\sqrt{\hat{\vnu}^{(t)}}+\eps} $
    performs gradient whitening by approximating the square root inverse of the diagonal of Fisher information matrix \citep{adam, hwang2024fadam}. This step
    % leads to whitened update  $\text{Diag}[\text{vec}({\frac{1} {\sqrt{\hat{\vnu}^{(t)}} +\eps } })] \text{vec}(\vm)$, 
    biases the optimization trajectories towards well-conditioned regions \citep{jiang2024does} and provides a better approximation to the geodesic flow when compared with the natural gradient update \citep{yang2008principal}.
    % it is known that the second moment EMA of Adam $\frac{1}{\sqrt{\hat{\vnu}^{(t)}}+\eps}$ performs gradient whitening using diagonal/elementwise approximations of Fisher information matrix (FIM) \citep{adam, hwang2024fadam}. More formally, Adam's second moment performs the following approximation to FIM:  
    %     \begin{equation*}
    %          [\mathbb{E}(\text{Vec}(\rmG)\text{Vec}(\rmG)^\top)]\simeq \text{Diag}[\text{Vec}(\mathbb{E}(\rmG^{\odot 2}))] \label{eq: exact_whitening}
    %     \end{equation*}
    % where $\text{Vec}(\cdot)$ denotes the vectorized operation, and $\odot$ denotes the Hadamard product. Hence, the inverse square root of the approximate FIM $\text{Diag}[\text{Vec}(\mathbb{E}(\rmG^{\odot 2}))]$ is simply $\text{Diag}[\text{Vec}(\frac{1}{\sqrt{\mathbb{E}(\rmG^{\odot 2})}})]$. If we reshape its diagonal elements back to matrix form, we get exactly the Adam denominator $\frac{1}{\sqrt{\mathbb{E}(\rmG^{\odot 2})}}$. It has been shown that the whitened Fisher update can be beneficial as it provides a better approximation to the geodesic flow, compared with the default natural gradient update \citep{yang2008principal}; empirically, it has been observed that Adam biases the optimization trajectories towards well-conditioned regions \citep{jiang2024does}. 
    \item \emph{Gradient smoothing}: the EMA operations in Adam naturally  reduce the influence of mini-batch noise \citep{cutkosky2020momentum,crawshaw2022robustness};
    \item \emph{Gradient invariance}: recent work suggest the performance gap between SGD and Adam might lie in Adam's \emph{sign-descent}-like nature \citep{bernstein2018signsgd, crawshaw2022robustness, lion}. For example, Adam is robust to the rescaling of gradient diagonals \citep{adam}; and is invariant to any sign-preserving scalings (under $\beta_1 = \beta_2 = 0$)~\citep{bernstein2018signsgd}.
\end{itemize}

For a more comprehensive discussion of these properties, please refer to \cref{app: desired property}.

\paragraph{Adam as SGD pre-processing.} Adam can be viewed as history-dependent pre-processing of the gradients ($\{ \rmG^{(0)}, \rmG^{(1)}, ..., 
 \rmG^{(t)}\} \rightarrow \frac{\hat{\vm}^{(t)}}{\sqrt{\hat{\vnu}^{(t)}}+\eps} $) to achieve the desired properties described above. A key observation is that all of these properties are achieved through element-wise operations, where each element of the gradient matrix is independently pre-processed and re-scaled. This approach does not take into account the interactions and structures between different variables, and we hypothesize that this is the reason why additional history information is necessary to bridge this gap, ultimately leading to the requirement for EMA states. Thus, we believe that designing stateless adaptive optimizers is possible if we can achieve similar properties by applying matrix-level operations that pre-process the instantaneous stochastic gradients of SGD ($\rmG^{(t)} \rightarrow \tilde{\rmG}^{(t)} $).

\section{The \name{} Optimizer: Preprocessing SGD with Normalization and Whitening}

\begin{figure}[h]
\captionsetup[subfigure]{labelformat=empty} % Remove numbering for subfigures
    \centering
        \centering\includegraphics[width=\textwidth]{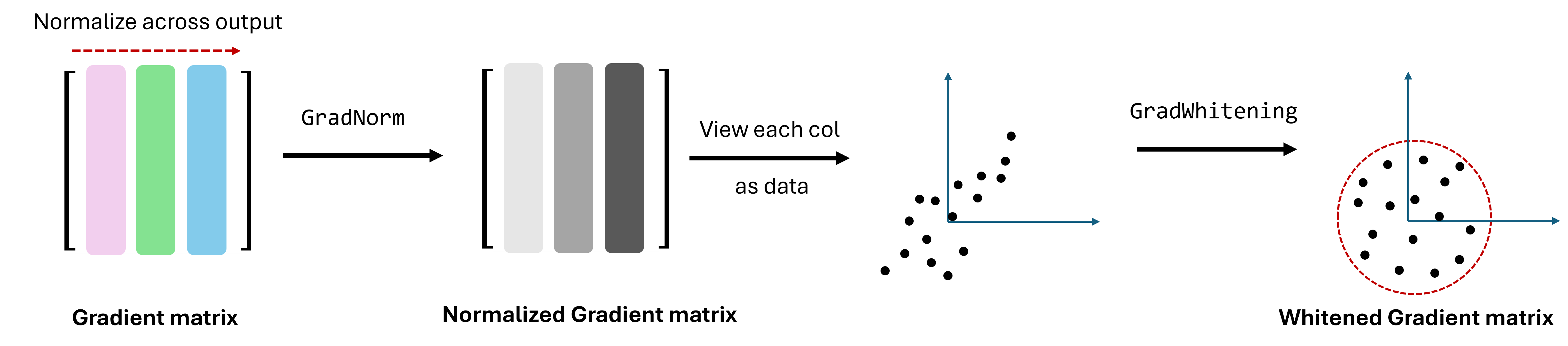}
\caption{Illustration of $\gradnorm$ and $\gradwhite$ operators. In $\gradnorm$ operator, we perform standardization across the output dimensions (columns), using statistics computed row-wise. In $\gradwhite$ operator (illustration adapted from \citet{huang2019iterative}), we treat each column of the gradient matrix $\rmG$ as a separate data sample. Then, $\gradwhite$ can be seen as stretching/squeezing the data such that the covariance matrix is the identity across all eigen directions.}
\label{fig: \name{}}
\end{figure}

\begin{figure}[h!]
    \centering
    % First algorithm
    \begin{minipage}[t]{0.49\textwidth}
        \centering
        \begin{algorithm}[H]
            \caption{\name{} Optimizer}
            \label{alg: no_ema}
         \begin{algorithmic}
           \STATE  {\bfseries Input:} weight matrix $\rmW \in \mathbb{R}^{m \times n}$ with $m \leq n$. Step size $\eta$. Number of $\gradwhite$ iteration $k$ (default = 10).
           \STATE Initialize step $t \gets 0$
           \REPEAT
           \STATE $\rmG^{(t)} \in \mathbb{R}^{m \times n} \gets \nabla_{\rmW^{(t)}} \mathcal{L}^{(t)}(\rmW^{(t)})$ 
           \STATE $\tilde{\rmG}^{(t)} \gets \gradnorm(\rmG^{(t)})$ (Eq (\ref{eq: gradnorm}))   \hfill
           %\COMMENT{via \cref{eq: gradnorm}}
           \STATE $\Delta \rmW^{(t)} \gets \gradwhite(\tilde{\rmG}^{(t)}, k)  \hfill $
           %\COMMENT{via \cref{eq: NS}}
           \STATE (optional) $ \Delta \rmW^{(t)} \gets \frac{ \sqrt{mn}
         \Delta \rmW^{(t)}}{\|\Delta \rmW^{(t)}\|}\hfill$
           \STATE $\rmW^{(t)} \gets \rmW^{(t-1)} - \eta \Delta \rmW ^{(t-1)} $
           \STATE $t \gets t + 1$
           \UNTIL{convergence criteria met}
           \RETURN $\rmW^{(t)}$
         \end{algorithmic}
\end{algorithm}
    \end{minipage}
    \hfill
    % Second algorithm
    \begin{minipage}[t]{0.49\textwidth}
        \centering
        { 
        % \renewcommand{\thealgorithm}{} % Remove numbering
         % \captionsetup{labelformat=empty} 
        \begin{algorithm}[H]
            \caption{$\gradwhite$ Operator}
            \label{alg:optimizer2}
            \begin{algorithmic}
                % Algorithm contents here
                \STATE \textbf{Input:} $\rmG^{m \times n}$ with $m \leq n$. Number of iterations $K$. Step size $\beta$. Boolean $\texttt{diag}$ indicating if use diagonal substitution scheme.
                \STATE Initialize $\rmY \gets \rmG \rmG^\top $, $\rmZ \gets \rmI$
                \FOR{i = 1,...,K}
                    \IF{\texttt{diag}}
                      \STATE   $\rmY \gets \beta \rmY \text{Diag}(3 \rmI - \rmZ \text{Diag}(\rmY))$, 
                      \STATE  $\rmZ \gets \beta (3 \rmI - \text{Diag}(\rmZ) \rmY) \text{Diag}(\rmZ)$
                  \ELSE 
                  \STATE   $\rmY \gets \beta \rmY (3 \rmI - \rmZ \rmY)$, 
                      \STATE  $\rmZ \gets \beta (3 \rmI - \rmZ \rmY) \rmZ$
                    
                  \ENDIF
              \ENDFOR
              \RETURN $\rmZ \rmG$
            \end{algorithmic}
        \end{algorithm}
        \vspace{-1.3em}
        % \begin{algorithm}[H]
        %     \caption{$\gradnorm$ Operator}
        %     \label{alg:optimizer3}
        %     \begin{algorithmic}
        %     \STATE \textbf{Input:}  $\rmG^{m \times n}$
        %     \RETURN $\frac{ \rmG }{  \sqrt{ \frac{1}{\outdim} \sum_{\outidx=1}^{\outdim} (\rmG_{:, \outidx})^2 } \mathbf{1}_{\outdim}^\top  }$
        %     \end{algorithmic}
        % \end{algorithm}
        }
        
    \end{minipage}
    \caption{\name{} Optimizer. When $\mathtt{diag}$ is set to $\mathtt{True}$, we cover the fast variant denoted by \name{}$^\ddag$.}
\end{figure}

\label{sec: \name{}}
%As mentioned above, to achieve smoothing and whitening properties, Adam introduces significant memory overhead storing the gradient moments' EMA.  
As discussed in \Cref{sec: adam}, we believe the key to designing stateless, adaptive, and effective optimizers lies in the incorporation of \emph{matrix-level} operations that exploit rich information contained in the gradient matrix. To this end, we compose two well-known matrix operators, namely normalization and whitening. When applied in tandem, they achieve similar desirable properties of adaptive optimizers, without the need to store historical gradient moments. The result is \name{}  ({S}GD with {W}hitening {A}nd {N}ormalization), a new stateless optimizer which we describe next.

{\subsection{\name{} Update Rules}} \label{sec: update_rules}
% \paragraph{\name{} update rules} 
In \name{} (\Cref{alg: no_ema}), the raw SGD gradient $\rmG^{t}$ is processed by the operations below\footnote{Here $\rmG$ is assumed to be the gradient matrix for some parameter block in the model (e.g. a linear layer); enabling us to take advantage of the matrix structure to achieve our smoothing and whitening goals.}:

\begin{align*}
  \begin{cases}
    \tilde{\rmG}^{(t)} \leftarrow \gradnorm(\rmG^{(t)}) \\
    \Delta \rmW^{(t)} \leftarrow \gradwhite(\tilde{\rmG}^{(t)})
  \end{cases}\tag{\name{}} \label{eq:noema}
\end{align*}

The weight is then updated by $\rmW^{(t + 1)} = \rmW^{(t)} - \eta \Delta \rmW ^{(t)}$. The $\gradnorm$ operator (\Cref{eq: gradnorm}) denotes the normalization of the gradient matrix row-wise (\Cref{fig: \name{}}); and the $\gradwhite$ operator denotes the whitening of the gradients~(\Cref{fig: \name{}}). Both operators have been extensively applied in different contexts of optimization and/or architecture. Related applications include the processing of neural network forward pass activations (in the form of RMS layer norm \citep{zhang2019root} and decorrelated batch norm \citep{huang2018decorrelated, song2022fast}), as well as the processing of backward gradients~\citep{ tuddenham2022orthogonalising, jackson2023isometric, jordan2024muon}. 

\paragraph{On the $\gradnorm$ Step.}
 Consider the gradient matrix $\rmG \in \mathbb{R}^{m \times n}$ with rows and columns corresponding to input and output dimensions, respectively, of some block of model parameters.  Let $1 \leq \inidx \leq \indim$ represent the input indices and $1 \leq \outidx \leq \outdim$ represent the output indices. Instead of performing element-wise EMA to stabilize and normalize the noisy gradients, as done in Adam, we propose to standardize across the output dimensions at each time step $t$:
\begin{equation}
   \gradnorm(\rmG) := \frac{ \rmG  }{ \vs \mathbf{1}_{\outdim}^\top  } \label{eq: gradnorm}
\end{equation}
where $ \vs := \sqrt{ \frac{1}{\outdim} \sum_{\outidx=1}^{\outdim} (\rmG_{:, \outidx}}) $ is ${\indim}$-dimensional the root mean square across dimension ( ${\indim}$-dimensional column vector); $ \mathbf{1}_{\outdim} $ is a ${\outdim}$-dimensional column vector of ones.

$\gradnorm$ is the forward pass operator RMS-Norm \citep{zhang2019root} applied on backward gradients. $\gradnorm$ allows the optimizer to be invariant under matrix-wise and row-wise rescaling. In \Cref{sec: analysis} we show that this is the key to stabilizing the LLM gradient distribution. 
Meanwhile, compared to the sign operation ($\text{sign}(\rmG)$) and matrix-wise normalization ($\frac{\rmG}{\|\rmG\|}$) alternatives \citep{zhang2020adaptive, you2019lamb}, $\gradnorm$ preserves richer information of gradient scaling while offering invariance properties.

% \paragraph{On the $\gradwhite$ Step.} 
% As discussed in \Cref{sec: adam}, Adam relies on a second moment estimate to performs \emph{element-wise} whitening. More formally, the second moment of Adam can be seen as a diagonal approximation of the Fisher information matrix~\citep{adam, hwang2024fadam}:  
% \begin{equation*}
%      \text{Mat} [\mathbb{E}(\text{Vec}(\rmG)\text{Vec}(\rmG)^\top)]\simeq \mathbb{E}(\rmG^{\odot 2}) \label{eq: exact_whitening}
% \end{equation*}
% where $\text{Vec}(\cdot)$ denotes the vectorized operation, $\text{Mat}(\cdot)$ denotes the operation that reshapes a vector back to matrix, and $\odot$ denotes the Hadamard product. 
%

\paragraph{On the $\gradwhite$ Step.}

% it is known that the second moment EMA of Adam $\frac{1}{\sqrt{\hat{\vnu}^{(t)}}+\eps}$ performs gradient whitening using diagonal/elementwise approximations of Fisher information matrix (FIM) \citep{adam, hwang2024fadam}. More formally, Adam's second moment performs the following approximation to FIM:  
    %     \begin{equation*}
    %          [\mathbb{E}(\text{Vec}(\rmG)\text{Vec}(\rmG)^\top)]\simeq \text{Diag}[\text{Vec}(\mathbb{E}(\rmG^{\odot 2}))] \label{eq: exact_whitening}
    %     \end{equation*}
    % where $\text{Vec}(\cdot)$ denotes the vectorized operation, and $\odot$ denotes the Hadamard product. Hence, the inverse square root of the approximate FIM $\text{Diag}[\text{Vec}(\mathbb{E}(\rmG^{\odot 2}))]$ is simply $\text{Diag}[\text{Vec}(\frac{1}{\sqrt{\mathbb{E}(\rmG^{\odot 2})}})]$. If we reshape its diagonal elements back to matrix form, we get exactly the Adam denominator $\frac{1}{\sqrt{\mathbb{E}(\rmG^{\odot 2})}}$. It has been shown that the whitened Fisher update can be beneficial as it provides a better approximation to the geodesic flow, compared with the default natural gradient update \citep{yang2008principal}; empirically, it has been observed that Adam biases the optimization trajectories towards well-conditioned regions \citep{jiang2024does}. 

As discussed in \Cref{sec: adam}, Adam relies on a second moment estimate to perform \emph{element-wise} gradient whitening. More formally, its second moment estimate a diagonal approximation to the Fisher information matrix (FIM): ~\citep{adam, hwang2024fadam}:  
\begin{equation*}
     \mathbb{E}(\text{Vec}(\rmG)\text{Vec}(\rmG)^\top) \approx \text{Diag}[\text{Vec}[\mathbb{E}(\rmG^{\odot 2})]] \label{eq: exact_whitening}
\end{equation*}
where $\text{Vec}(\cdot)$ denotes the vectorized operation, $\text{Diag}(\cdot)$ denotes the operation that produces a diagonal matrix given a vector, and $\odot$ denotes the Hadamard product. Hence, whitening using the inverse square root of this diagonal FIM is equivalent to element-wise rescaling with $\frac{1}{\sqrt{\mathbb{E}(\rmG^{\odot 2})}}$. Here, we consider the following non-diagonal approximation:
\begin{equation*}
   \mathbb{E}(\text{Vec}(\rmG)\text{Vec}(\rmG)^\top)\approx \rmI_{n \times n}  \otimes \rmG\rmG^\top \;,
\end{equation*}
where $\otimes$ denotes Kronecker product. This leads to the following whitening step:
\begin{align}
 \label{eq: gradwhite_def}
    \gradwhite(\rmG):=(\rmG\rmG^\top)^{-1/2}\rmG
\end{align}
where the exponent $-\frac{1}{2}$ stands for matrix inverse square root. $(\rmG\rmG^\top)^{-\frac{1}{2}} \rmG$ is simply the orthogonalization of $\rmG$, i.e., the closest orthogonal matrix to $\rmG$ (w.r.t the Frobenius norm). The derivation of \Cref{eq: gradwhite_def} as structured FIM/Hessian approximation is discussed in \Cref{sec: gradwhite_discuss}. Apart from the FIM/Hessian approximation view,  \Cref{eq: gradwhite_def} can also be interpreted as \emph{minimizing the condition number of $\rmG$} (defined as the ratio between the largest and smallest singular values). It can be shown that after whitening, its condition number achieves the minimum value which is 1. 

Similar to $\gradnorm$,  $\gradwhite$ (\Cref{eq: gradwhite_def}) as a matrix operation has been widely used as a forward-pass operator in the form of decorrelated batch normalization \citep{huang2018decorrelated, li2018towards}; and it has also shown great success in processing backward gradients \citep{tuddenham2022orthogonalising, jordan2024muon, gupta2018shampoopreconditionedstochastictensor}. As shown in \Cref{fig: \name{}}, by treating each column of $\rmG$ as i.i.d. vector-valued data samples $\rmG = \{\vg_1, ..., \vg_{\outidx}, ..., \vg_{\outdim}\}$, $\mathtt{GradWhitening}$ can be seen as effectively stretching/squeezing this data matrix along the eigenvectors to whiten its covariance matrix. This essentially forces the gradients to \emph{traverse all eigen-directions at the same rate}. 
% In \Cref{sec: analysis}, we show that $\gradwhite$ leverages a specific structural assumption of Fisher information matrix that emerges from transformers. 

\subsection{Practical Considerations: Fast and Stable Implementation} \label{sec: practical}

\paragraph{Fast $\gradwhite$} Computing $\gradwhite$ exactly can be expensive, as it involves solving the matrix square-root inverse. One option is to directly apply the Newton-Schulz variant of decorrelated batch normalization \citep{song2022fast, li2018towards, huang2019iterative}, which allows a more GPU-friendly estimation. This is given by \citep{song2022fast, li2018towards}:
% \begin{equation}
% \label{eq: NS}
%         \begin{aligned}
%         \rmY_{k+1} = \frac{1}{2} \rmY_{k} (3 \rmI - \rmZ_k \rmY_k), \quad
%         \rmZ_{k+1} = \frac{1}{2} (3 \rmI - \rmZ_k \rmY_k) \rmZ_k 
%     \end{aligned}
% \end{equation}
%
\begin{align}
  \begin{cases}
   \rmY_{k+1} = \frac{1}{2} \rmY_{k} (3 \rmI - \rmZ_k \rmY_k) \\
   \rmZ_{k+1} = \frac{1}{2} (3 \rmI - \rmZ_k \rmY_k) \rmZ_k
  \end{cases}
  \label{eq: NS}
\end{align}
where $\rmY_0 = \rmG \rmG^\top $ %\footnote{It is standard practice \citep{song2022fast, li2018towards, huang2019iterative} to normalize the singular values of $\rmG$ before Newton-Schulz iteration, i.e., $\rmG \leftarrow \rmG / \|\rmG\|$. }
, $\rmZ_0 = \rmI$. At convergence, $\mathtt{GradWhitening}(\rmG) = \rmZ \rmG$ (\Cref{alg:optimizer2}). See \Cref{app: code} for implementation details.  Note that the same N-S procedure has been discussed in \citep{mei2023kradagrad} and ~\citep{jackson2023isometric} for preconditioned optimizers. Another N-S procedure was been introduced in \citep{bernstein2024old}, and optimized in~\citep{jordan2024muon}.

However, estimating $(\rmG\rmG^\top)^{-1/2}$ with NS requires $\mathcal{O}(m^3)$ (assuming $m < n$) complexity, making its scalability for larger models, especially under distributed settings unclear \cite{jordan2024muon}. Here, we propose a heuristic scheme that has $\mathcal{O}(m^2)$ complexity to estimate square-root inverse (comparable to the $\mathcal{O}(mn)$ complexity of Adam second moment):
\begin{align}
  \begin{cases}
   \rmY_{k+1} = \frac{1}{2} \rmY_{k} \text{Diag}(3 \rmI - \rmZ_k \text{Diag}(\rmY_k)) \\
   \rmZ_{k+1} = \frac{1}{2} (3 \rmI - \text{Diag}(\rmZ_k) \rmY_k) \text{Diag}(\rmZ_k) 
  \end{cases}
  \label{eq: GNS}
\end{align}
where under a slight abuse of notation, $\text{Diag}(\cdot)$ returns a diagonal matrix that has the same diagonals as the input matrix. Intuitively, whenever we encounter matrix multiplication in NS iterations, we replace one of the matrices with its diagonal approximation. The particular choice of where to apply $\text{Diag}(\cdot)$ is determined by performance on synthetic datasets. We refer to this as the \emph{NS with diagonal substitution} (NSDS) scheme. In \Cref{app: NS}, we empirically demonstrate that NSDS performs well in minimizing the matrix condition number of gradients. As shown in \Cref{fig: condition_number}, on the gradient distribution induced by Llama model training, the performance of NSDS (when combined with $\gradnorm$) is comparable to or better than the standard NS scheme in reducing the condition number.

% Moreoever, we found that the cosine similarities between the update obtained by $\gradnorm$ + NSDS processed gradients and the standard NS processed gradients converges to 0 as the training proceeds. This indicates that by composing $\gradnorm$ and NSDS-$\gradwhite$, the resulting characteristic of the optimizer is quite different from standard NS scheme.  For LLM pretraining, we observe that this translates into the fact that $\gradnorm$ and NSDS-$\gradwhite$ 

We found that NSDS does have the side effect of slowing down the early stage convergence. However, it offers stronger tail convergence in return. For LLM pretraining NSDS helps \name{} achieve similar or better final validation loss performance than the original NS scheme (\Cref{sec: llm_exp}), enabling Adam-level training throughput without the help of distributed computation (\Cref{sec: memory_exp}).

% \paragraph{Computational Overhead.} In practice, we only run the Newton-Schulz for $\leq$ 10 iterations, which corresponds to $\leq$ 50  matrix multiplications. However, we found this has negligible computational overhead (< 7\%) w.r.t the 2x speed-up in LLM pre-training obtained (see \Cref{sec: exp}). These matrix multiplications are highly parallelizable, hence for the task of training LLMs, the batch size is the more dominant factor for compute. For example QWen 14B \citep{bai2023qwen} has 4M batch size vs a a model dimension $d_{\text{model}}$  = 5120, DeepSeek \citep{Bi2024DeepSeekLS} 67B has 6M batch size vs $d_{\text{model}}$ = 8192, and LLama 3 \citep{llama3} 405B has 4-16M batch size vs $d_{\text{model}}$ = 16384. To estimate the computational overhead,  assuming the N-S iteration involves approximately $ 50 \times d_{\text{model}}^3 $ FLOPs. In contrast, the primary training cost scales with the batch size and is proportional to $ \text{batch\_size} \times d_{\text{model}}^2 $ FLOPs. In those examples, the estimated computational overhead of Newton–Schulz is typically below $ \leq 7\%$. This ensures the additional computational cost remains minimal compared to the overall training cost.  This is validated in our throughput benchmarking in \Cref{sec: memory_exp}. A similar estimation has been given in \citep{jackson2023isometric} ($\leq 5 \%$), as well as in \citep{jordan2024muon} ($\leq 1 \%$).

\paragraph{Rescaling for robustness.} The operator $\gradwhite$ maps the normalized gradient $ \tilde{\rmG}^{(t)}$ onto the closest orthogonal matrix, and as such might drastically change its effective learning rate. In practice, we propose the following re-scaling before updating the weights:
\begin{equation}
    \Delta \rmW^{(t)} \gets \frac{ \|\tilde{\rmG}^{(t)}\|
 \Delta \rmW^{(t)}}{\|\Delta \rmW^{(t)}\|} =  \frac{ \sqrt{mn}
 \Delta \rmW^{(t)}}{\|\Delta \rmW^{(t)}\|} \label{eq: optional}
\end{equation}

This helps rescale the norm of the whitened gradient back to the norm of $ \tilde{\rmG}^{(t)}$, that is $ \|\tilde{\rmG}^{(t)}\|= \sqrt{mn}$. Notice that this is exactly the norm of signed gradient descent ($=$ Adam without EMAs) \citep{bernstein2018signsgd}. This relationship allows us to directly adapt the learning rates from Adam without tuning specifically for \name{}. This can be throught as a variation of LR grafting \citep{agarwallearning}. 

Finally, the practical \name{} algorithm ($\gradnorm$ + NSDS-$\gradwhite$ + rescaling) is presented in \Cref{alg: no_ema} (under $\mathtt{diag} = \text{True}$). We denote it as \textbf{\name{}$^\ddag$}.

\subsection{Saving analysis: balancing memory efficiency and expressiveness}

\Cref{tab: savings} shows the theoretical memory savings of \name{}. When applied on a 2D tensor of $m$ by $n$ ($m > n$), \name{} requires storing $0$ optimizer states (i.e., intermediate variables maintained over time), compared with other state-of-the-art memory efficient optimizers. Notably, this is achieved without significantly compromising its capabilities to model the adaptive learning rates. \name{} can model \emph{element-wise} adaptive learning rates, while the concurrent work (SGD-sal and Apollo) needs to resort to block-wise or even tensor-wise approximations. In our experiments in \Cref{sec: llm_exp}, we show that these approximations indeed has an negative impact on LLM training performance.

\section{Analysis: A LLM Learning Dynamics Perspective} \label{sec: analysis}

As a new stateless adaptive optimizer, the complete theoretical properties of \name{} is an open question which we leave for future work. However, as a first analysis, we consider \name{} from a {\em learning dynamics} perspective, specifically the dynamics of an LLM based upon a simplified transformer block.  It is this analysis that led to the design choices for \name{}.

\subsection{Setup}
We consider the simplified transformer block (STB) architecture proposed in \citep{tian2023joma}.

\begin{definition}[Simplified Transformer Block (STB)] \label{def: STB}

Given the input activation $\vx \in \mathbb{R}^{M_C \times 1}$, query token index $q$, context embedding matrix $\rmU_C \in \mathbb{R}^{d \times M_C}$, and the query embedding $\vu_q \in \mathbb{R}^{d \times 1}$, the STB computes the output $\vh \in \mathbb{R}^{n \times 1}$ as $ 
\vh = \phi \left( \rmW^\top  \left( \rmU_C \left( \exp(\vz_q) \odot \vx \right) + \vu_q \right) \right)$, 
where $M_C$ is the context length, the attention logits $\vz_q \in \mathbb{R}^{M_C \times 1}$ are given by
$z_{ql} = \vu_q^\top  \rmW_Q^\top  \rmW_K u_l$, 
with $\rmW_Q, \rmW_K \in \mathbb{R}^{d \times d}$ being weight matrices for the queries and keys, respectively, $\rmW \in \mathbb{R}^{d \times n}$ is the weight matrix for the feedforward network, and $\phi$ is a nonlinearity function such as the ReLU.
\end{definition}

Given a STB, we consider a loss function $\gL_{\rmW, \vz}(\vx^{(t)})$, where $\vx^{(t)}$ is a mini-batch of inputs provided at the $t$-th training step sampled from data distribution $p_{\text{data}}(\vx)$. Standard mini-batch learning dynamics is then given by 
$$\dot{\rmW}^{(t)} = \frac{ \partial \gL_{\rmW, \vz_q}(\vx^{(t)}) }{\partial \rmW}, \quad \dot{\vz_q}^{(t)} = \frac{ \partial \gL_{\rmW, \vz_q}(\vx^{(t)}) }{\partial \vz_q}\; .$$ In this case, both $\dot{\rmW}^{(t)}$ and $\dot{\vz_q}$ are viewed as random variables induced by random mini-batch $\vx^{(t)}$.  For example, for each row $i$, $\dot{\rmW}^{(t)}[i, :]$ can be re-written as $\dot{\rmW}^{(t)}[i, :] =  \E[\dot{\rmW}^{(t)}[i, :]] + \mepsilon_{\rmW}^{(t)}[i, :]$, where $\mepsilon^{(t)}[i, :]$ is zero mean random variable with covariance $\Cov[\dot{\rmW}^{(t)}[i, :]]$. 

\subsection{$\gradnorm$: stabilizing gradient distributions of STB}

Below we show that, based on the dynamics of the STB, $\gradnorm$ stabilizes $\mepsilon_{\rmW}^{(t)}$.

\begin{theorem}[\textbf{$\gradnorm$ stabilizes gradient distributions across time} for the STB]\label{thm: stability}
    Consider the STB (\Cref{def: STB}). Assuming we inherit the assumptions in Theorem 1 of \citet{tian2023joma}, as described in \Cref{app: discussion}. Then consider $\rmU_{C}^\top\rmW$, the composition of the MLP project-up matrix and the embedding matrix as a whole. Then, its standardized stochastic gradients $ \tilde{\rmG}_{\rmU_{C}^\top\rmW}^{(t)} := \gradnorm( \frac{ \partial \gL_{\rmW, \vz}(\vx^{(t)}) } {\rmU_C^ \rmW}) $ satisfy:
    $$
    \Cov[\tilde{\rmG}_{\rmU_{C}^\top\rmW}[i, :]^{(t_1)} ] = \Cov[\tilde{\rmG}_{\rmU_{C}^\top\rmW}[i, :]^{(t_2)}] \quad \text{for all } t_1, t_2, \text{and} \, i.
    $$
    In other words, the covariance structure of $ \tilde{{\rmG}}$ is identical across all time steps $ t $, achieving distributional stability across time. The same relationship also holds for the gradient of attention score  $ \tilde{\rmG}_{\vz_q}^{(t)} := \gradnorm( \frac{ \partial \gL_{\rmW, \vz_q}(\vx^{(t)}) } {\partial \vz_q}) $.

\end{theorem}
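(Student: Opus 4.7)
The plan is to exploit the factorizable (rank-one) structure of STB gradients that follows from the JoMA framework (\citet{tian2023joma}), and then show that $\gradnorm$ precisely cancels the time-dependent multiplicative factors on both the row and column sides.

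First, I would invoke the JoMA Theorem~1 assumptions carried over in \Cref{app: discussion} to write the stochastic gradient of the composed matrix $\rmA := \rmU_C^{\top}\rmW$ in closed form. Since the STB output is linear in $\rmA$ prior to the nonlinearity $\phi$, the chain rule gives an essentially rank-one representation
\begin{equation*}
\rmG_{\rmA}^{(t)} \;=\; \va^{(t)}\bigl(\vx^{(t)}\bigr)\,\bigl(\vb^{(t)}\bigl(\vx^{(t)}\bigr)\bigr)^{\top},
\end{equation*}
where $\va^{(t)}$ captures the attention-modulated input $\exp(\vz_q)\odot\vx$ on the row side and $\vb^{(t)}$ is the backpropagated signal through $\phi$ on the column side. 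A completely analogous factorization holds for $\partial \gL/\partial \vz_q$, treated as a row vector whose column-side factor is $\rmU_C^{\top}\vb^{(t)}$.

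Next, I would execute the key cancellation. For any rank-one matrix $\rmG = \va\vb^{\top}$, row $i$ equals $a_i \vb^{\top}$, so its row-wise mean and standard deviation across the output dimension are $a_i \bar{b}$ and $|a_i|\,\mathrm{std}(\vb)$, respectively. Applying $\gradnorm$ therefore yields
\begin{equation*}
\tilde{\rmG}^{(t)}[i,:] \;=\; \operatorname{sign}\bigl(a_i^{(t)}\bigr)\cdot \frac{\bigl(\vb^{(t)}\bigr)^{\top} - \bar{b}^{(t)}\,\vone^{\top}}{\mathrm{std}\bigl(\vb^{(t)}\bigr)},
\end{equation*}
in which the row-dependent scalar $a_i^{(t)}$ cancels entirely and the overall magnitude of $\vb^{(t)}$ is absorbed by the denominator. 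What remains on every row is the same centered, unit-variance object $\hat{\vb}^{(t)}$.

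Finally, I would close the argument by showing that the distribution of $\hat{\vb}^{(t)}$ across the mini-batch draw $\vx^{(t)}\sim p_{\text{data}}$ is independent of $t$. Under the JoMA assumptions (orthogonal embeddings and the corresponding structural conditions on $\phi$ and the data distribution), the dependence of $\vb^{(t)}$ on the current weights enters only as an overall per-sample scalar scaling, which is precisely what the row-wise $\mathrm{std}$ removes. Consequently $\Cov[\tilde{\rmG}^{(t)}[i,:]] = \Cov\bigl[\hat{\vb}^{(t)}\bigr]$ is independent of both $i$ and $t$, and the $\vz_q$ case follows by the same reasoning applied to its own rank-one factorization. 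I expect this last step to be the main obstacle: the algebraic cancellation from the rank-one structure is immediate, but removing \emph{all} state dependence from $\hat{\vb}^{(t)}$ requires the JoMA hypotheses to hold uniformly in $t$, and any direction-changing behavior of $\phi'$ beyond an overall scalar would need to be controlled using the redundancy implied by orthogonal embeddings — this is where the technical heavy lifting belongs.
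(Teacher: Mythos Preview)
Your factorization-and-cancellation mechanism is the same one the paper uses, but you mislocate where the time dependence sits and therefore where the difficulty lies. In the paper's proof, the JoMA dynamics (Theorem~2 of \citet{tian2023joma}, reparametrized row-wise following Lemma~B.6 of \citet{Zhao2024GaLoreML}) gives each row of the gradient of $\rmV=\rmU_C^\top\rmW$ the form $\dot{\vu}_i = \alpha_i(t)\,\vmu_i$ in the noiseless case, where $\alpha_i(t)>0$ is a deterministic scalar coming from the attention exponential and $[\vmu_i]_j = \mathbb{E}_q[g_{h_j}x_i]$ is \emph{time-independent} by the stationary backpropagated-gradient assumption of JoMA. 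The mini-batch gradient is then modeled as $\alpha_i(t)(\vmu_i+\mxi_i)$ with $\mxi_i$ i.i.d.\ across steps, and $\gradnorm$ cancels $\alpha_i(t)$; what remains is a function of $(\vmu_i,\mxi_i)$ only, so the row covariance is trivially time-stationary. The $\vz_q$ case then follows from the invariant $\vz_q=\tfrac{1}{2}\sum_j\rmV_{[:,j]}^2$.

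Your plan instead lets the column-side factor $\vb^{(t)}$ carry weight dependence and flags ``removing all state dependence from $\hat{\vb}^{(t)}$'' as the main obstacle requiring heavy lifting. Under the JoMA Theorem~1 hypotheses you are already inheriting (in particular linear $\phi$, so $\vh'\equiv\mathbf{1}$, together with stationarity of the backpropagated signal $\rmG_{\vh}$), $\vb^{(t)}$ is time-stationary in distribution outright, not merely up to a scalar; the step you anticipate as hard is immediate. Conversely, if you do \emph{not} invoke those hypotheses, then $\vb^{(t)}=\rmG_\vh\odot\phi'({\rmW^{(t)}}^\top\vf)$ depends on $\rmW^{(t)}$ coordinatewise rather than through a single scalar, and your proposed reduction would fail. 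A smaller loose end: your normalized row still carries $\operatorname{sign}(a_i^{(t)})$, which could in principle vary with $t$ and contaminate the covariance; the paper sidesteps this because $\alpha_i(t)=\exp(\cdot)>0$ has fixed sign.
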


\Cref{thm: stability} suggests that $\gradnorm$ implicitly aligns with the dynamics of transformer architectures and removes the time-heterogeneity in gradient covariance structures.

\subsection{$\gradwhite$: An Efficient Non-diagonal Second-Order Update} \label{sec: gradwhite_discuss}

Here, we show that $\gradwhite$ is equivalent to a non-diagonal second-order method under a specific Kronecker factorization assumption of the Hessian/FIM. The assumption is as below:

\begin{assumption}[Assumption of $\gradwhite$]
\label{assumption:struct-hessian}
At time $t$, the local Hessian $\rmH$ of the loss has shared block-diagonal structure, such that $ \rmH = \rmI_{n \times n} \otimes \tilde{\rmH}$, where $\tilde{\rmH} \in \mathbb{R}^{m \times m}$.
\end{assumption}

Approximating Hessian with a Kronecker factorization is not new and has been extensively studied in the literature~\citep{martens2015optimizing,george2018fast, gao2023eigenvalue,gao2021trace, koroko2022efficient, eschenhagen2024kronecker, gupta2018shampoopreconditionedstochastictensor}. Here, this specific structure is useful in our context as 1) it enables estimation of Hessian/FIM without temporal EMAs, nor mini-batch statistics, and 2) it aligns with the statistical property of STB, as shown later in Proposition~\ref{prop_llm_hessian_main}.

By leveraging assumption~\ref{assumption:struct-hessian}, we can now effectively estimate $\rmH$ by only using one single gradient matrix sample $\bm{G}:=[\vg_1,\dots,\vg_n]\in\mathbb{R}^{m\times n}$. Recall that the Fisher information formulation of Hessian is defined as $\rmH =\mathbb{E}[\text{Vec}(\rmG)\text{Vec}(\rmG)^\top]$ where $\text{Vec}(\cdot)$ denotes the vectorized operation. Under assumption~\ref{assumption:struct-hessian}, we can estimate $\rmH=\rmI_{n \times n} \otimes \tilde{\rmH}$ by computing the following simple estimate $\frac{1}{n}\sum_{i=1}^n \vg_i\vg_i^\top = \bm{G}\bm{G}^\top$, which approximate the current $\tilde{\rmH}$. 
% Note that this summation is taken over the column dimension $i$; hence unlike \citet{li2024stochastic}, our estimate does not require batch samples, nor temporal tracking. 
Hence, $\gradwhite$ can be seen as applying a second order update under our structural assumption:
$$
    \mathtt{GradWhitening}(\rmG) = (\rmG\rmG^\top)^{-\frac{1}{2}} \rmG = \tilde{\rmH}^{-\frac{1}{2}} \rmG
$$
 
In the following Proposition, we show that the assumption~\ref{assumption:struct-hessian} of $\gradwhite$ aligns with the equilibrium Hessian structure in the STB regime.

\begin{proposition}[\textbf{Shared structures in the block-diagonal of Hessians at transformer equilibrium}] \label{prop_llm_hessian_main} Consider a STB (\ref{def: STB}), trained with full-batch gradient descent.  Next, assume we inherit all the assumptions from Theorem 1 of \cite{tian2023joma}.  
Then, as $t \xrightarrow[]{} \infty$, we have the following shared Hessian structure along the diagonal blocks:
\begin{equation}
    \frac{\rmH_{sk,s'k}}{\sum_{s, s'} \rmH_{sk,s'k}} \xrightarrow[]{} \frac{\rmH_{sk',s'k'}}{\sum_{s, s'} \rmH_{sk',s'k'}}, \quad \forall 1 \leq s, s' \leq d, 1 \leq k, k' \leq n
\end{equation}
Where $\rmH(\rmW)_{sk,s'k'} = \frac{\partial \mathcal{L}}{\partial{w_{sk}} \partial{w_{s'k'}}}$.
\end{proposition}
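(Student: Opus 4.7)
My plan is to establish a Kronecker factorization of the Hessian at equilibrium, from which the claimed invariance across output indices $k$ follows immediately. The key observation is that the per-sample gradient with respect to $\rmW$ is rank-one in the STB: setting $\vf := \rmU_C(\exp(\vz_q)\odot\vx) + \vu_q\in\mathbb{R}^d$ and $\va_k := \phi'((\rmW^\top\vf)_k)\,\partial_{h_k}\ell$, the chain rule gives $(\nabla_\rmW\mathcal{L})_{sk} = f_s\,a_k$, i.e.\ $\nabla_\rmW\mathcal{L} = \vf\va^\top$. First, I would work with the Gauss--Newton / Fisher form of the Hessian, $\rmH_{sk,s'k'} = \mathbb{E}[f_s a_k f_{s'} a_{k'}]$, and verify that under the assumptions imported from Theorem 1 of \citet{tian2023joma} (which, in the full-batch STB regime, forces the feedforward matrix $\rmW$ into a fixed-point alignment with the embedding/attention statistics), the second-derivative-of-$\phi$ correction term vanishes in the asymptotic regime, so that Fisher and Hessian coincide up to asymptotically negligible terms.

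Next, I would argue that the equilibrium conditions from \citet{tian2023joma} decouple the feature side $\vf$ from the output side $\va$ in the sense that $\mathbb{E}[\vf\vf^\top\otimes\va\va^\top]\to\mathbb{E}[\vf\vf^\top]\otimes\mathbb{E}[\va\va^\top]$ as $t\to\infty$. Concretely: Theorem 1 of \citet{tian2023joma} yields convergence of the attention logits $\vz_q$ to a token-conditional fixed point and alignment of rows of $\rmW$ with salient directions in $\rmU_C$; this means the residual $\va$ at each output coordinate $k$ depends on $\vx$ only through the same low-dimensional summary that drives $\vf$, and with the symmetry used by \citet{tian2023joma} in that theorem the cross-moments factorize. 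Writing $A := \mathbb{E}[\vf\vf^\top]\in\mathbb{R}^{d\times d}$ and $B := \mathbb{E}[\va\va^\top]\in\mathbb{R}^{n\times n}$, the factorization together with $\mathrm{vec}(\vf\va^\top) = \va\otimes\vf$ gives the Kronecker structure
\begin{equation*}
    \rmH \;=\; \mathbb{E}[(\va\otimes\vf)(\va\otimes\vf)^\top] \;\longrightarrow\; B \otimes A, \qquad \text{equivalently } \rmH_{sk,s'k'} \to B_{kk'}A_{ss'}.
\end{equation*}
Setting $k=k'$ yields $\rmH_{sk,s'k} \to B_{kk}A_{ss'}$; summing over $s,s'$ gives $\sum_{s,s'}\rmH_{sk,s'k}\to B_{kk}\sum_{s,s'}A_{ss'}$, so the ratio in the proposition tends to $A_{ss'}/\sum_{s,s'}A_{ss'}$, which is manifestly independent of $k$. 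Equating this ratio for two indices $k$ and $k'$ then produces the stated limit.

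The main obstacle is step two: cleanly importing the equilibrium characterization from \citet{tian2023joma} to justify the $\vf\!\perp\!\va$-style factorization of the fourth moment, since their theorem is stated for the joint dynamics of $(\rmW,\vz_q)$ rather than directly for the Hessian. I expect to handle this by combining two pieces of their equilibrium statement: (i) the convergence of the attention weights that makes $\vf$ concentrate on a token-dependent subspace, and (ii) the alignment of $\rmW$ with $\rmU_C$ that turns $\va$ into a function of that same subspace up to an output-coordinate-dependent scalar. The $\phi''$ term in the exact Hessian, and the fact that $\phi'$ depends on $\rmW$ itself through $\rmW^\top\vf$, are secondary technical nuisances that I would control either by working in the Gauss--Newton surrogate (as is standard in the FIM/K-FAC literature invoked in \cref{sec: gradwhite_discuss}) or by using the ReLU assumption of \Cref{def: STB} to replace $\phi''$ by a measure-zero contribution.
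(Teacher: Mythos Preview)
Your approach diverges substantially from the paper's and, more importantly, rests on a step that is not justified. The paper does \emph{not} pass through the Fisher/Gauss--Newton surrogate at all. Instead it changes variables to $\rmV:=\rmU_C^\top\rmW$ and differentiates the JoMA gradient-flow formula directly: under the ``stationary back-propagated gradient'' assumption inherited from Theorem~1 of \citet{tian2023joma}, the quantity $\mathbb{E}_{q=m}[g_{h_k}x_l]$ is treated as constant in $\rmV$, so $\partial_{v_{lk}}\mathcal{L}=\mathbb{E}_{q=m}[g_{h_k}x_l]\,e^{\frac12\sum_s v_{ls}^2}$ yields the closed form $\rmH(\rmV)_{lk,l'k'}=\dot v_{lk}\,v_{l'k'}\,\delta_{ll'}$. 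The equilibrium input is then not a decoupling of $\vf$ and $\va$, but the coordinate-dominance result (Lemma~B.6 of \citet{Zhao2024GaLoreML}) that a subset $O_l$ of rows satisfies $v_{l^*k}\gg v_{lk}$ and $\dot v_{l^*k}\gg\dot v_{lk}$ for all $k$; this forces the normalized block $\rmH(\rmV)_{lk,l'k}/\sum_{l,l'}\rmH(\rmV)_{lk,l'k}$ to the indicator $\mathbb{1}\{l=l'\in O_l\}$, which is manifestly $k$-independent. Pulling back via $\rmH(\rmW)=(\rmI\otimes\rmU_C)\rmH(\rmV)(\rmI\otimes\rmU_C)^\top$ preserves this $k$-invariance.

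Your plan has two genuine gaps. First, the proposition is about the exact Hessian $\partial^2\mathcal{L}/\partial w_{sk}\partial w_{s'k'}$, not the Fisher outer-product; you assert the difference is ``asymptotically negligible'' but give no mechanism, and the paper never needs such a claim because the JoMA assumptions already make the true Hessian explicit in $\rmV$-coordinates. Second, and more seriously, your central factorization $\mathbb{E}[\vf\vf^\top\otimes\va\va^\top]\to\mathbb{E}[\vf\vf^\top]\otimes\mathbb{E}[\va\va^\top]$ is asserted rather than proved, and your own justification undermines it: you argue that at equilibrium $\va$ depends on $\vx$ through ``the same low-dimensional summary that drives $\vf$'', which is an argument for \emph{dependence}, not for the fourth-moment splitting you need. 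Nothing in Theorem~1 of \citet{tian2023joma} delivers such a decoupling. The paper sidesteps this entirely by never needing any independence: the $k$-invariance comes from the row-dominance of $\rmV$ at equilibrium, not from a Kronecker factorization of a moment. If you want to salvage your route, you would need an independent argument for why the mixed fourth moment factorizes, and that argument is not visible in your proposal.
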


Proposition \ref{prop_llm_hessian_main} shows that, under a simplified setting of the transformer, the Hessian will also converge to an equilibrium solution where the $M_C \times M_C$ blocks over the diagonal direction of Hessian shares an identical structure, which supports the assumption of $\gradwhite$. This result is verified in our numerical experiment (Appendix, Figure \ref{fig_stb_hessian}). Finally, \Cref{thm_sve_optimal} presented in Appendix will also show how $\gradwhite$ helps to make the convergence rate of SGD more robust to the condition number of local curvatures, and outperforms both SGD and Adam in the ill-conditioned regime.

\subsection{Verifying the Theoretical Insights} 

% We further investigate: 1, Does $\gradnorm$ Stabilize Gradient Distributions of SGD? 2, Does $\gradwhite$ Counteracts Local Curvature and Provide Fast Convergence on ill-conditioned problems? In \Cref{app: verification}, we construct controlled synthetic settings to verify those individually, and provide positive empirical evidence.

We empirically validate the theoretical benefits of $\gradnorm$ and $\gradwhite$. As detailed in \Cref{app: verification}, we demonstrate that $\gradnorm$ effectively stabilizes stochastic gradient distributions during SGD training of a scaled-down LLaMA model on the C4 dataset, evidenced by significantly reduced KL divergence fluctuations compared to standard training. Additionally, $\gradwhite$ significantly enhances optimization performance across high-dimensional and ill-conditioned classic optimization functions, consistently outperforming both SGD and Adam. These findings confirm that $\gradnorm$ promotes gradient stability and $\gradwhite$ addresses local curvature challenges, enabling faster and more reliable performance.

\section{Experiments} \label{sec: exp}

In this section, we report empirical results for \name{}. All experiments run on NVIDIA A100 GPUs. 

% The main purposes are:
% \begin{itemize}
    
%     \item Benchmarking the performance, training speed-ups and memory footprint of \name{} on pre-training LLaMA with C4 dataset~\citep{2020t5} against Adam. Results are presented in \Cref{sec: llm_exp}, \Cref{sec: speedup}, and \Cref{sec: memory_exp}. 
    
%     \item Investigating the contribution of $\gradnorm$ and $\gradwhite$ to the performance of \name{} through ablation studies, as outlined in \Cref{sec: ablation}.

%     \item Separately evaluating whether $\gradnorm$ and $\gradwhite$ achieve their intended purposes of stabilizing gradient noise and modeling/ counteracting local curvature, respectively. Detailed results are reported in \Cref{sec: gradnorm_exp} and \Cref{sec: whitening_exp}. 
% \end{itemize}

\subsection{Memory-Efficient Pre-training of Llama Models} \label{sec: llm_exp}

\paragraph{Setup} We evaluate \name{} on memory-efficient Llama ~\citep{touvron2023llama} pre-training tasks, using the standardized settings of \citep{Zhao2024GaLoreML}, which has been adopted by many recent works \citep{Zhao2024GaLoreML, chen2024fira, zhang2024q, huang2024galore, zhu2024apollo}. We consider models with 60M, 130M, 350M, and 1.3B parameters, all trained on the C4 dataset~\cite{2020t5} using an effective batch size of 130K tokens. Following the setup of \cite{Zhao2024GaLoreML}, \name{} is applied to all linear modules in both attention and MLP blocks. Training uses BF16 by default unless specified, see \Cref{app: code}. The other evaluation settings follows \citet{Zhao2024GaLoreML}.

\begin{table*}[h]
    \centering
    \caption{\small{Comparison with Adam and its memory-efficient low-rank variants on pre-training various sizes of LLaMA models on C4 dataset. Validation perplexity is reported, along with a memory estimate of the total of parameters and optimizer states based on BF16 format. Baseline results are directly taken from the official numbers reported in \citet{Zhao2024GaLoreML, zhu2024apollo}, as they shares exactly the same setup. Note that for Adam we report both the official results from \citet{Zhao2024GaLoreML} and our reproduced result. } }
    \label{tab: llm}
    \resizebox{\textwidth}{!}{\begin{tabular}{lcccc}
    \toprule
               & \textbf{60M} & \textbf{130M} & \textbf{350M} & \textbf{1.3 B} \\
    \midrule
    Adam & 33.02 (0.32G) & 24.44 (0.75G) & 19.24 (2.05G) & 16.44 (7.48G) \\
    Adam (cited) &  34.06 (0.32G)   & 25.08 (0.75G) & 18.80 (2.05G) & 15.56 (7.48G) \\
    \midrule 
    \name{}$^\ddag$ & 30.59 (0.23G) & \textbf{22.61 (0.43G)} & \textbf{16.63 (0.93G)} & \textbf{13.56 (2.98G)} \\
    \name{}$^\dag$ & \textbf{30.00 (0.23G)} & 22.83 (0.43G) & 17.14 (0.93G) & 14.42 (2.98G) \\
    \name{}-0 & 32.28 (0.23G) & 24.13 (0.43G) & 18.22 (0.93G) & 15.13 (2.98G) \\
    Momentum$+$$\gradwhite$ & 31.6 (0.27G) & 24.59 (0.59G) & 19.30 (1.49G) & 16.08 (5.23G) \\
    \midrule
    Galore & 34.88 (0.26G) & 25.36  (0.57G) &18.95 (1.29G) & 15.64 (4.43G)\\
    Apollo-mini &  31.93 (0.23G) & 23.53 (0.43G)& 17.18 (0.93G)& 14.17 (2.98G) \\
    Apollo &  31.55 (0.26G) & 22.94 (0.57G) & 16.85 (1.29G) & 14.20 (4.43G) \\
    Low-Rank & 78.18 (0.26G) & 45.51 (0.57G) & 37.41 (1.29G) & 142.53 (4.43G) \\
    LoRA & 34.99 (0.36G) & 33.92 (0.80G) & 25.58 (1.76G) & 19.21 (6.17G) \\
    ReLoRA & 37.04 (0.36G) & 29.37 (0.80G) & 29.08 (1.76G) & 18.33 (6.17G) \\
    \bottomrule
    \name{}$^\ddag$ speed up vs Adam & 1.52 X & 1.6 X & > 2.3 X & > 2.4 X \\
    $r$ of low-rank methods & 128 & 256 & 256 & 512 \\
    Training Steps & 10K & 20K & 60K & 100K \\ %
    \bottomrule
    \end{tabular}}
\end{table*}

\begin{figure}[H]
    \centering
    % First row
    \begin{subfigure}[b]{0.45\textwidth}
        \centering
        \includegraphics[width=\textwidth]{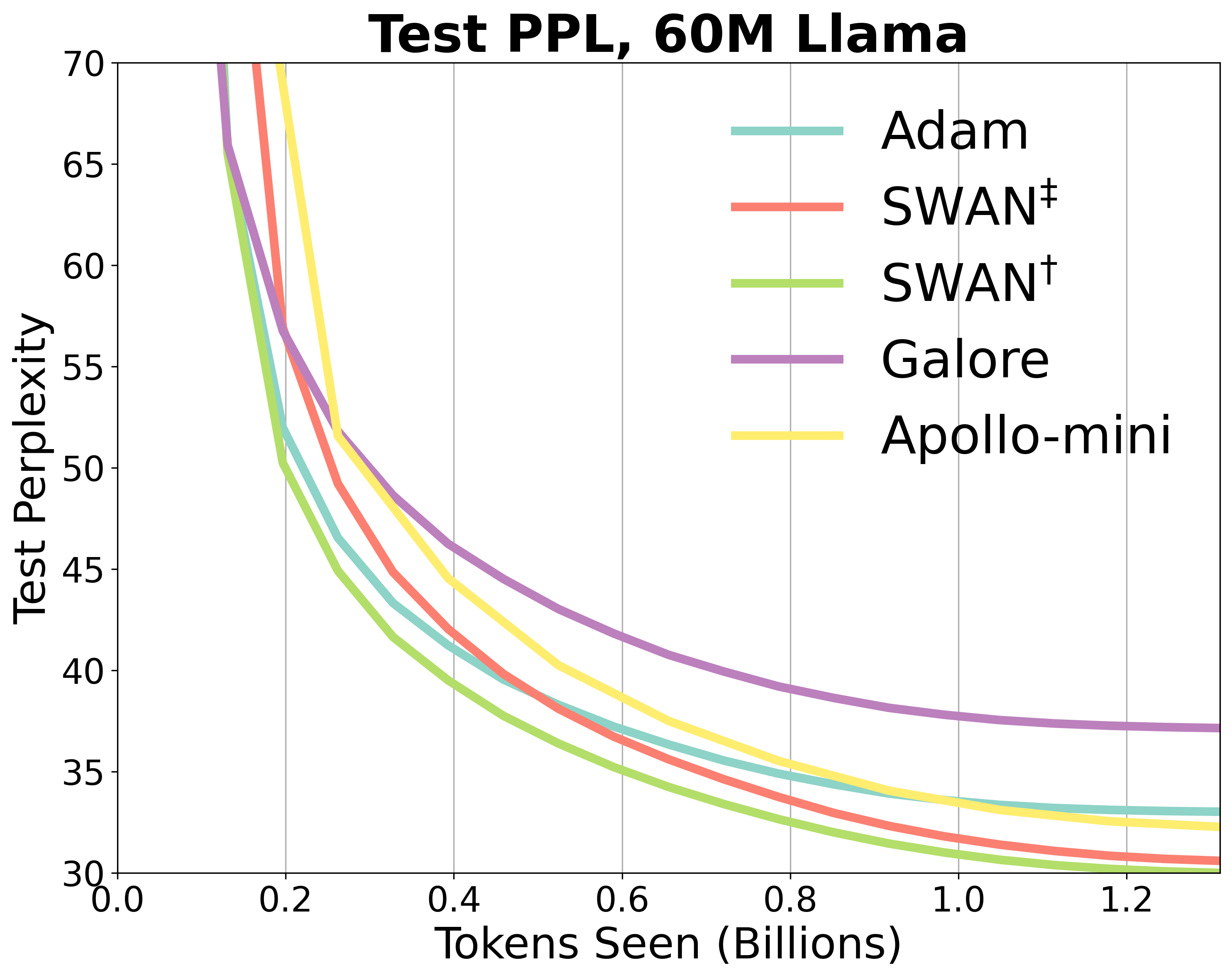} % Replace with your image
        \caption{60M}
    \end{subfigure}
    \hfill
    \begin{subfigure}[b]{0.45\textwidth}
        \centering
        \includegraphics[width=\textwidth]{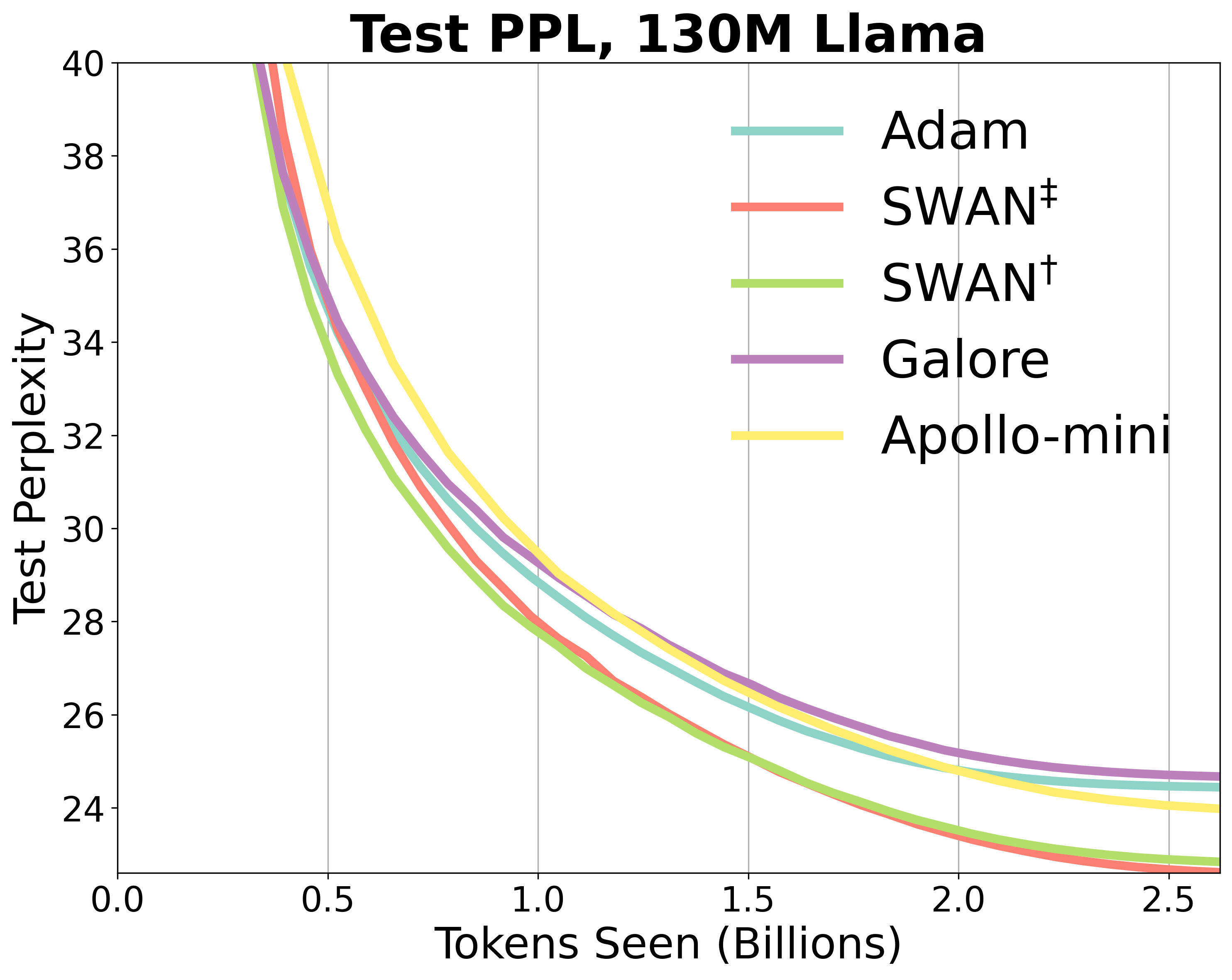} % Replace with your image
        \caption{130M}
    \end{subfigure}
    \hfill
    \vskip\baselineskip
    \begin{subfigure}[b]{0.45\textwidth}
        \centering
        \includegraphics[width=\textwidth]{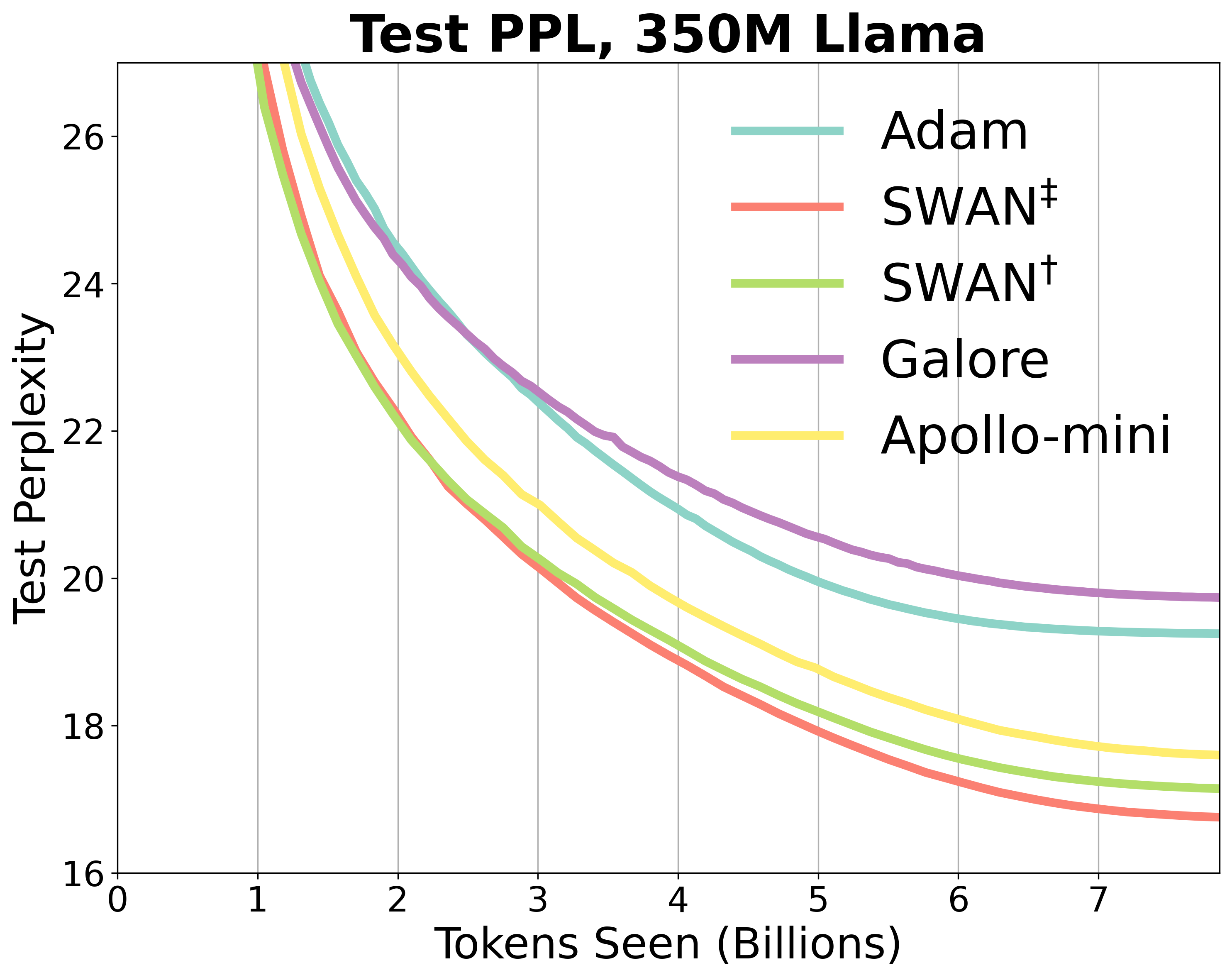} % Replace with your image
        \caption{350M}
    \end{subfigure}
    \hfill
    \begin{subfigure}[b]{0.45\textwidth}
        \centering \includegraphics[width=\textwidth]{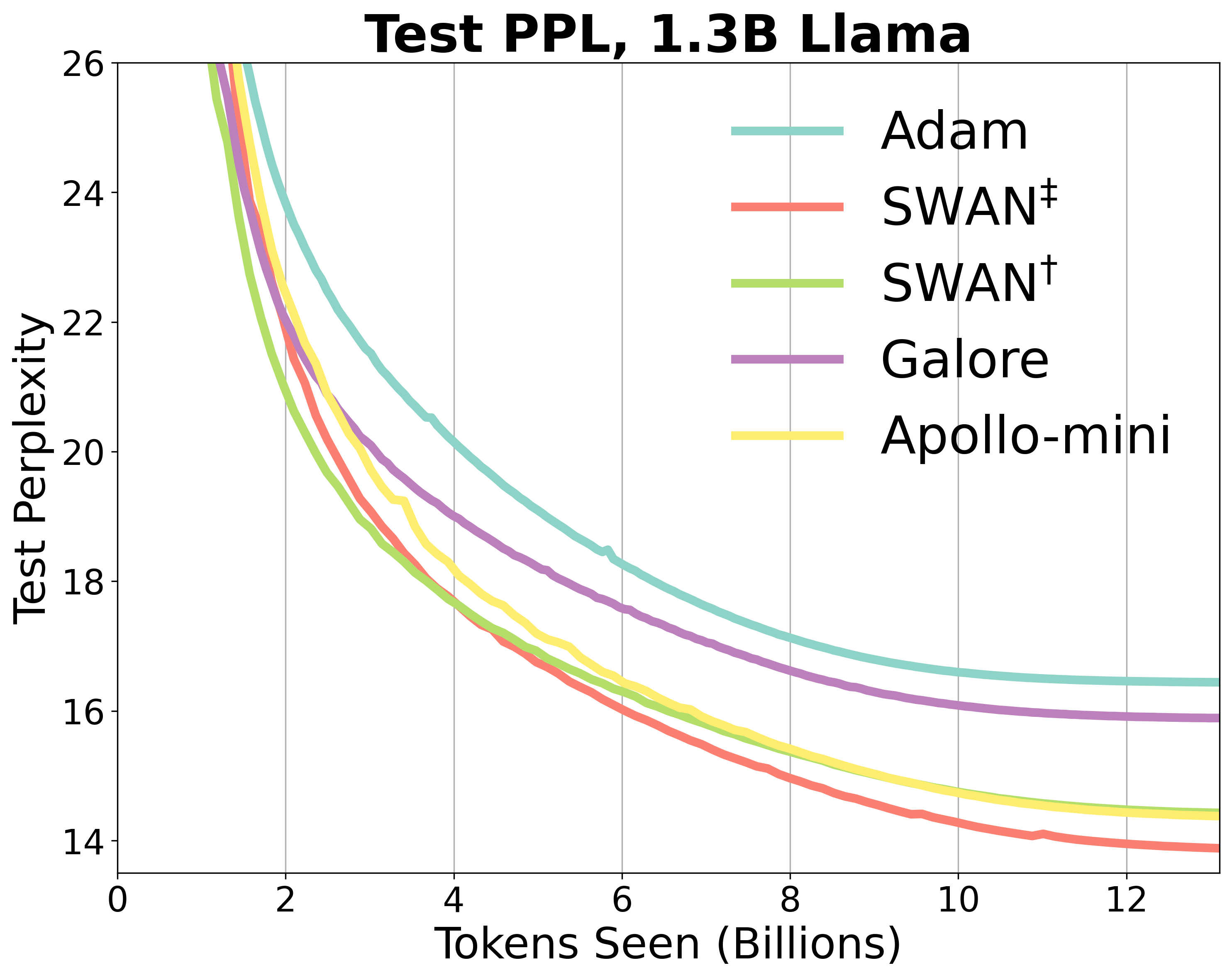} % Replace with your image
        \caption{1.3B }
    \end{subfigure}
    \caption{ Comparison of convergence rate of different methods on LLM pretraining tasks. The training curves of Adam, Galore and Apollo-mini are reproduced according to the opensource code of \cite{Zhao2024GaLoreML, zhu2024apollo}. We further compare to their official results in \Cref{tab: llm}.}
    \label{fig: llm}
\end{figure}

% \begin{figure}[t!]
%     \centering
%     \hfill
%     \begin{subfigure}[b]{0.45\textwidth}
%         \centering
%         \includegraphics[width=\textwidth]{figs/Swan_evaluation_ppl_comparison_60M_full.png} % Replace with your image
%         \caption{60M}
%     \end{subfigure}
%     \hfill
%     \begin{subfigure}[b]{0.45\textwidth}
%         \centering
%         \includegraphics[width=\textwidth]{figs/Swan_evaluation_ppl_comparison_130M_full.png} % Replace with your image
%         \caption{130M}
%     \end{subfigure}
%     \hfill
%     \vskip\baselineskip
%     \begin{subfigure}[b]{0.45\textwidth}
%         \centering
%         \includegraphics[width=\textwidth]{figs/Swan_evaluation_ppl_comparison_350M_full.png} % Replace with your image
%         \caption{350M}
%     \end{subfigure}
%     \hfill
%     \begin{subfigure}[b]{0.45\textwidth}
%         \centering \includegraphics[width=\textwidth]{figs/Swan_evaluation_ppl_comparison_1B_full.png} % Replace with your image
%         \caption{1.3B }
%     \end{subfigure}
%     \caption{Comparison of convergence rate of different methods on LLM pretraining tasks. The training curves of Adam, Galore and Apollo-mini are reproduced according to the opensource code of \cite{Zhao2024GaLoreML, zhu2024apollo}. We further compare to their official results in \Cref{tab: llm}.}
%     \label{fig: llm}
% \end{figure}

\paragraph{\name{} optimizer} We consider three variants: \textbf{\name{}-0}, which aims to show-case the robustness and effectiveness of our method out-of-the-box, with almost no tuning. It uses naive NS-iteration for whitening, disabled learning rate warmup, and use similar learning rates optimized for Adam. \textbf{\name{}}$^\dag$, the vanilla version of our method, in which we enabled learning rate warmup, and allowed the use of optimized learning rates that largely differ from Adam. Finally, \textbf{\name{}$^\ddag$}, the strongest and most efficient variant that employs the proposed NSDS scheme for fast whitening (\cref{sec: practical}). For all \name{} variants, we adopt a \emph{lazy-tuning approach} (hyperparameters are set without extensive search) to reduce the possibility of unfair performance distortion. Notably, for \name{}$^\ddag$, we share the same hyperparameters across all model sizes.  Full details can be found in \Cref{app: code}.

\paragraph{Baselines} We focus on comparing \name{} with \textbf{Adam} \citep{adam} and other memory-efficient optimizers. All baseline results are directly quoted from corresponding papers as they all share the same standard setup. The baselines include \textbf{Adam} \citep{adam}, which is a standard choice for training large models; and \textbf{Galore} \cite{Zhao2024GaLoreML}, a memory-efficient Adam variant with low-rank gradient projections. We also consider \textbf{Low-Rank} \citep{kamalakara2022exploring}, a low-rank factorization approach ($\rmW = \rmB\rmA$); and \textbf{LoRA} \cite{hu2021lora}, which applies the LoRA method for pre-training as in \cite{Zhao2024GaLoreML}. Additionally, we include \textbf{ReLoRA} \cite{Lialin2023ReLoRAHT}, a full-rank extension of LoRA with parameter merging, and \textbf{Momentum$+$$\gradwhite$}, which applies Newton-Schulz $\gradwhite$ on top of momentum instead of $\gradnorm$; this is equivalent to \textbf{Muon} \cite{jordan2024muon} with Nesterov acceleration turned off. Finally, we compare with \textbf{Apollo-mini} and \textbf{Apollo} \citep{zhu2024apollo}. Full details can be found in \Cref{app: code}.

% \paragraph{Hyperparameters} We use the publicly available hyperparameter configurations from \cite{Zhao2024GaLoreML} for all baselines\footnote{This does not include Adam, since the authors of \cite{Zhao2024GaLoreML} did not disclose their Adam configs. However, we used the same learning rate sweep procedure as described in their paper. This result in an optimal learning rate of 0.001 for 60M, 130M and 350M models, and 0.0007 for 1.3 B models.}, including method-specific learning rates selected through a grid search. For \name{}, we use a default setting without extensive tuning. We match the global learning rate (0.001) used by Adam, except for the 1.3B model (Adam diverges at 0.001, see \Cref{app: code}). We use 10 steps of Newton–Schulz iteration for $\gradwhite$. Unlike Adam, \name{} does not require a warm-up phase to ensure stable training. All other scheduling parameters remain the same as in the baselines. Full details can be found in \Cref{app: code}.

\paragraph{Fair Comparison} We follow \citet{kaddour2024no} by allocating the same total training budget (in terms of  tokens) for all methods. By the end of training, all methods decay to the same target learning rate, which is $10\%$ of the initial learning rates. This approach eliminates an often over-looked evaluation bias, where optimizers with earlier learning rate cool-downs tend to have an unfair advantage due to artificially fast convergence caused by learning rate decay \cite{kaddour2024no}.

\paragraph{Results} As shown in \Cref{tab: llm} and \Cref{fig: preview}, all \name{} variants achieve strong performance compared with baselines, requiring the lowest memory consumptions comparable to SGD. Across all models, \name{}-0 surpasses the performance of the Adam and Momentum-$\gradwhite$ (muon-like) baseline in terms of validation perplexities. \name{}$^\dag$ further delivers a comparable performance to Apollo series; and finally, \name{}$^\ddag$ with NSDS scheme delivers the strongest, SOTA-level performance under this setup. Notably, on the 350M and 1.3B models, all \name{} variants reaches at least 2$\times$ speedup (in steps or tokens) relative to Adam (\Cref{fig: preview} and \Cref{fig: speedup_scaling} (a)). Finally, we observe that in general, \name{}$^\ddag$ has a slower convergence speed in the early stage than \name{}$^\dag$. However, it offers strong tail convergence in return and surpasses \name{}$^\dag$ in terms of final performance.

\subsection{Memory Efficiency and Throughput Analysis} \label{sec: memory_exp}

% In this section, we measure both the memory footprint and throughput of \name{}. 

\begin{table}[]
\centering
\caption{Raw and effective throughput analysis, under different model parallelization (MP) settings. }
\label{tab: throughput}
\begin{tabular}{l|l}
\hline
      Method   & Raw / eff. throughput (1B) \\ \hline
Adam & 117872 / 117872 (tokens/s)          \\
\midrule
\name{}$^\dag$ w/o MP                     &     $\:$$\:$58600 / 117200 (tokens/s)       \\
\name{}$^\ddag$ w/o MP     & \textbf{107808} / \textbf{258739}  (tokens/s)    \\
\midrule
\name{}$^\dag$ w/ MP              & 114160 / 228320 (tokens/s)         \\
\name{}$^\ddag$ w/ MP     & \textbf{115872} / \textbf{278092}  (tokens/s)    \\ \hline
\end{tabular}
\end{table}

\paragraph{Memory Footprint} We compare \name{}, Adam, and Galore on a single A100 GPU. Unlike \cite{Zhao2024GaLoreML, zhu2024apollo}, which report layer-wise training memory usage, we measure total end-to-end memory consumption under full-model training using a batch-size of $1$ for 1.3B, 7B, and 13B models. As shown in \Cref{fig: preview} (c), \name{}’s memory usage is on par with SGD, providing nearly a 50\% reduction in total memory. If we further incorporate the per-layer training technique of  \citep{Zhao2024GaLoreML, lv2023full}, \name{} further achieves $\approx 70\%$ total memory reduction on top of $\approx 100\%$ reduction on optimizer states. This underlines the benefit of the stateless design.

% \begin{table}[]
% \centering
% \caption{Raw and effective throughput analysis, under different model parallelization (MP) settings. }
% \label{tab: throughput}
% \begin{tabular}{l|l}
% \hline
%       Method   & Raw / eff. throughput (1B) \\ \hline
% Adam & 53047 / 53047 (tokens/s)          \\
% \name{}-0 w/ MP                     &    65864 / 131728 (tokens/s)       \\
% \name{}-0 w/o MP              & 11336 / 22672 (tokens/s)         \\
% \name{}$^\ddag$ w/o MP     & \textbf{53624} / \textbf{128697}  (tokens/s)         \\ \hline
% \end{tabular}
% \end{table}

\paragraph{Effective Throughput} We assess throughput when training a 1.3B LLama model on 8 A100 GPUs with a batch size of 130K. All gradient processing are done in BF16. We use two metrics: \emph{raw throughput}: number of tokens processed per second. \emph{Effective throughput}: raw throughput adjusted by \name{}'s token efficiency relative to Adam. These metrics evaluate the impact of different $\gradwhite$ schemes on training speed, and also account for the fact that some optimizers make more effective use of training tokens. As shown in \Cref{tab: throughput}, \name{}$^\dag$ with naive NS $\gradwhite$ requires model parallelization (where NS of different tensors are distributed to different GPUs) to achieve competitive throughput. Without model parallelization, the throughput of \name{}$^\dag$ is 50 \% lower than Adam. With the proposed NSDS scheme, \name{}$^\ddag$ achieves an raw throughput comparable to Adam, \emph{without} any need for model parallelization. Consequently, \name{}$^\ddag$ exhibits a 2 $\times$ higher effective throughput than Adam.

% All experiments use BF16 precision for $\gradwhite$.

\subsection{Is the Speed-up Multiplicative or Additive?}\label{sec: speedup}

A key question regarding speedup factors is whether the improvement over Adam is \textit{multiplicative} or \textit{additive}. A multiplicative speedup implies that the optimizer’s relative advantage remains proportionally consistent over time, while an additive speedup suggests a less desired constant step advantage. To investigate this,  we take the \name{}-0 setup as an example, and address this question using two plots, the \textit{speed-up ratio comparison} and the \textit{perplexity comparison} (\Cref{fig: speedup_scaling}), across model sizes.

\begin{figure}[t!]
    \centering
    \begin{subfigure}[b]{0.49\textwidth}
        \centering
        \includegraphics[width=\textwidth]{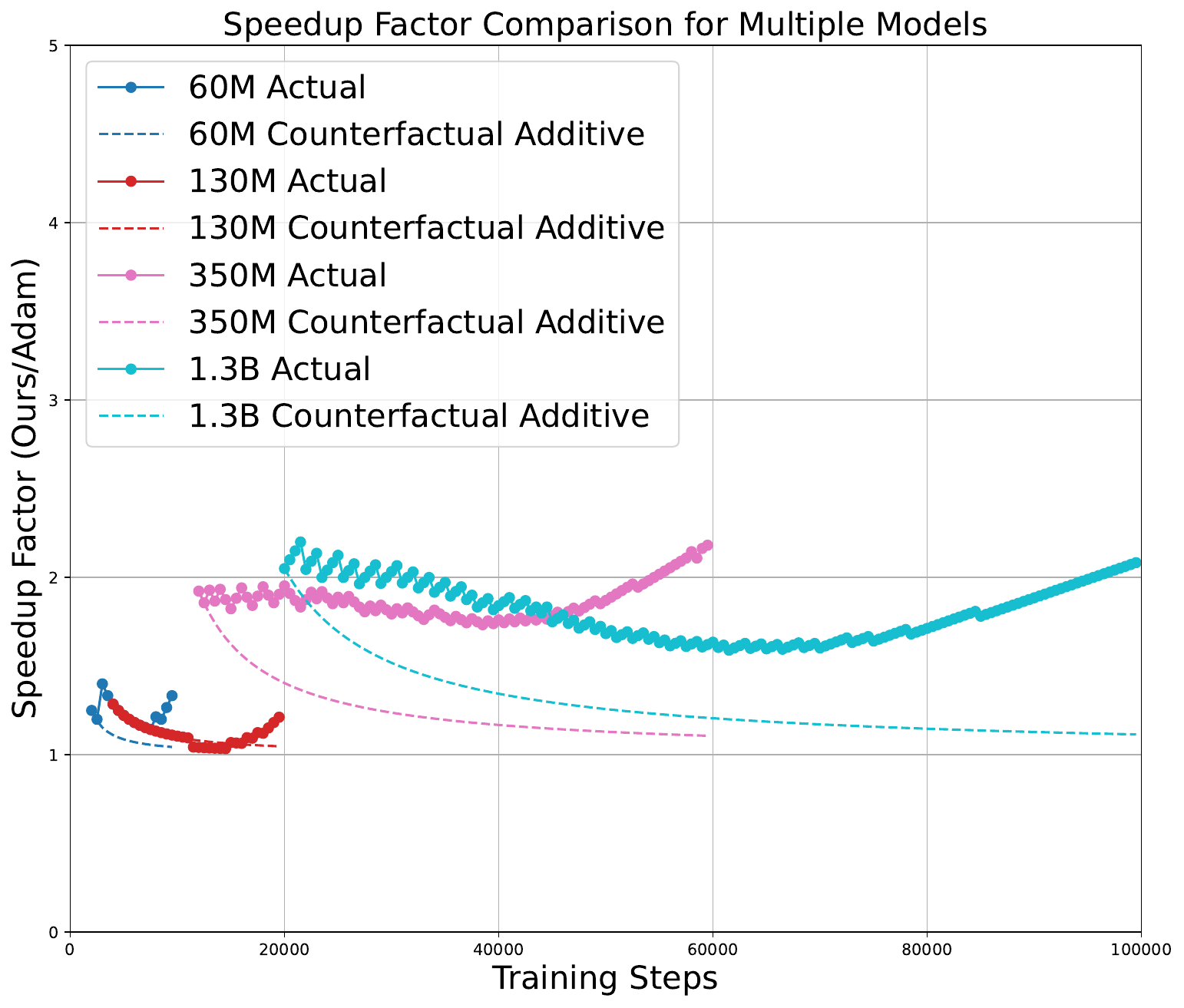} % Replace with your image
        \caption{Speedup factor of \name{} vs Adam training steps}
    \end{subfigure}
    \hfill
    \begin{subfigure}[b]{0.49\textwidth}
        \centering
        \includegraphics[width=\textwidth]{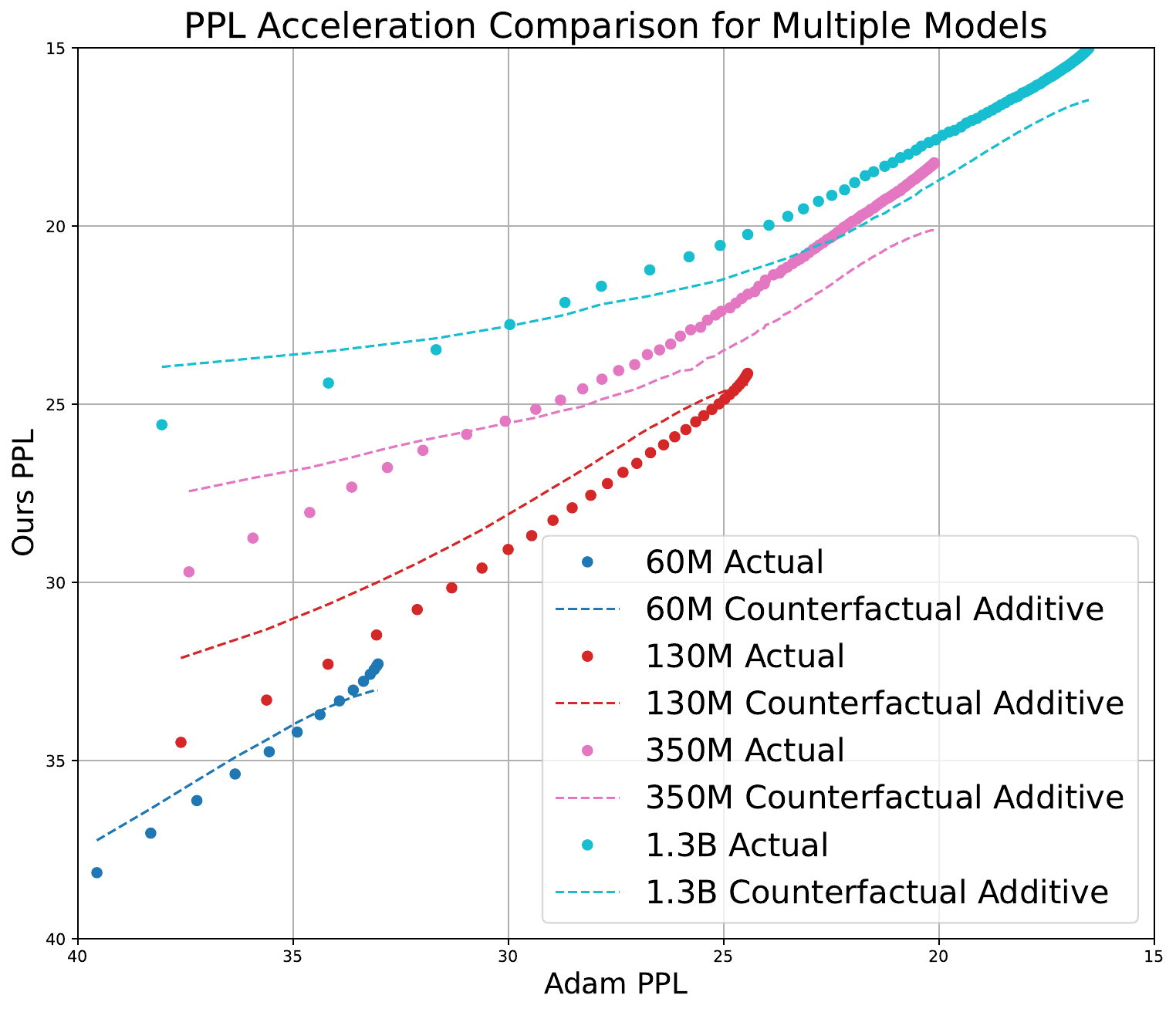} % Replace with your image
        \caption{Ours PPL vs Adam PPL scatter plots}
    \end{subfigure}
    \caption{Comparative analysis of \name{} and Adam optimizers: speedup ratios and perplexity metrics across various model sizes. \textbf{(a)} shows how \name{} reduces the number of training steps needed to achieve the same evaluation perplexity as Adam for models ranging from 60M to 1.3B parameters. A speedup ratio greater than one indicates that \name{} reaches target PPL values faster than Adam. \textbf{(b)} presents a direct comparison of perplexity scores between \name{} and Adam. In both plots, we also provide counterfactual additive curves (dashed lines) modeling baselines corresponding to constant step advantages. Together, these plots highlight the nature of \name{}'s speedup over Adam across different model scales.   }
    \label{fig: speedup_scaling}
\end{figure}

\paragraph{Speedup Ratio Definition}
We define the \textit{speedup ratio} $R(P)$ for a given perplexity ($\text{PPL}$) threshold $P$ as the ratio of the number of training steps Adam requires to reach a specific evaluation perplexity ($\text{PPL}$) to the number of required steps for \name{}:
\begin{equation}
    R(P) = \frac{S_{\text{Adam}}(P)}{S_{\text{\name{}}}(P)},
\end{equation}
where $S_{\text{Adam}}(P)$/$S_{\text{\name{}}}(P)$ are the training steps required by Adam/\name{} to reach perplexity $P$.

\paragraph{Counterfactual Additive Curve Estimation}
To test whether the speedup is additive, we compute a \textit{counterfactual additive curve} by assuming \name{} gains a fixed step advantage $\Delta$ over Adam in early training (approximately the first 10\%--20\% of total steps):
\begin{equation}
    \Delta = \frac{1}{N} \sum_{i=1}^{N} \left(S_{\text{Adam}}(P_i) - S_{\text{\name{}}}(P_i)\right),
\end{equation}
where $N$ is the number of $\text{PPL}$ thresholds considered. We then use $\Delta$ to define the counterfactual additive speedup ratio:
\begin{equation}
    R_{\text{additive}}(P) = \frac{S_{\text{Adam}}(P)}{S_{\text{Adam}}(P) - \Delta},
\end{equation}
and the counterfactual additive perplexity estimate:
\begin{equation}
    \text{PPL}_{\text{additive}}(S) = \text{PPL}_{\text{Adam}}\left(S + \Delta\right).
\end{equation}
This represents the expected perplexity of \name{} if it only outpaces Adam by fixed $\Delta$ steps.

\paragraph{Results}
\Cref{fig: speedup_scaling} compares \name{}’s actual performance against the counterfactual additive curves. If \name{}’s actual curves exceed these additive estimates, it indicates a tendency towards a multiplicative speedup, instead of the additive advantage. We summarize the observations across model sizes as: 1) for smaller models (60M and 130M), the actual speedup trajectories align closely with the additive baseline, indicating a primarily additive speedup; 2) for larger models (350M and 1.3B), the actual curves rise noticeably above the additive estimates, suggesting a multiplicative speedup. This indicates that \name{} yields increasing efficiency gains as model size grows.

\subsection{Ablation Studies} \label{sec: ablation}

We take \name{}-0 from the \name{} series as an example and conduct the following ablation studies. All \name{} optimizer mentioned in this section specifically refers to the \name{}-0 setting.

\paragraph{Effect of $\gradnorm$ and $\gradwhite$ on Performance}
We consider six ablation settings: (1) \name{}(full), (2) \name{} ($\gradnorm$ only), (3) \name{} ($\gradwhite$ only), (4) Adam (full), (5) Adam (momentum only), and (6) Adam (second moment only). As shown in \Cref{fig: ablation} (a), both $\gradnorm$ and $\gradwhite$ contribute to \name{}’s final performance. Removing either results in performance degradation. Similarly, Adam also requires all moments for optimal performance.

\begin{figure}[H]
    \centering
    \begin{subfigure}[b]{0.325\textwidth}
        \centering
        \includegraphics[width=\textwidth]{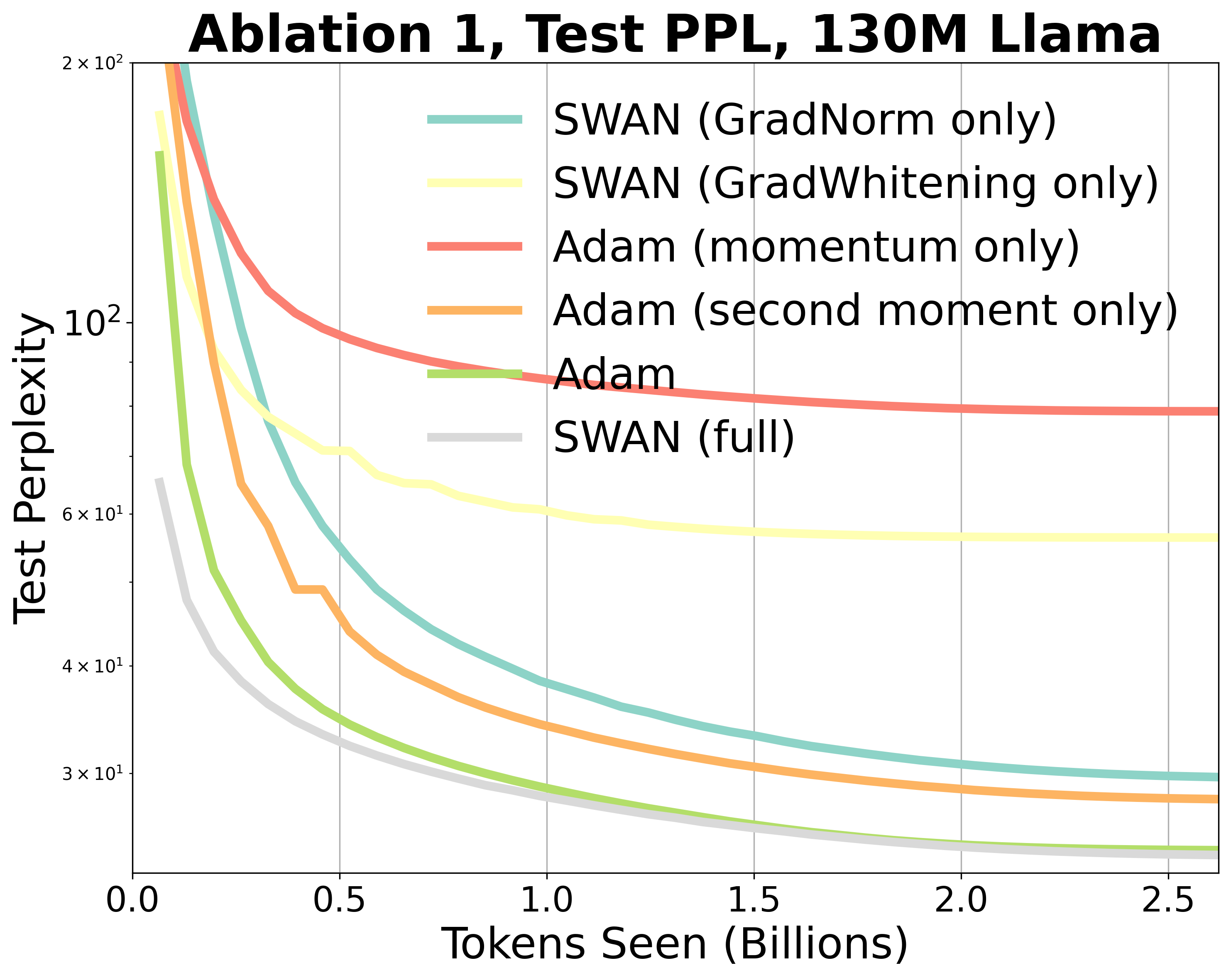} 
        \caption{}
        % Replace with your image
    \end{subfigure}
    \hfill
    \begin{subfigure}[b]{0.325\textwidth}
        \centering
        \includegraphics[width=\textwidth]{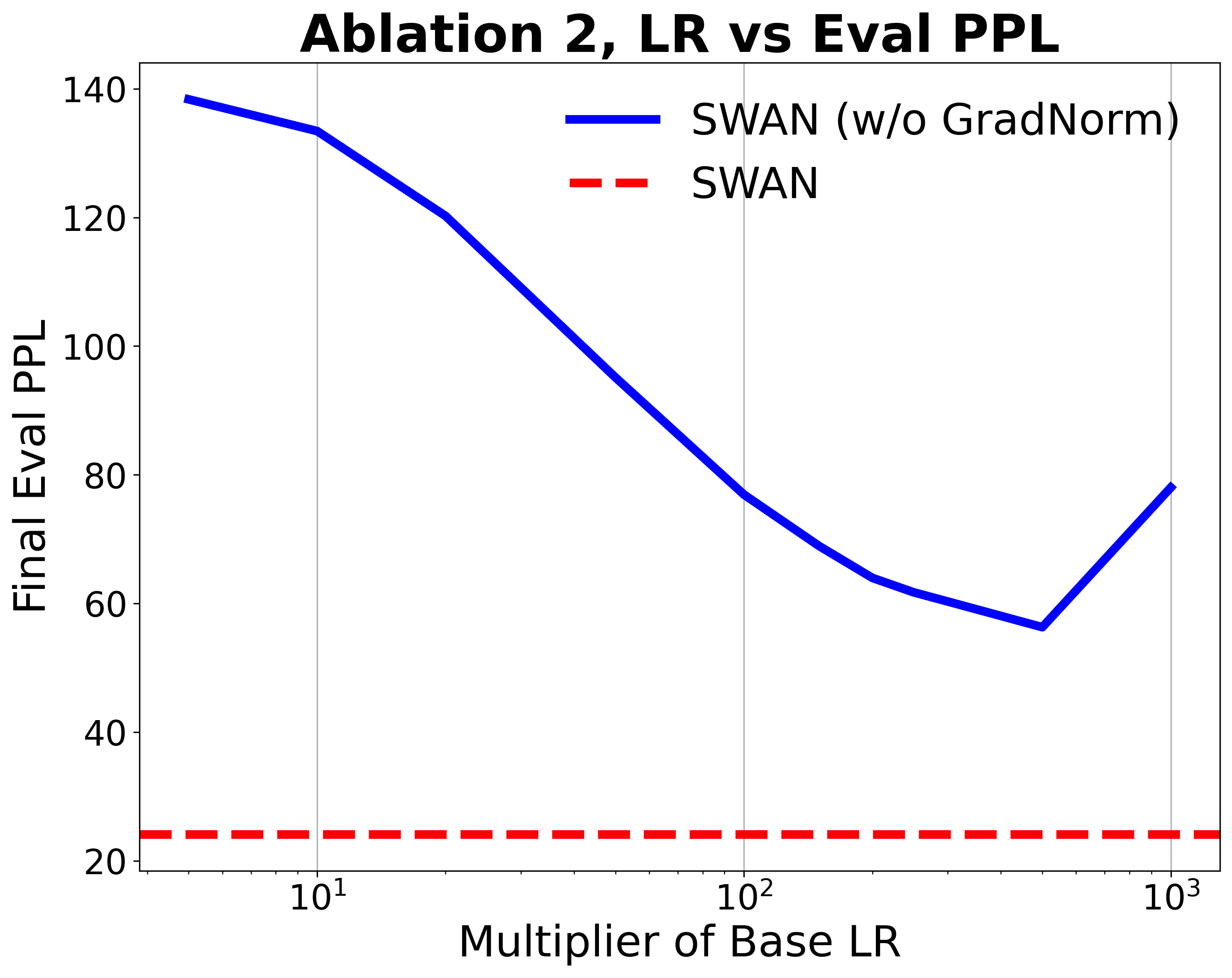} % Replace with your image
        \caption{}
    \end{subfigure}
    \hfill
    \begin{subfigure}[b]{0.325\textwidth}
        \centering
        \includegraphics[width=\textwidth]{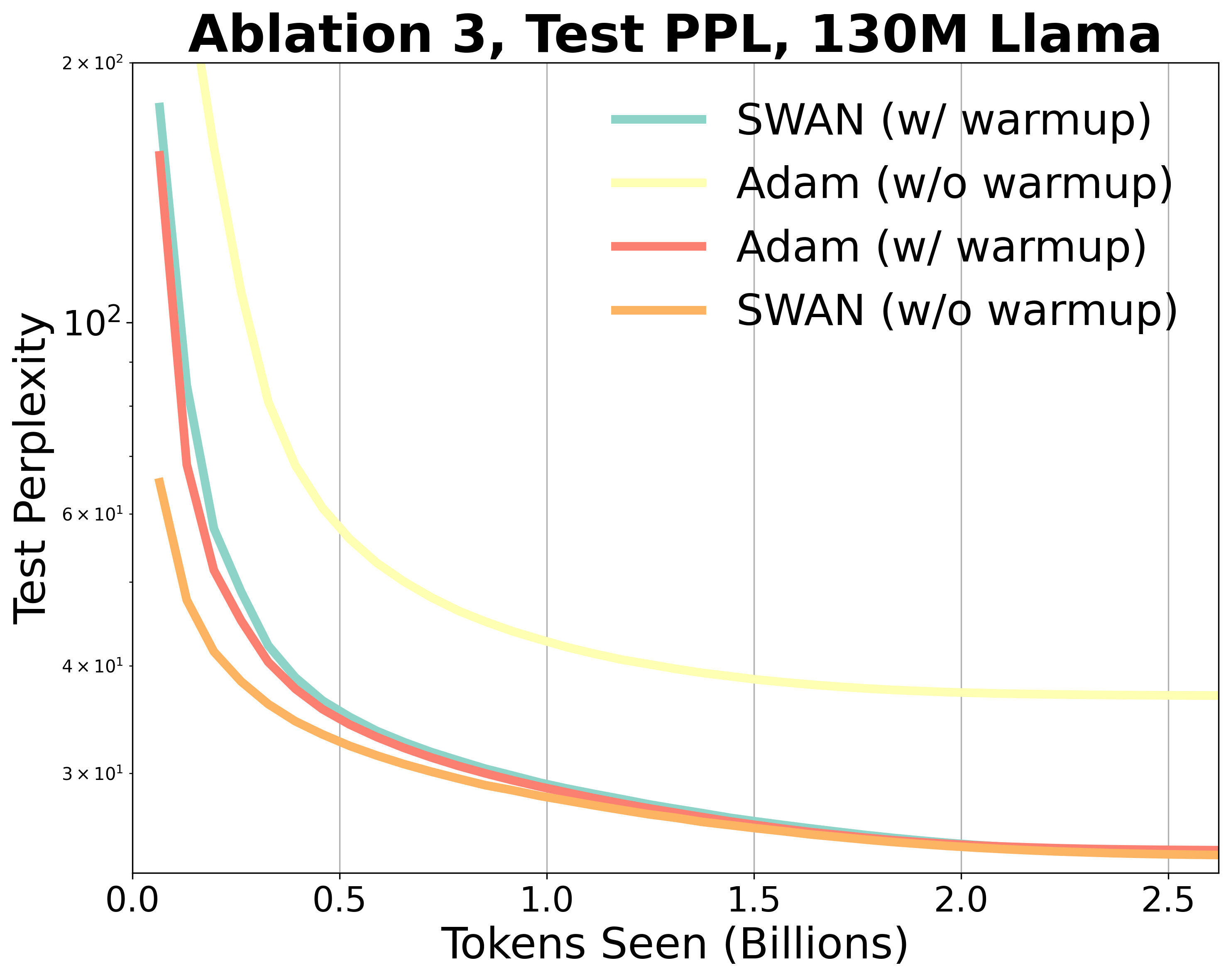} % Replace with your image
        \caption{}
    \end{subfigure}
    \caption{ Ablation studies on 130M model. (a) Ablation on the contribution of each components in \name{} and Adam. (b) Ablation on removing $\gradnorm$ and compensate with larger learning rates. (c) Ablation on the effect of learning rate warm-ups. }
    \label{fig: ablation}
\end{figure}

\paragraph{Does \name{} Succeed Only by Increasing Effective Learning Rates though $\gradnorm$?}
To answer this question, we remove the $\gradnorm$ operator from \name{} and run a learning rate sweep. Starting with the default learning rate of full \name{}, we apply multipliers from $1$ to $10^3$. In \Cref{fig: ablation} (b), the blue line shows the final validation perplexity for \name{} without $\gradnorm$ at different learning rates multipliers, while the dashed red line represents the performance of full \name{} at the default learning rate. Although raising the learning rate can improve the performance of \name{} without $\gradnorm$, the gap to full \name{} remains large. 
% Together with results in \Cref{sec: gradnorm_exp}, 
This indicates that $\gradnorm$’s gradient noise stabilization is essential and cannot be replaced simply by increasing the learning rate.

\paragraph{How does warm-up affect the performance?}
\Cref{sec: llm_exp} shows that \name{} can train with no warm-up phase even under a relatively large learning rate (0.001). Here, we compare Adam and \name{} with and without warm-ups. As seen in \Cref{fig: ablation} (c), \name{} without the warm-up phase gives better performance, and it still outperforms Adam under Adam’s own warm-up schedule. On the other hand, Adam's performance decreases drastically without a proper warm-up. These findings suggest that \name{} is more robust to warm-up schedule and can train effectively with or without it.

\section{Open Questions and Extensions}
In \Cref{sec: analysis} we analyzed the design choice of $\gradnorm$ and $\gradwhite$ from the perspective of LLM dynamics. However, the \emph{compositional effect} of these two operators is still an open question, which is not discussed in this work. Specifically, in \Cref{app: NS} we empirically show that when comparing with standard, standalone NS processing of the gradients, both $\gradwhite$ and (NSDS)-$\gradwhite$ of \name{} introduces significant changes, rotating the update by a relatively large magnitude. This indicates that additional efforts are needed to further explain the novel dynamics of \name{}, as well as the question of ``what makes a good preprocessing chain for gradients''.  

After the v1 version of this paper on arXiv, we posted a follow-up work of \name{} \citep{scetbon2025gradientmultinormalizationstatelessscalable}, in which we have shown that \name{} is a special case of a general framework called multi-normalized gradient descent (MNGD). MNGD aims at normalizing gradients according to multiple norms, generalizing the steepest descent viewpoint of~\cite{bernstein2024old} that recasts popular first-order optimizers as normalization of gradients under a single norm.
  
\section{Conclusion}  
We introduced \name{}, a stateless optimizer for LLM training that combines two well-known operators applied to raw gradients, $\gradnorm$ and $\gradwhite$, to stabilize the stochasticity of gradient distribution and neutralizing the local geometry of loss landscape, respectively. Through theoretical analysis and empirical evidence, we showed that \name{} reduces memory usage while achieving on-par or even better performance compared to Adam on LLaMA pre-training tasks. Notably, \name{} achieves $2\times$ speedups compared to Adam in terms of tokens processed when training 350M- and 1.3B-parameter models. These findings serve as a proof-of-concept that the stateless approach has the potential to serve as a practical and efficient alternative to other optimizers that require tracking internal states. Future work may explore other design choices for stateless optimizers and further expand \name{}'s applicability to other complex training regimes beyond standard LLM pre-training.

% \begin{itemize}
%     \item Low-rank memory efficient methods (Galore)
%     \item Batch-norm/CV literature that uses similar operators, but applied to batch activations
%     \item Non-adaptive EMA-free methods (mainly signed SGD, Signum, Lion)
% \end{itemize}

\clearpage
\newpage
\bibliography{bibliography}
\bibliographystyle{iclr2025_conference}

\newpage

\appendix

\setlength{\cftbeforesecskip}{1em}
\setlength{\cftbeforesubsecskip}{0.5em}
\setlength{\cftbeforesubsubsecskip}{0.5em}

% \tableofcontents
% \setcounter{theorem}{0}
% \setcounter{definition}{0}
% \setcounter{proposition}{0}

\clearpage

\section{Desired properties of adaptive optimizers}
\label{app: desired property}
There is a rich literature on understanding adaptive methods' inner workings and unreasonable effectiveness. Using Adam as an example, we first summarize from the literature below the key desired properties of stateful adaptive optimizers that contribute to their empirical success: \emph{gradient smoothing}, \emph{gradient invariance}, and \emph{gradient whitening}. Then we discuss how these understandings will leads to the design of \emph{stateless} adaptive optimizers.

\paragraph{Gradient Smoothing.} Under the stochastic optimization setting, mini-batch sampling introduces heterogeneous distribution shift on the gradient distribution: $\rmG^{(t)} =  \E[\rmG^{(t)}] + \mepsilon^{(t)}$, where $\mepsilon^{(t)}$ is time-heterogeneous noise induced by mini-batch sampling. 
While $\mepsilon^{(t)}$ helps SGD escapes local optima \citep{jastrzkebski2017three, zhu2018anisotropic}, 
the \emph{covariate shift} of $\mepsilon^{(t)}$ over time also present challenges to learning as the model needs to adjust and compensate for this shift, especially under the emergence of heavy tailed gradient distributions \citep{zhang2020adaptive} \footnote{Such shift cannot be removed by forward covariate-shift reduction architectures such Layer Norm, as it is only invariant to global scaling and re-centering, such as $\rmW^{(t)} = \delta \rmW^{(t)} + \bm{\gamma} \mathbf{1}^\top$ for some scalar $\delta$ and incoming vector shift $\bm{\gamma}$ \citep{Ba2016LayerN}.}. 
%it cannot be removed by forward covariate-shift reduction architectures such Layer Norm, as it is only invariant to matrix-level scaling and re-centering, such as $\rmW^{(t)} = \delta \rmW^{(t)} + \bm{\gamma} \mathbf{1}^\top$ for some scalar $\delta$ and incoming vector shift $\bm{\gamma}$ \citep{Ba2016LayerN}. 
%
Following this viewpoint, it has been proven that momentum reduces the influence of noises for SGD \citep{cutkosky2020momentum, crawshaw2022robustness}. Therefore we hypothesis that the first moment estimate $\vm^{(t)}$ of Adam also effectively stabilizes gradient distribution and reduces effect of $\mepsilon^{(t)}$. This smoothing stabilizes the variance caused by noisy stochastic gradients across time.

\paragraph{Gradient Invariance.} More recently it has also been identified \citep{kunstner2023noise, kunstner2024heavy} that the major factor contributing to the performance gap between SGD and Adam might lie in Adam's \emph{Sign-descent}-like nature \citep{bernstein2018signsgd, crawshaw2022robustness, lion}. Intuitively, Adam without bias correction under $\beta_1 = 0$ and $\beta_2 = 0$ is equivalent to signed gradient descent ($\Delta \rmW = \text{sign} (\rmG)$). Indeed, the performance of Adam can be closely reproduced \citep{kunstner2023noise, crawshaw2022robustness} or even surpassed \citep{lion} by variants of signed descent with momentum. Apart from sign-based methods, evidence on performance boost using gradient clipping/normalization was also discussed in the context of understanding Adam \citep{zhang2020adaptive}. Therefore, we hypothesize that one of the key properties of Adam is that it offers \emph{invariance over certain transformations} on gradients. Particularly, the original Adam is invariant to diagonal rescaling of the gradients \citep{adam}; the signed gradient method is invariant to \emph{any} scaling that preserves the sign of gradients; and the clipped SGD variant is invariant to extreme gradient magnitude spikes. 

\paragraph{Gradient Whitening.} Finally, we argue that the empirical success of adaptive methods also lies in that they model the curvature by first-order information. This is realized by the second moment estimate $\vnu^{(t)}$, which approximates the diagonal of the Fisher information matrix \citep{adam, hwang2024fadam}; helping to counteract local curvatures of the problem. Specifically, Adam computes a trailing estimation of the diagonal coefficients of the Fisher matrix $\rmF = \E[\vg \vg^\top ]$ by tracking $\hat{\rmF} =\text{diag}(\rmF) =\text{diag}[\E[\vg^2]]$, where $\vg = \mathtt{vec}(\rmG)$ is the vectorized gradient. Interestingly, instead of preconditioning the first moment as $ \hat{\rmF}^{-1} \mathtt{vec} (\vm)$, Adam uses a whitening-like preconditioned update $\hat{\rmF}^{-\frac{1}{2}} \mathtt{vec} (\vm)$, suggesting an \emph{element-wise} approximate whitening of the gradient. 
%
%It has been shown that this whitened update $\hat{\rmF}^{-\frac{1}{2}}\mathtt{vec} (\vm)$ provides a better approximation to the geodesic flow, compared with the default natural gradient update \citep{yang2008principal}.
%
It has been shown that such element-wise whitening leads to diagonal approximation to inverse Hessian $\hat{\rmF}^{-\frac{1}{2}} \approx \text{diag}(\rmH^{-1})$ \citep{molybog2023theory}. Recent empirical studies show that Adam biases optimization trajectories towards regions where the condition number of Hessian is low \citep{jiang2024does}. Therefore, we hypothesize that Adam approximately whitens the gradients element-wise, leading to well-conditioned regions.

\section{Acceleration of Newton-Schulz iteration via diagonal substitution} \label{app: NS}

\subsection{Algorithnms}

Computing $\gradwhite$ exactly can be expensive, as it involves solving the matrix square-root inverse. One option is to directly apply the Newton-Schulz variant of decorrelated batch normalization \citep{song2022fast, li2018towards, huang2019iterative}, which allows a more GPU-friendly estimation. This is given by \citep{song2022fast, li2018towards}:
\begin{align*}
  \begin{cases}
   \rmY_{k+1} = \frac{1}{2} \rmY_{k} (3 \rmI - \rmZ_k \rmY_k) \\
   \rmZ_{k+1} = \frac{1}{2} (3 \rmI - \rmZ_k \rmY_k) \rmZ_k
  \end{cases}
\end{align*}
where $\rmY_0 = \rmG \rmG^\top $ 
, $\rmZ_0 = \rmI$. At convergence, $\mathtt{GradWhitening}(\rmG) = \rmZ \rmG$ (\Cref{alg:optimizer2}). However, estimating $(\rmG\rmG^\top)^{-1/2}$ with NS requires $\mathcal{O}(m^3)$ (assuming $m < n$) complexity. Here, we propose a heuristic scheme that has $\mathcal{O}(m^2)$ complexity to estimate square-root inverse:
\begin{align*}
  \begin{cases}
   \rmY_{k+1} = \frac{1}{2} \rmY_{k} \text{Diag}(3 \rmI - \rmZ_k \text{Diag}(\rmY_k))) \\
   \rmZ_{k+1} = \frac{1}{2} (3 \rmI - \text{Diag}(\rmZ_k) \rmY_k) \text{Diag}(\rmZ_k) 
  \end{cases}
\end{align*}
where $\text{Diag}(\cdot)$ returns a diagonal matrix that has the same diagonal elements as the input matrix. Basically, whenever we encounter matrix multiplication in NS iterations, we replace one of them by its diagonal approximation. We refer to this as the \emph{NS with diagonal substitution} (NSDS) scheme. 

Note that in the above standard presentation we have fixed that both NS and NSDS uses coefficients $= 0.5$ for $\rmY$ updates and $\rmZ$ updates, respectively. In practice we may further tune these coefficients to compensate for short number of iterations (usually under 10).

\subsection{Experiment: LLM Gradient Condition Number Reduction}

\paragraph{Setup} In this synthetic experiment, we assessed the effectiveness of two whitening methods, Newton-Schulz and the proposed Newton-Schulz with diagonal substitution (NSDS) scheme, on gradient matrices obtained from LLM training. As discussed in \Cref{sec: update_rules}, the exact $\gradwhite$ operator results in matrices with optimal condition number ($=1$); therefore, we hereby investigate the matrix condition numbers of the processed gradients obtained from different methods. Specifically, both methods use 5 NS iterations with NS step size optimized. We train a 130M LLama model following the architecture setting of \Cref{sec: llm_exp} on randomly generated sequences for 1000 steps, and take the MLP weights of a middle layer (we take the fifth layer without loss of generality) and use different methods to whiten the gradient matrices. We consider three methods: standard NS; NSDS; and \name{} with NSDS (that is, composing $\gradnorm$ with NSDS-$\gradwhite$).  At each training step, the condition number reduction ratio of different method was calculated for both whitening methods (the higher the better). Note that for all methods, the gradients have been pre-normalized by its norm.

\paragraph{Results} Results are shown in \Cref{fig: condition_number} \textbf{(a)}. We notice that NSDS alone (orange curve) is not sufficient to reach a good condition number reduction ratio. However, when combined with $\gradnorm$ (i.e., SWAN with NSDS, the green curve), its performance started to catch up and even outperform the standard NS method after 500 training steps. This show the effectiveness of the proposed scheme. One potential caveat that we spot is that the condition number produced by SWAN with NSDS is more noisy than the standard NS iteration; which might lead to improvements that will be addressed in future work. 

\paragraph{The significance of $\gradnorm$ and NSDS} The results above highlight the importance of $\gradnorm$. A potential question is whether $\gradnorm$, when followed by $\gradwhite$, is merely a no-op that rescales the initial location of the NS iteration for better convergence. In fact, for all methods considered in \Cref{fig: condition_number}, the gradients have been pre-normalized by their (global) norm before being fed into each method. This re-scaling applied to all methods provides a negative answer to the question. To clarify further, we estimate the cosine similarities between the processed gradients produced by different pairs of methods. The results, shown in \Cref{fig: condition_number} \textbf{(b)}, reveal the following:  As the training iterations increase, the cosine similarity score between SWAN-NSDS and NS (denoted as $\text{cos}(\text{SWAN-NSDS}, \text{NS})$) monotonically decreases to small values, indicating a near-orthogonal relationship. Both $\text{cos}(\text{SWAN}, \text{NS})$ and $\text{cos}(\text{SWAN}, \text{SWAN-NSDS})$ decrease over time, suggesting that both $\gradnorm$ and NSDS contribute to the orthogonality of $\text{cos}(\text{SWAN-NSDS}, \text{NS})$. This demonstrates that the changes introduced by both $\gradwhite$ and NSDS-$\gradwhite$ are significant, rotating the update by a relatively large magnitude. This might also explain the observation in \Cref{sec: llm_exp} that SWAN with NSDS behaves differently from other variants, showing slower early convergence but stronger long-term convergence.

\begin{figure}[h]
    \begin{subfigure}[b]{0.46\textwidth}
    \centering
    \includegraphics[width=0.99\linewidth]{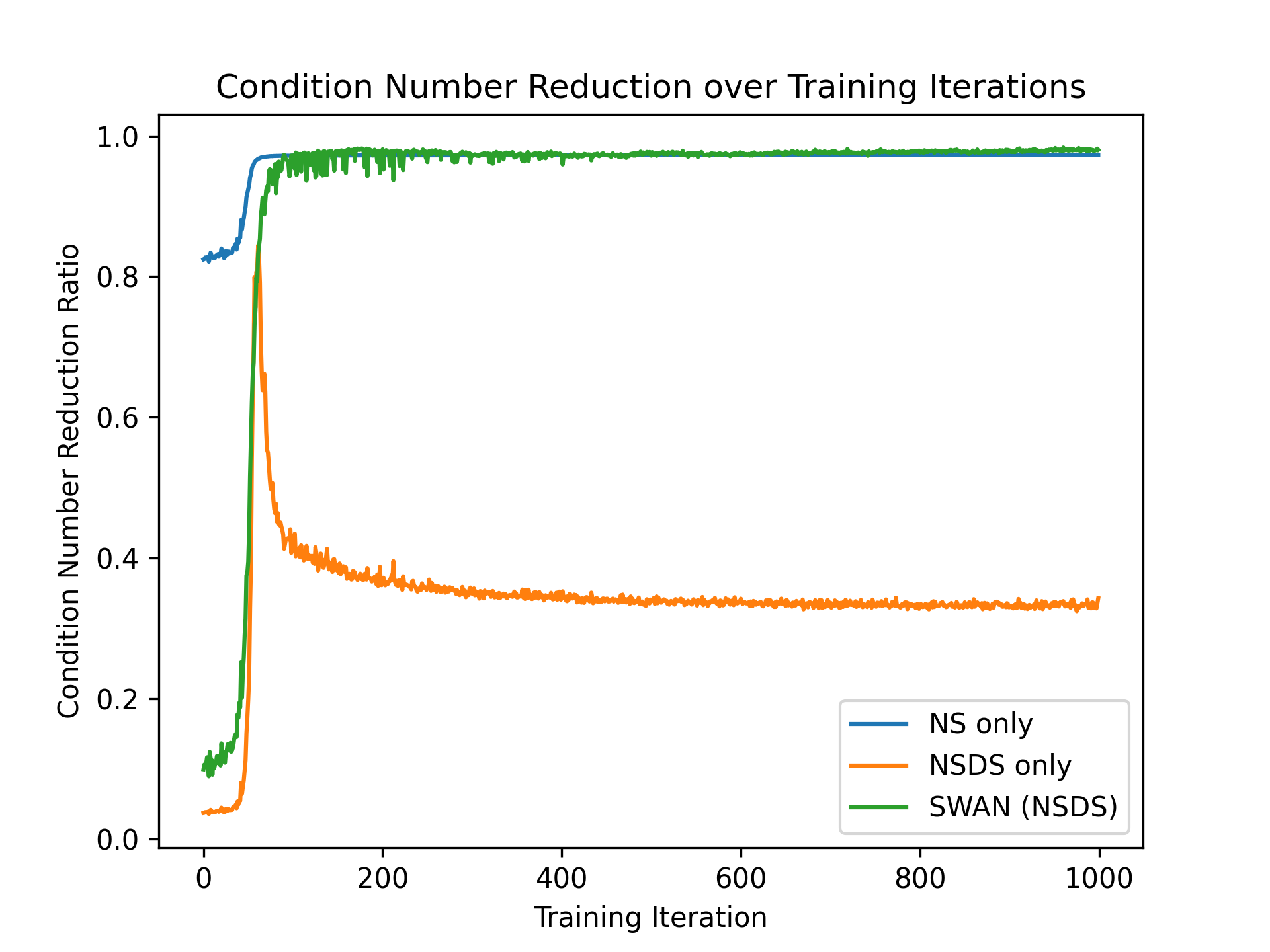}
    \caption{}
    \end{subfigure}
    \hfill
     \begin{subfigure}[b]{0.46\textwidth}
    \centering
    \includegraphics[width=0.99\linewidth]{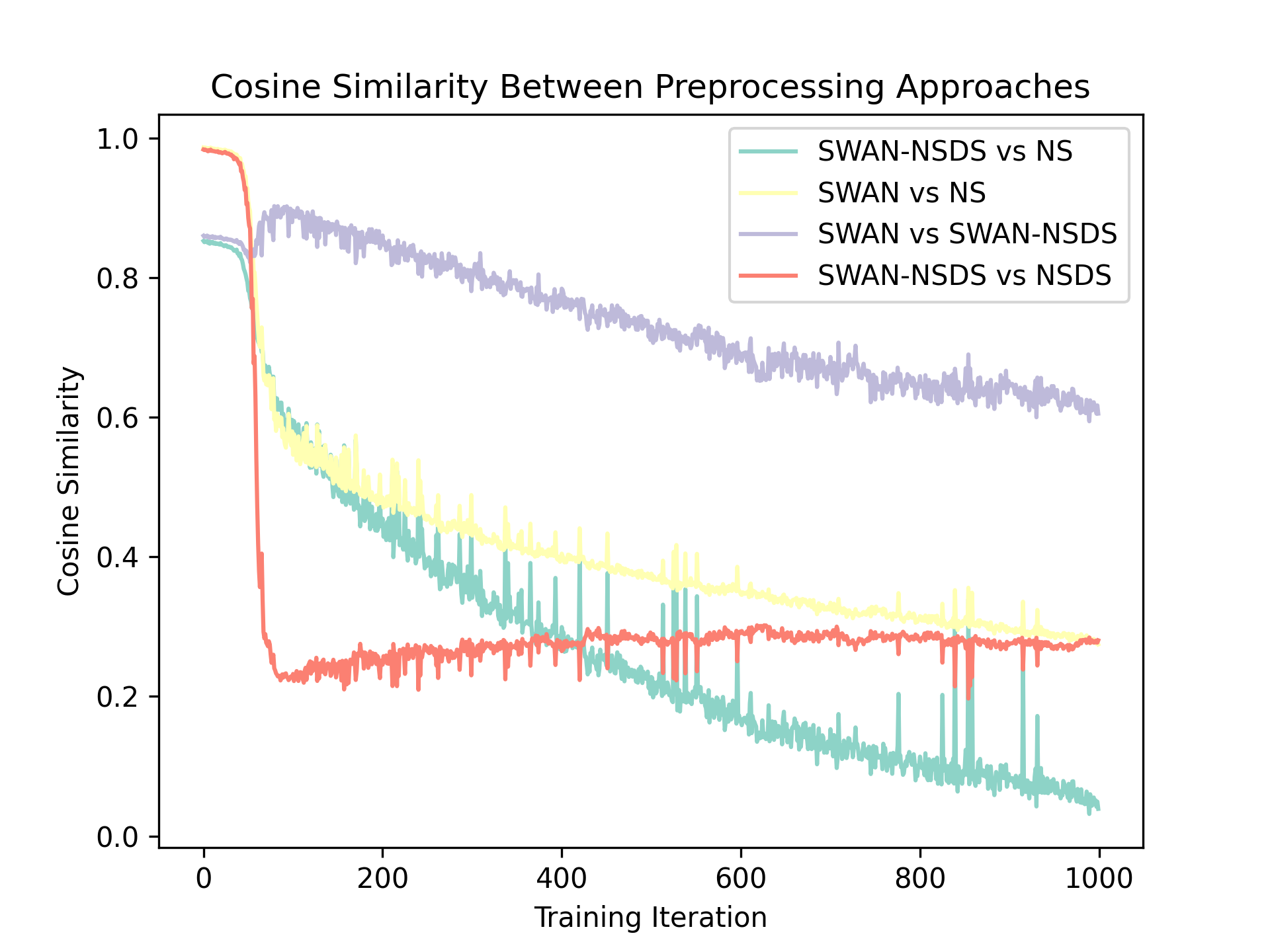}
    \caption{}
    \end{subfigure}
    \caption{Comparison between different whitening schemes. (a) Performance comparison by condition number reduction ratio (higher the better). (b) Cosine similarities between the whitened matrix produced by different preprocessing schemes, respectively.}
    \label{fig: condition_number}
\end{figure}

\subsection{Ablation: effect of NSDS iterations} \label{app: ablation_nsds}

We examine the impact of the number of iterations of our Newton Schulz with Diagonal Substitution (NSDS) scheme. With \name{}$^\ddag$, we compare the test PPL performance on 130M model. Results are shown in \Cref{tab: NSDS_iterations}.  As we can see, the improvement brought by additional NSDS iteration is marginal; hence in this paper, we only apply 2 iterations of NSDS.

\begin{table}[H]
\centering
\caption{Effect of NSDS iterations on test PPL. Results are obtain using FP32 precision.}
\label{tab: NSDS_iterations}
\begin{tabular}{l|l}
\hline
      \# NSDS iterations   & Test PPL \\ \hline
1 & - (LLM loss diverge)         \\      
2 & 22.63         \\
5  & 22.62     \\
10             & 22.61         \\
\hline
\end{tabular}
\end{table}

\subsection{Ablation: Precision} \label{app: precision_ablation}

We compare the performance of different precisions of $\gradwhite$ with NSDS scheme. As shown in \Cref{tab: NSDS_BF16}, on 130M model (2 step NSDS iterations) ablation we show that BF16 can be used without major performance degrade compared to FP32.

\begin{table}[H]
\centering
\caption{BF16 vs FP 32 NSDS}
\label{tab: NSDS_BF16}
\begin{tabular}{l|l}
\hline
      NSDS Precision  & Test PPL on 130M \\ \hline
BF16 & 22.61         \\      
FP32 & 22.63         \\

\hline
\end{tabular}
\end{table}

\section{Additional Analysis} \label{app: discussion}

\subsection{Analyzing the GradWhitening Pt. II: Robustness Against Local Curvature} \label{sec: analysis2}

In this section, we present main results regarding the convergence rate of the $\gradwhite$ method, understand its implications, and compare it with the lower bounds of GD and Adam. 

First, for simplicity, we focus on the following quadratic problem:
\begin{equation}
    \mathcal{L}(\rmW) = \frac{1}{2} \text{Tr}(\rmW^\top  \rmH \rmW) - \text{Tr}(\rmC^\top  \rmW), \label{eq: quadratic}
\end{equation}
where $ \rmW \in \mathbb{R}^{m \times n} $ 
%
%\footnote{For simplicity we consider square matrix. Results apply for rectangular matrices.} 
%
is the parameter matrix, $ \rmH \in \mathbb{R}^{m \times m} $ is a positive definite matrix, and $ \rmC \in \mathbb{R}^{m \times n} $ is a constant matrix. 

For simplicity and without loss of generality, we assume $ \rmC = 0 $. This is because minimizing $  \mathcal{L}(\rmW) = \frac{1}{2} \text{Tr}(\rmW^\top  \rmH \rmW) - \text{Tr}(\rmC^\top  \rmW) $ is equivalent to minimizing $ \mathcal{L}(\rmW) = \frac{1}{2} \text{Tr} [(\rmW - \rmW^*)^\top  \rmH (\rmW - \rmW^*)] $, where $ \rmW^* = \rmH^{-1} \rmC $. By defining $ \rmZ = \rmW - \rmW^* $, the problem reduces to minimizing $ \mathcal{L}(\rmZ) = \frac{1}{2} \text{Tr}(\rmZ^\top  \rmH \rmZ)$.

\paragraph{Remark} Most results in this note can be easily extended to any loss function that are either i) strongly convex;  or ii) has twice differentiable functions and Lipschitz continuous Hessian, by considering their the second order approximation around $\rmW^*$.

Next, to understand the effect of $\gradwhite$, we will examine the gradient flow dynamics induced by  $\gradwhite$. Consider the $\gradwhite$-modified gradient descent: 
\begin{equation}
    \Delta \rmW^{(t)} = - \eta \gradwhite(\rmG^{(t)})  \label{eq: gradwhite_modified_gd}
\end{equation}
its exact convergence rate is given by the result as below:

\begin{theorem}[\textbf{Contraction factor of $\gradwhite$}]
    \label{thm_sve_optimal} Consider the  quadratic loss function \Cref{eq: quadratic}. Assume the initialization distribution of $ \rmW^0 $ assigns zero probability to any set of zero Lebesgue measure in $ \mathbb{R}^{m \times n}$.  Let our update rule be:
    $$ \rmW_{\text{whitened}}^{(t+1)} = \rmW_{\text{whitened}}^{(t)} - \eta \gradwhite(\rmG^{(t)}) $$
    where the learning rate is $\eta$. Then, with probability 1, we have:
    \begin{itemize}
        \item The optimal dynamic learning rate to achieve the fastest convergence is given by
        \begin{equation}
            {\eta^{(t)}}^* = \frac{ \|\rmH \rmW_{\text{whitened}}^{(t)}
            \|_1 }{\text{Tr}[\rmH]}. \label{eq_sve_lr}
        \end{equation}
        where $\|\rmH \rmW_{\text{whitened}}^{(t)}
            \|_1$ denotes the Schatten $p$-norm with $p=1$ (i.e., sum of singular values).
        \item Under ${\eta^{(t)}}^*$, the contraction factor of loss function at $t$ is given by:
        \begin{equation}
        \frac{ \mathcal{L}(\rmW_{\text{whitened}}^{(t+1)}) - \mathcal{L}^*}{\mathcal{L}(\rmW_{\text{whitened}}^{(t)}) - \mathcal{L}^*}
        = 1 - \frac{\|\rmH \rmW_{\text{whitened}}^{(t)}\|_1 ^2}{\text{Tr}[(\rmW_{\text{whitened}}^{(t)})^\top \rmH \rmW_{\text{whitened}}^{(t)}] \text{Tr}[\rmH]} 
        \label{eq_sve_rate}
    \end{equation}
    \item Furthermore, if we additionally enforce $ \rmW^0 \sim V^{m \times n}(\mathbb {R})$, i.e., initialized as an element in Steifel manifold. Then we have 
    \begin{equation}
        \frac{ \mathcal{L}(\rmW_{\text{whitened}}^{t=1}) - \mathcal{L}^*}{\mathcal{L}(\rmW^0) - \mathcal{L}^*}
        = 0
    \end{equation}
    \end{itemize}
That is, $\gradwhite$ solves the optimization problem ($\Cref{eq: quadratic}$) with 1 step iteration. 
\end{theorem}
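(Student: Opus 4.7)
The plan is to reduce the one-step loss decrement to a scalar quadratic in $\eta$ via the SVD of the gradient, then minimize. Set $\rmG^{(t)} := \rmH\rmW_{\text{whitened}}^{(t)}$ and take a thin SVD $\rmG^{(t)} = \rmU\Sigma\rmV^\top$ with $\rmU\in\mathbb{R}^{m\times m}$ orthogonal, $\Sigma\in\mathbb{R}^{m\times m}$ diagonal positive, and $\rmV\in\mathbb{R}^{n\times m}$ having orthonormal columns. The definition $\gradwhite(\rmG)=(\rmG\rmG^\top)^{-1/2}\rmG$ then collapses to $\rmO^{(t)} := \gradwhite(\rmG^{(t)}) = \rmU\rmV^\top$. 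This identity requires $\rmG^{(t)}$ to be full rank; since $\rmH$ is invertible and the distribution of $\rmW^0$ assigns zero mass to Lebesgue-null sets, $\rmG^{(0)}$ is full rank a.s., and the same propagates along the smooth rational update map at every step, which handles the ``with probability one'' clause.

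Next I would expand $\mathcal{L}(\rmW^{(t+1)}_{\text{whitened}}) = \tfrac12\Tr\bigl((\rmW^{(t)}_{\text{whitened}} - \eta\rmO^{(t)})^\top \rmH(\rmW^{(t)}_{\text{whitened}} - \eta\rmO^{(t)})\bigr)$ and evaluate the two resulting trace terms using trace cyclicity together with $\rmV^\top\rmV = \rmI_m$ and $\rmU\rmU^\top = \rmI_m$:
\[
\Tr\bigl((\rmO^{(t)})^\top \rmG^{(t)}\bigr) = \Tr(\Sigma) = \|\rmH\rmW_{\text{whitened}}^{(t)}\|_1, \qquad \Tr\bigl((\rmO^{(t)})^\top \rmH\,\rmO^{(t)}\bigr) = \Tr(\rmU^\top\rmH\rmU) = \Tr(\rmH).
\]
Thus $\mathcal{L}(\rmW^{(t+1)}_{\text{whitened}}) - \mathcal{L}(\rmW^{(t)}_{\text{whitened}}) = -\eta\,\|\rmH\rmW_{\text{whitened}}^{(t)}\|_1 + \tfrac{\eta^2}{2}\Tr(\rmH)$, a scalar quadratic in $\eta$ with positive leading coefficient. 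Its unique minimizer is exactly the ${\eta^{(t)}}^{\ast}$ claimed in \eqref{eq_sve_lr}, and substituting it back, dividing by $\mathcal{L}(\rmW^{(t)}_{\text{whitened}}) = \tfrac12\Tr((\rmW^{(t)}_{\text{whitened}})^\top\rmH\rmW^{(t)}_{\text{whitened}})$, and using $\mathcal{L}^{\ast} = 0$ (under the WLOG $\rmC = 0$) recovers the contraction identity \eqref{eq_sve_rate}.

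For the Stiefel bullet, I would specialize the same computation to $\rmW^0(\rmW^0)^\top = \rmI_m$. Then $\rmG^{(0)}(\rmG^{(0)})^\top = \rmH\rmI_m\rmH = \rmH^2$, and positive-definiteness of $\rmH$ yields $(\rmG^{(0)}(\rmG^{(0)})^\top)^{-1/2} = \rmH^{-1}$, so $\gradwhite(\rmG^{(0)}) = \rmH^{-1}\rmH\rmW^0 = \rmW^0$. The singular values of $\rmH\rmW^0$ then coincide with the eigenvalues of $\rmH$, so $\|\rmH\rmW^0\|_1 = \Tr(\rmH)$ and the general formula collapses to ${\eta^{(0)}}^{\ast} = 1$ and $\rmW^{(1)}_{\text{whitened}} = \rmW^0 - \rmW^0 = 0 = \rmW^{\ast}$, i.e.\ exact one-step convergence.

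The computations themselves are elementary and dimension-independent; the main obstacle I anticipate is the measure-theoretic bookkeeping, namely formally justifying that the rank-deficient locus remains a Lebesgue-null set along the iteration so that the SVD-based identity $\gradwhite(\rmG^{(t)}) = \rmU\rmV^\top$ is valid with probability one throughout the trajectory. Since the degenerate locus is a proper real-algebraic subvariety of $\mathbb{R}^{m\times n}$ and the update map is rational and generically of full rank away from it, this should follow from standard transversality arguments, but it is the one place where extra care is needed beyond linear algebra.
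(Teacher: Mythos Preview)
Your proposal is correct and follows essentially the same route as the paper: write $\gradwhite(\rmG^{(t)})=\rmU\rmV^\top$ via SVD, expand the quadratic loss at the next iterate, compute the two traces $\Tr(\Sigma)=\|\rmH\rmW^{(t)}\|_1$ and $\Tr(\rmH)$, minimize the resulting scalar quadratic in $\eta$, and then specialize to the Stiefel case to get one-step convergence. Your treatment of the measure-zero clause is in fact more careful than the paper's (which only argues that $\Tr(\rmH\rmW^{(0)})\neq 0$ and asserts propagation to later $t$ without justification), and your direct computation $(\rmG^{(0)}(\rmG^{(0)})^\top)^{-1/2}=\rmH^{-1}$ for the Stiefel part is a slightly cleaner variant of the paper's eigendecomposition argument.
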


\Cref{thm_sve_optimal} has the following key implications.

\paragraph{Convergence rate is condition number agnositc} Unlike the convergence rates of GD and Adam presented in \citet{zhang2024transformers}, as well as  Theorem \ref{cor_gd_lower_bd_general} and Corollary \ref{cor_adam_lower_bd_general} in Appendix, the optimal convergence rate (\ref{eq_sve_rate}) of $\gradwhite$ no longer explicitly depends on the condition number $\kappa$ of $H$. In fact, consider a lower bound $\frac{\|\rmH \rmW_{\text{whitened}}^{(t)}
    \|_1^2}{\text{Tr}[(\rmW_{\text{whitened}}^{(t)})^\top \rmH \rmW_{\text{whitened}}^{(t)}] \text{Tr}[\rmH]} \geq \frac{\text{Tr}[H\rmW_{\text{whitened}}^{(t)}]^2}{\text{Tr}[(\rmW_{\text{whitened}}^{(t)})^\top \rmH \rmW_{\text{whitened}}^{(t)}] \text{Tr}[\rmH]}$, since trace of $H$ appear both in the nominator and denominator, we expect that 
    to be more robust to ill-conditioned problems. For example, consider the specific initialization $\rmW_{\text{whitened}}^{(t)} = cI$, it is straightforward to show that  $\frac{\|\rmH \rmW_{\text{whitened}}^{(t)}
    \|_1^2}{\text{Tr}[(\rmW_{\text{whitened}}^{(t)})^\top \rmH \rmW_{\text{whitened}}^{(t)}] \text{Tr}[\rmH]} \geq \frac{\text{Tr}[H\rmW_{\text{whitened}}^{(t)}]^2}{\text{Tr}[(\rmW_{\text{whitened}}^{(t)})^\top \rmH \rmW_{\text{whitened}}^{(t)}] \text{Tr}[\rmH]} \perp \kappa $, which is completely disentangled from the condition number. Hence $\frac{\|\rmH \rmW_{\text{whitened}}^{(t)}
    \|_1^2}{\text{Tr}[(\rmW_{\text{whitened}}^{(t)})^\top \rmH \rmW_{\text{whitened}}^{(t)}] \text{Tr}[\rmH]}$ would not shrink as $\kappa \xrightarrow{}\infty$.  See Proposition \ref{prop_robust} for less extreme situations.

    \paragraph{Superlinear convergence with Stiefel manifold initialization} Theorem \ref{thm_sve_optimal} suggests that if $\rmW_{\text{whitened}}^{(t)}$ is initialized in the Stiefel manifold, then $\gradwhite$ reaches superlinear convergence rate (= Newton's method), while being cheaper. In fact, it is straightforward to verify that $\gradwhite$ reaches optimal solution with 1 step update. This implies $\gradwhite$ is theoretically the optimal optimization algorithm if $\rmW$ is initialized in the Stiefel manifold. 

    \paragraph{Estimation and interpretation of optimal learning rate} Compared to the optimal dynamic learning rate of gradient descent $G = \frac{G^\top G}{G^\top HG}$, the optimal learning rate ${\eta^{(t)}}^*$ of $\gradwhite$ is much easier to compute. $\frac{\text{Tr}[H \rmW_{\text{whitened}}^{(t)}]}{\text{Tr}[\rmH]}$ can be seen as balancing the average gradient magnitude against the average curvature. A higher trace of gradient ($\rmH \rmW_{\text{whitened}}^{(t)}$) (strong gradients) relative to $\rmH$ (steep curvature) suggests a larger learning rate, promoting faster updates. Conversely, a higher trace of $\rmH$ would imply a smaller learning rate to ensure stable convergence in regions with high curvature.

Next, we show that the convergence speed of $\gradwhite$ update is indeed robust to the condition number of local curvature.
\begin{proposition}[\textbf{Robustness of $\gradwhite$ update convergence rate against the condition number of local Hessian}] \label{prop_robust} Consider the quantity:
$$Q := \frac{\text{Tr} [\rmH \rmW_{\text{whitened}}^t]^2 }{\text{Tr}[(\rmW_{\text{whitened}}^{(t)})^T \rmH \rmW_{\text{whitened}}^{(t)}] \text{Tr}[\rmH]}\label{eq_sve_rate_lb}$$%\label{eq_sve_rate}$$

Assume: i) ,$\rmW_{\text{whitened}}^{(t)} \neq \rmW^*$; and ii) the norm of $\rmH$ is bounded. Then, there exist some finite positive constant $c$, such that 
$$Q > c$$
This holds even if $\kappa \xrightarrow{ } +\infty$, where $\kappa$ is the condition number of $\rmH$.
    
\end{proposition}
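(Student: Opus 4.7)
The plan is to pass to the eigenbasis of $\rmH$ and combine an elementary Cauchy--Schwarz calibration with a compactness argument. Writing $\rmH = U\Lambda U^\top$ with $\Lambda = \mathrm{diag}(\lambda_1,\dots,\lambda_m)$ and $\tilde{\rmW} = U^\top \rmW_{\text{whitened}}^{(t)} U$, the three traces appearing in $Q$ become explicit weighted sums: $\mathrm{Tr}[\rmH] = \sum_i \lambda_i$, $\mathrm{Tr}[\rmW^\top \rmH \rmW] = \sum_i \lambda_i \|\tilde{\rmW}_{i,:}\|^2$, and $\mathrm{Tr}[\rmH\rmW]^2 = (\sum_i \lambda_i \tilde{\rmW}_{ii})^2$. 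Cauchy--Schwarz with $u_i = \sqrt{\lambda_i}$ and $v_i = \sqrt{\lambda_i}\tilde{\rmW}_{ii}$ immediately gives the upper calibration $Q \le 1$ (the inequality already invoked right after \Cref{thm_sve_optimal}); the content of the proposition is the matching strictly positive \emph{lower} bound.

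For the lower bound, the key observation is that ``$\|\rmH\|$ bounded with $\kappa \to \infty$'' forces $\lambda_{\min}(\rmH) \to 0$ while the top of the spectrum stays in a compact set. I would split the eigendirections into a ``large'' block $L = \{i : \lambda_i \ge \epsilon\}$ and a ``small'' block $S = \{i : \lambda_i < \epsilon\}$ for a suitable threshold $\epsilon > 0$. The $S$ indices contribute at most $O(\epsilon)$ to both $\mathrm{Tr}[\rmH]$ and $\mathrm{Tr}[\rmW^\top \rmH \rmW]$, and the $L$--$S$ cross contribution to $\mathrm{Tr}[\rmH\rmW]$ also vanishes with $\epsilon$. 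Hence, up to an $O(\epsilon)$ perturbation, $Q$ is governed by its restriction to the well-conditioned block $L$, on which $\rmH$ has condition number at most $M/\epsilon$, and whose value no longer grows with $\kappa$ once a direct lower bound in terms of $\epsilon$ and $\tilde{\rmW}$ is produced.

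The main obstacle is to rule out a hidden degeneracy in which $\tilde{\rmW}$ carries no ``useful'' footprint on $L$, which would make the reduced numerator collapse. This is the nontrivial use of $\rmW_{\text{whitened}}^{(t)} \neq \rmW^* = 0$, reinforced by the structure of the $\gradwhite$ update: its increment is (proportional to) an orthogonal matrix, so the iterate $\tilde{\rmW}$ tends to spread mass across eigendirections rather than collapse onto the vanishing ones. Concretely, one argues that at least one index $i \in L$ satisfies $|\tilde{\rmW}_{ii}| \ge \delta$ for some $\delta = \delta(\rmW) > 0$; compactness on $\{\rmH\succeq 0 : \|\rmH\|\le M\}$, together with the continuity of $Q$ and this non-degeneracy, then yields a strict positive infimum $c_1 > 0$ for the $L$-restricted ratio, \emph{uniformly} in $\kappa$.

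Finally, combining the two pieces and choosing $\epsilon$ small enough yields the uniform bound $Q > c := c_1/2 > 0$, giving the claim. The hardest step is the transfer from ``$\rmW \neq \rmW^*$'' to ``$\tilde{\rmW}$ has non-vanishing support on the non-degenerating eigendirections of $\rmH$''; this is what prevents a pathological cancellation in the numerator as $\lambda_{\min} \to 0$, and it is where any implicit generic-position assumption, or a structural property inherited from the $\gradwhite$ update, has to carry the weight of the argument.
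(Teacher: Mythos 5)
Your proposal aims at a strictly stronger statement than the paper actually proves, and the step you yourself flag as ``the hardest'' is a genuine gap that cannot be closed from the stated hypotheses. The paper's proof is an instance-wise three-liner: for the \emph{given} pair $(\rmH, \rmW_{\text{whitened}}^{(t)})$ with $\rmW_{\text{whitened}}^{(t)} \neq \rmW^*$, the numerator $\mathrm{Tr}[\rmH \rmW_{\text{whitened}}^{(t)}]^2$ is some positive number $C_G$, the two denominator factors are bounded above by constants $C_{\mathcal{L}}$ and $C_H$ (the latter because $\|\rmH\|$ is bounded), and hence $Q > C_G^2/(C_{\mathcal{L}} C_H)$. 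The clause ``holds even if $\kappa \to \infty$'' only means that $\kappa$ does not appear explicitly in this bound; the constants still depend on the particular $\rmH$ and $\rmW$. Your eigenbasis decomposition and the Cauchy--Schwarz calibration $Q \le 1$ are correct but are not needed for that reading.

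The gap is in the uniform lower bound you pursue. To make $Q$ bounded below uniformly over families of $\rmH$ with $\lambda_{\min}(\rmH) \to 0$, you need ``at least one index $i \in L$ satisfies $|\tilde{\rmW}_{ii}| \ge \delta$,'' and this cannot be derived from $\rmW_{\text{whitened}}^{(t)} \neq \rmW^* = 0$. Concretely, take $\rmH = \mathrm{diag}(1,\epsilon)$ and $\rmW_{\text{whitened}}^{(t)} = \mathrm{diag}(0,1) \neq 0$: then $\mathrm{Tr}[\rmH\rmW_{\text{whitened}}^{(t)}] = \epsilon$, $\mathrm{Tr}[(\rmW_{\text{whitened}}^{(t)})^\top \rmH \rmW_{\text{whitened}}^{(t)}] = \epsilon$, and $\mathrm{Tr}[\rmH] = 1+\epsilon$, so $Q = \epsilon/(1+\epsilon) \to 0$ as $\kappa = 1/\epsilon \to \infty$. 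Thus the uniform version of the claim is false without an additional non-degeneracy hypothesis tying $\rmW$ to the non-vanishing part of the spectrum, and no compactness argument over $\{\rmH \succeq 0 : \|\rmH\| \le M\}$ can rescue it, since the infimum of $Q$ over that set for such a $\rmW$ is zero. Your appeal to the $\gradwhite$ update ``spreading mass across eigendirections'' is a plausible heuristic but is neither formalized nor contained in the proposition's hypotheses. Either adopt the paper's instance-wise reading, in which case your machinery collapses to the trivial bound, or state and prove the extra non-degeneracy assumption your argument silently relies on.
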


Below, we provide comparison between $\gradwhite$ modified gradient descent and Adam. We only consider non-Stiefel initialization for $\gradwhite$, since with non-Stiefel initialization $\gradwhite$ is optimal according to \Cref{thm_sve_optimal}. Our results below shows that, for poor conditioned problems $\gradwhite$ with a properly chosen single global learning rate always outperforms Adam even with \emph{optimally tuned sub-group learning rates}, in terms of convergence speed. 

\begin{proposition}[\textbf{$\gradwhite$ with single lr vs Adam with tuned group lr}] \label{thm_sve_dam_group}  Consider the optimization problem \Cref{eq: quadratic}. Assume  $ \rmH $ is block-diagonal, i.e.,
    $
    \rmH = \text{diag}(\rmH_1, \rmH_2, \dots, \rmH_L),
    $
    where each $ \rmH_l \in \mathbb{R}^{m_l \times m_l} $ is a positive definite matrix for $ l = 1, 2, \dots, L $, and $ \sum_{l=1}^L m_l = m $.  Assuming for $\gradwhite$ we use one global learning rate for all parameters; and for Adam, we use the optimally chosen group learning rate $\eta_l$ and initial condition $w_0$ for each block $\rmH_l$. 

Assume either if i) certain regularity conditions are met (see proof in Appendix), or ii), if $\rmH$ is poorly-conditioned (its condition number is large enough). Then: regardless of its initialization, $\gradwhite$ with a properly chosen learning rate will still have a strictly better convergence speed (i.e., smaller contraction factor) across all blocks $l \in [L]$ than Adam ($\beta_1=0, \beta_2 = 1$) under optimal group-wise learning rates and initial condition.

\end{proposition}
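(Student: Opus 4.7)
The plan is to exploit the block-diagonal structure of $\rmH$ to reduce the comparison to a per-block analysis, then combine the robustness result for $\gradwhite$ (Proposition \ref{prop_robust}) with a matching lower bound on Adam's best achievable per-block contraction. Because $\rmH = \mathrm{diag}(\rmH_1,\dots,\rmH_L)$, the gradient $\rmG = \rmH\rmW$ partitions row-wise into $\rmG_l = \rmH_l \rmW_l$, and the total loss decomposes as $\mathcal{L}(\rmW) = \sum_l \mathcal{L}_l(\rmW_l)$ with $\mathcal{L}_l(\rmW_l) = \tfrac{1}{2}\mathrm{Tr}(\rmW_l^\top \rmH_l \rmW_l)$. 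With $\beta_1 = 0$ and $\beta_2 = 1$, Adam's second moment is frozen at its initial value, so its update on block $l$ collapses to a preconditioned gradient step with a fixed diagonal preconditioner; allowing optimal tuning of $\eta_l$ and of $\rmW_l^{(0)}$ then makes Adam equivalent to (preconditioned) gradient descent with the best constant step size on the quadratic $\mathcal{L}_l$. A standard eigenvalue argument (as in Theorem \ref{cor_gd_lower_bd_general}) yields the tight lower bound $\rho_l^{\mathrm{Adam}} \geq \bigl(\kappa_l - 1\bigr)/\bigl(\kappa_l + 1\bigr)$, where $\kappa_l$ is the condition number of $\rmH_l$ after the fixed preconditioning.

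For $\gradwhite$, I would directly invoke Theorem \ref{thm_sve_optimal}: with a properly chosen $\eta$, the contraction factor on the full problem is $\rho^{\mathrm{SWAN}} = 1 - \|\rmH\rmW\|_1^2 / \bigl(\mathrm{Tr}[\rmW^\top \rmH \rmW]\,\mathrm{Tr}[\rmH]\bigr)$. Proposition \ref{prop_robust} then furnishes a positive constant $c$, independent of the overall condition number of $\rmH$, such that $\rho^{\mathrm{SWAN}} \leq 1 - c$. I would next upgrade this global contraction statement into a per-block guarantee: using the block-diagonal structure of $\rmH$, write $\gradwhite(\rmG) = (\rmG\rmG^\top)^{-1/2}\rmG$ and expand $\rmG\rmG^\top$ into its block form $(\rmG\rmG^\top)_{l,k} = \rmH_l \rmW_l \rmW_k^\top \rmH_k$ to quantify how the inverse square root redistributes progress across blocks. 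The cleanest route is to show that at the chosen $\eta$, every block's loss $\mathcal{L}_l$ is non-increasing and that the sum $\sum_l \mathcal{L}_l$ contracts by factor $1-c$, which forces at least the slowest block to satisfy $\mathcal{L}_l(\rmW_l^{(t+1)})/\mathcal{L}_l(\rmW_l^{(t)}) \leq 1 - c$.

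To close the proposition, I would compare $\rho^{\mathrm{SWAN}} \leq 1 - c$ against $\rho_l^{\mathrm{Adam}} \geq (\kappa_l-1)/(\kappa_l+1)$ in the two regimes. In regime (ii), poorly-conditioned $\rmH$ forces some $\kappa_l$ to diverge, so $\rho_l^{\mathrm{Adam}} \to 1$ while $\rho^{\mathrm{SWAN}}$ remains bounded away from $1$; choosing $\kappa_l$ large enough to make $(\kappa_l-1)/(\kappa_l+1) > 1-c$ yields the strict inequality for that block, and a direct calculation handles the remaining blocks by noting that Adam's best rate on any fixed block cannot beat the global $\gradwhite$ rate once $c$ is sufficiently large. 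In regime (i), the regularity conditions are designed precisely to make the inequality $1 - c < (\kappa_l-1)/(\kappa_l+1)$ hold uniformly in $l$; I would state these as (a) a lower bound on $\min_l \kappa_l$, and (b) a non-degeneracy condition on the current iterate $\rmW$ so that Proposition \ref{prop_robust}'s constant $c$ is quantitatively available.

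The main obstacle is the second paragraph's upgrade from a global to a per-block contraction for $\gradwhite$: unlike Adam, which is separable across blocks, the whitening step couples blocks through the off-diagonal entries of $\rmG\rmG^\top$, and in principle could accelerate progress in one block at the expense of another. Controlling this coupling is where the regularity conditions do real work, and I expect the cleanest formulation to assume a mild alignment condition between $\rmW$ and the block structure of $\rmH$ (for example, that $\rmW_l \rmW_k^\top$ is small in operator norm for $l \neq k$ at the iterate of interest), which guarantees that the inverse square root is approximately block-diagonal and therefore preserves per-block decay. A weaker but more robust fallback is to establish the comparison only for the total loss $\mathcal{L} = \sum_l \mathcal{L}_l$ and argue that this implies improvement for the slowest block, which is already enough to contradict Adam's best achievable per-block rate in the ill-conditioned regime.
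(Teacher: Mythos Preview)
Your proposal has a genuine gap stemming from a misreading of the setup: you treat $\gradwhite$ as applied to the full $m\times n$ gradient matrix $\rmG=\rmH\rmW$, which indeed couples blocks through the off-diagonal entries $(\rmG\rmG^\top)_{l,k}=\rmH_l\rmW_l\rmW_k^\top\rmH_k$ and forces you into the difficult ``upgrade from global to per-block'' step you flag as the main obstacle. In the paper's setting, however, $\gradwhite$ is applied \emph{separately to each block} $\rmG_l=\rmH_l\rmW_l$ (this is how \name{} is actually used, layer by layer); only the learning rate $\eta$ is shared across blocks. Consequently there is no coupling at all, and the per-block contraction follows immediately from the proof of Theorem~\ref{thm_sve_optimal} applied to each $(\rmH_l,\rmW_l)$:
\[
\frac{\mathcal{L}_l(\rmW_l^{(t+1)})}{\mathcal{L}_l(\rmW_l^{(t)})}
=1-\frac{\eta\,\|\rmH_l\rmW_l^{(t)}\|_1-\tfrac{\eta^2}{2}\,\mathrm{Tr}(\rmH_l)}{\tfrac12\,\mathrm{Tr}[(\rmW_l^{(t)})^\top\rmH_l\rmW_l^{(t)}]}\,.
\]
So your proposed alignment condition ($\rmW_l\rmW_k^\top$ small for $l\neq k$) and the fallback ``slowest block'' argument are both unnecessary.

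With that decoupling, the paper's argument is quite different from yours. For each block, the right-hand side above is quadratic in $\eta$; solving the inequality that forces it below Adam's per-block lower bound $1-2/(\kappa'_l(\rmW_l^{(0)})+1)$ (here $\kappa'_l$ is the condition number of the \emph{preconditioned} Hessian $\mathrm{diag}(|\rmH_l\rmW_l^{(0)}|^{-1})\rmH_l$, as you correctly note) gives an explicit interval $I_l\subset(0,\infty)$ of admissible $\eta$. The ``regularity conditions'' in regime~(i) are then exactly: (a) each $I_l$ is nonempty, i.e.\ the discriminant $\|\rmH_l\rmW_l^{(t)}\|_1^2-\mathrm{Tr}(\rmH_l)\,\mathrm{Tr}[(\rmW_l^{(t)})^\top\rmH_l\rmW_l^{(t)}]\cdot\tfrac{2\lambda_{l,m_l}}{\lambda_{l,1}+\lambda_{l,m_l}}>0$; and (b) the intervals overlap, $\bigcap_l I_l\neq\emptyset$. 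Any $\eta$ in the intersection works. Regime~(ii) is handled by observing that as each block becomes ill-conditioned the left endpoints of the $I_l$ tend to $0$ (this is where Proposition~\ref{prop_robust} enters, and only here), so the overlap is automatic and one may take, e.g., $\eta=\min_l\|\rmH_l\rmW_l^{(t)}\|_1/\mathrm{Tr}(\rmH_l)$. Your use of Proposition~\ref{prop_robust} to produce a single constant $c$ bounding the global contraction, and then comparing $1-c$ to $(\kappa_l-1)/(\kappa_l+1)$, is both weaker (it does not directly give per-block control) and not how the paper proceeds.
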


\paragraph{Remark} As pointed out by \cite{zhang2024transformers} and \cite{da2020general}, Adam with $\beta_2 <1$ will have issues with convergence, which will not be completely removed even with lr decay. Therefore, we will not discuss the case of $\beta_2 <1$ to avoid the complication.  

\subsection{Numerical Verification of \Cref{prop_llm_hessian_main}}

Given a STB, we consider the following standard full-batch learning dynamics \citep{tian2023joma}. Define the conditional expectation $\mathbb{E}_{q=m}[\cdot] := \mathbb{E}[\cdot | q = m]$. Consider the dynamics of the weight matrix $W$ and the attention logits $z_q$, if we train the model with a batch of inputs that always end up with query $q[i] = m$. The weight update for $W$ and $z_q$ are given by the following noisy updates:
\begin{equation}
    \dot{\rmW}^{(t)} = \E_{q=m} \left[  \vf^{(t)} (\rmG_{\vh} \odot \vh'^{(t)})^\top  \right], \quad \dot{\vz}_m^{(t)} = \mathbb{E}_{q=m} \left[ \left( \frac{\partial \vb}{\partial \vz_m^{(t)}} \right)^\top  \rmU_C^\top  \vg_f^{(t)} \right] , \label{eq: stb_dynamics}
\end{equation}
Where $\vf^{(t)} = \left( \rmU_C \left( \exp(\vz_q^{(t)}) \odot \vx \right) + \vu_q \right)$, $(\vh^{(t)})' = \phi'((\rmW^{(t)})^\top  \vf^{(t)})$ is the derivative of the current activation, $\rmG_{\vh}^{(t)} = \nabla_{\vh^{(t)}} \mathcal{L}$ is the gradient of the loss function $\mathcal{L}$ with respect to the hidden activation $\vh^{(t)}$, and $\vg_{\vf^{(t)}}^{(t)} = \sum_k \vg_{\vh_k^{(t)}}^{(t)} (\vh^{(t)}_k)'  \vw_k^{(t)}$ is the sum of the gradients with respect to the attention logits. Here, $\vw_k^{(t)}$ is the $k$-th column of $\rmW^{(t)}$,  $\vg_{h_k^{(t)}}^{(t)}[i]$ be the backpropagated gradient sent to node $k$ at sample $i$.

Then, we numerically solving the STB ODE with $n = 12, M_C = 10$ in \Cref{eq: stb_dynamics}. During all training steps, we analytically track the evolution of Hessian of $rmW$. Results are shown in \Cref{fig_stb_hessian}. As predicted by \Cref{prop_llm_hessian_main}, we see very similar structures across the diagonal blocks of the Hessian.

\begin{figure}[tb]
    \centering
\includegraphics[width=0.9\textwidth]{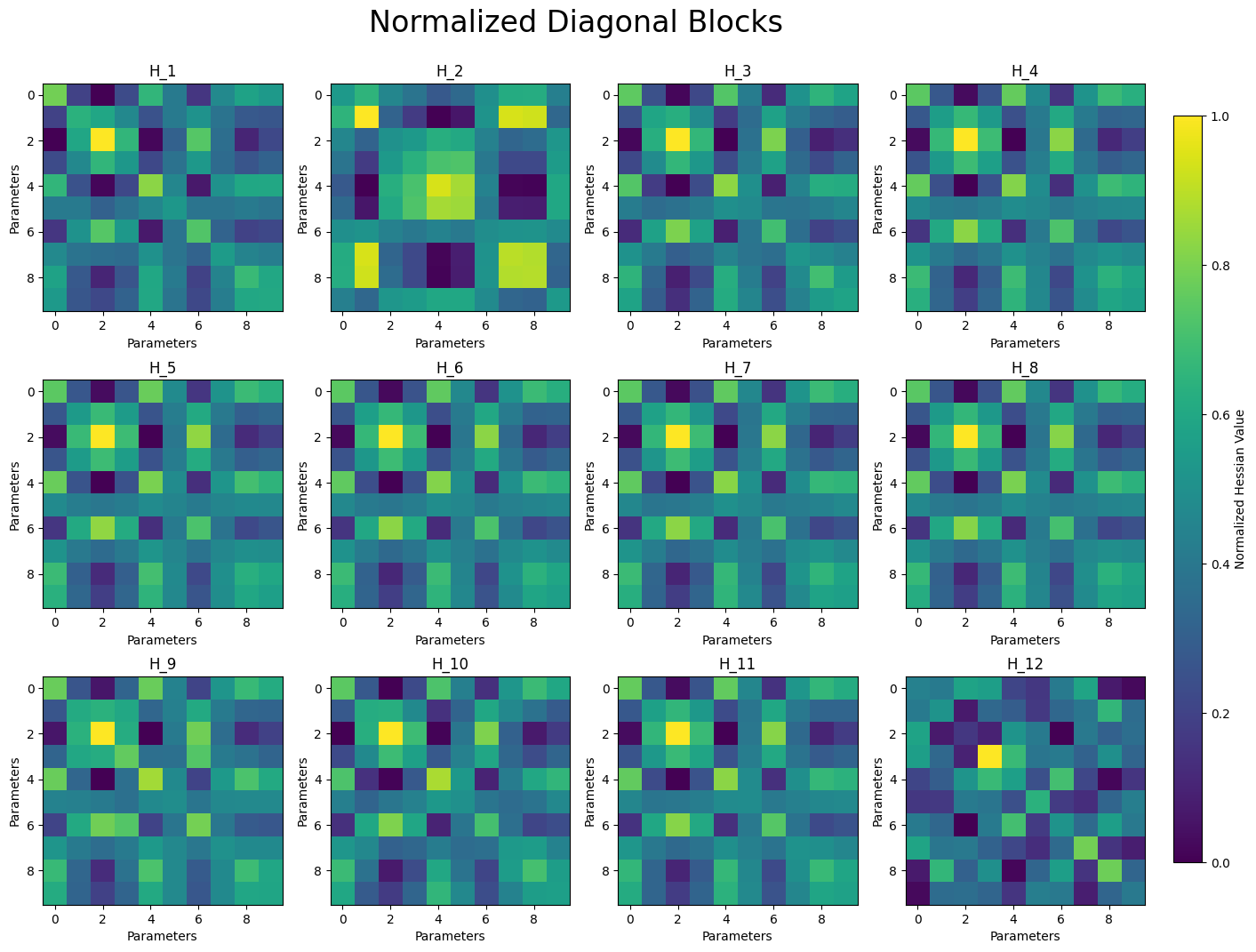}
    \caption{Normalized Hessian Blocks of size $M_C \times M_C$ along the diagonal direction of the Hessian, obtained from numerically solving the STB ODE (with $n = 12, M_C = 10$) (\ref{def: STB}) given by the full-batch dynamics (i.e., removing noise in \Cref{eq: stb_dynamics}). During all training steps, we analytically track the evolution of Hessian. As predicted by \Cref{prop_llm_hessian_main}, we see very similar structures across the diagonal blocks of the Hessian. }
    \label{fig_stb_hessian}
\end{figure}

\section{Proof of \Cref{thm: stability}}

\begin{proof}

We first consider the noiseless, full batch dynamics. Define $\rmV \in \mathbb{R}^{M_C \times n}$ as $\rmV := \rmU_C^\top\rmW$. Then following Theorem 2 in \cite{tian2023joma}, each column of $\rmV$ satisfies the following differential equation:
    \begin{equation}
        \dot{\rmV}_{[:, j]} = \exp(\rmV_{[:, j]}^2/2 + C) \odot  \mathbb{E}_q[g_{h_j} \vx]
        \label{eq: joma_ode}
    \end{equation}

    The corresponding dynamics of attention score is given by:
    \begin{equation}
        \vz_q = \frac{1}{2}\sum_j \rmV_{[:, j]}^2. \label{eq: joma}
    \end{equation}
    Without loss of generality, in this proof we only consider $C = 0$.

    Now, following the argument of Lemma B.6 of \cite{Zhao2024GaLoreML}, we reparameterize the dynamics \emph{row-wise}. For this, consider instead
    \[
    \rmV = \begin{bmatrix}
        \vu_1^\top \\ 
        \vu_2^\top \\
        \vdots\\
        \vu_{M_C}^\top
    \end{bmatrix}
    \]
    Then, \eqref{eq: joma_ode}  becomes:
    \begin{equation}
        \dot{\vu}_i = [\exp({\vu_i}^2 ) \cdot \mathbf{1}] \vmu_i
    \end{equation}
    where $\vmu_i \in \mathbb{R}^{n \times 1}$ is given by $[\vmu_i]_j := \mathbb{E}_q[g_{h_j} x_i]$. Therefore, it is clear that $\vu_i$ always move along the direction of $\vmu_i$ due to the stationary back-propagated gradient assumption. Hence, $\dot{\vu}_i = \alpha_i(t) \vmu_i$ for some scalar dynamics $\alpha_i(t)$.

    Next, consider the mini-batch version of the dynamics. In this case, the packpropagated gradient term $[\vmu_i]_j := \mathbb{E}_q[g_{h_j} x_i]$ is corrupted by some i.i.d. mini-batch noise $\mxi$. The noisy row-wise dynamics now becomes:
    \begin{equation}
       \dot{\vu}_i = \alpha_i(t) ( \vmu_i +  \mxi_i)
    \end{equation}

    Therefore, after row-wise standardization, the new dynamics becomes

    % \begin{align*}
    %    \dot{\tilde{\vu}}_i & = \frac{ \alpha_i(t) (\mu_i + \mxi_i) - \alpha_i(t) ( \frac{1}{n} \sum_j \mu_{ij} + \frac{1}{n} \sum_j \xi_{ij} ) }{ \alpha_i(t) (\frac{1}{n}\sum_j (\mu_{ij} + \xi_{ij} - \frac{1}{n} \sum_j \mu_{ij} - \frac{1}{n} \sum_j \xi_{ij}  )^2  ) } \\
    %    & = \frac{  (\mu_i + \mxi_i) - ( \frac{1}{n} \sum_j \mu_{ij} + \frac{1}{n} \sum_j \xi_{ij} ) }{ (\frac{1}{n}\sum_j (\mu_{ij} + \xi_{ij} - \frac{1}{n} \sum_j \mu_{ij} - \frac{1}{n} \sum_j \xi_{ij}  )^2  ) } 
    % \end{align*}

    \begin{align*}
       \dot{\tilde{\vu}}_i & = \frac{ \alpha_i(t) (\mu_i + \mxi_i)  }{ \alpha_i(t) (\frac{1}{n}\sum_j (\mu_{ij} + \xi_{ij} )^2  ) } \\
       & = \frac{  (\mu_i + \mxi_i)  }{ (\frac{1}{n}\sum_j (\mu_{ij} + \xi_{ij} )^2  ) } 
    \end{align*}

    Therefore, the normalized noisy gradient $\dot{\tilde{\vu}}_i$ no longer depend on the time-variant component $\alpha(t)$. Hence, we have proved:
    $$
    \Cov[\tilde{\rmG}_{\rmU_{C}^\top\rmW}[i, :]^{(t_1)} ] = \Cov[\tilde{\rmG}_{\rmU_{C}^\top\rmW}[i, :]^{(t_2)}] \quad \text{for all } t_1, t_2, \text{and} \, i.
    $$
    The corresponding result for $ \tilde{\rmG}_{\vz_q}^{(t)} := \gradnorm( \frac{ \partial \gL_{\rmW, \vz_q}(\top\vx^{(t)}) } {\vz_q}) $ can be trivially derived due to \Cref{eq: joma}.

\end{proof}

\section{Proof of \Cref{thm_sve_optimal}}

\begin{proof}  
    We first show that $\nabla \mathcal{L}(\rmW^{(0)}) = \rmH \rmW^{(0)}$ (and hence $\nabla \mathcal{L}(\rmW_{\text{whitened}}^{(t)})$ with $t \neq \infty$) are non-zero with probability 1 under Assumption of the theorem. Given $\nabla \mathcal{L}(\rmW^{(0)}) = \rmH \rmW^{(0)}$, the set of matrices $\rmW^{(0)}$ such that $\text{Tr}(\rmH \rmW^{(0)}) = 0$ forms a hyperplane in the space of $d \times d$ matrices. Specifically, it is defined by the linear equation: $\text{Tr}(\rmH \rmW^{(0)}) = 0$. Since $\rmH$ is positive definite, at least one entry of $\rmH$ is non-zero. Thus, the hyperplane $\text{Tr}(\rmH \rmW^{(0)}) = 0$ has zero Lebesgue measure in the space of $d \times d$ matrices. Given that $\rmW^{(0)}$ is sampled from a continuous distribution, the probability that $\text{Tr}(\rmH \rmW^{(0)}) = 0$ is zero. Therefore, $\nabla \mathcal{L}(\rmW^{(0)}) \neq 0$ (and hence $\nabla \mathcal{L}(\rmW_{\text{whitened}}^{(t)})$ with $t \neq \infty$) with probability 1.  
      
    Next, we define the cost-to-go as:  
    $$  
    \mathcal{L}(\rmW^{(t)}) - \mathcal{L}^* = \frac{1}{2} \text{Tr}\left[(\rmW^{(t)})^\top \rmH \rmW^{(t)}\right],  
    $$  
      
    and the per-step improvement is (since $\mathcal{L}^* = 0$ under $ \rmW = 0 $, ):  
    $$  
    \mathcal{L}(\rmW^{(t)}) - \mathcal{L}(\rmW^{(t+1)}) = \frac{1}{2} \text{Tr}\left[(\rmW^{(t)})^\top \rmH \rmW^{(t)}\right] - \frac{1}{2} \text{Tr}\left[(\rmW^{(t+1)})^\top \rmH \rmW^{(t+1)}\right].  
    $$  
      
    Substituting the update rule $\rmW^{(t+1)} = \rmW^{(t)} - \eta \gradwhite(\rmG_{\text{whitened},l}) = \rmW^{(t)} - \eta \rmU \rmV^\top$, we get:  
    $$  
    \mathcal{L}(\rmW^{(t)}) - \mathcal{L}(\rmW^{(t+1)}) = \frac{1}{2} \text{Tr}\left[(\rmW^{(t)})^\top \rmH \rmW^{(t)}\right] - \frac{1}{2} \text{Tr}\left[(\rmW^{(t)} - \eta \rmU \rmV^\top)^\top \rmH (\rmW^{(t)} - \eta \rmU \rmV^\top)\right].  
    $$  
      
    Expanding the right-hand side, we have  
    $$  
    \mathcal{L}(\rmW^{(t)}) - \mathcal{L}(\rmW^{(t+1)}) = \eta \text{Tr}\left[(\rmW^{(t)})^\top \rmH \rmU \rmV^\top\right] - \frac{\eta^2}{2} \text{Tr}\left[(\rmU \rmV^\top)^\top \rmH (\rmU \rmV^\top)\right].  
    $$  
      
    Now, noticing that $\rmG = \rmH \rmW^{(t)} = \rmU \Sigma \rmV^\top$, we have:  
    $$  
    \text{Tr}\left[(\rmW^{(t)})^\top \rmH \rmU \rmV^\top\right] = \text{Tr}\left[(\rmH \rmW^{(t)})^\top \rmU \rmV^\top\right] = \text{Tr}\left[(\rmU \Sigma \rmV^\top)^\top \rmU \rmV^\top\right] = \text{Tr}\left[\rmV \Sigma \rmU^\top \rmU \rmV^\top\right] = \text{Tr}\left[\rmV \Sigma \rmV^\top\right].  
    $$  
      
    Since $\rmV$ is orthogonal, $\rmV^\top \rmV = \rmI$, and $\Sigma$ is diagonal, we obtain:  
    $$  
    \text{Tr}\left[(\rmW^{(t)})^\top \rmH \rmU \rmV^\top\right] = \text{Tr}(\Sigma) = \|\rmH \rmW_{\text{whitened}}^{(t)}\|_1.  
    $$  
      
    Similarly:  
    $$  
    \text{Tr}\left[(\rmU \rmV^\top)^\top \rmH (\rmU \rmV^\top)\right] = \text{Tr}\left[\rmV \rmU^\top \rmH \rmU \rmV^\top\right] = \text{Tr}\left[\rmV \Lambda \rmV^\top\right] = \text{Tr}(\rmH),  
    $$  
      
    where $\Lambda$ is the eigenvalue matrix of $\rmH$. Given those intermediate results, we have:  
    \begin{align*}  
        \frac{\mathcal{L}(\rmW^{(t+1)}) - \mathcal{L}^*}{\mathcal{L}(\rmW^{(t)}) - \mathcal{L}^*} &= 1 - \frac{\mathcal{L}(\rmW^{(t)}) - \mathcal{L}(\rmW^{(t+1)})}{\mathcal{L}(\rmW^{(t)}) - \mathcal{L}^*} \\  
        &= 1 - \frac{\eta \|\rmH \rmW_{\text{whitened}}^{(t)}\|_1 - \frac{\eta^2}{2} \text{Tr}(\rmH)}{\frac{1}{2} \text{Tr}\left[(\rmW^{(t)})^\top \rmH \rmW^{(t)}\right]}.  
    \end{align*}  
      
    Noticing that this is a quadratic function of $\eta$ and the second order coefficient is positive, it is straightforward to verify via the quadratic formula that the optimal learning rate is given by  
      
    \begin{equation*}  
        \eta_t^* = \frac{\|\rmH \rmW_{\text{whitened}}^{(t)}\|_1}{\text{Tr}(\rmH)}. \label{eq_sver}  
    \end{equation*}  
  
    Under which the optimal contraction factor is given by  
  
    \begin{equation*}  
        \frac{\mathcal{L}(\rmW_{\text{whitened}}^{(t+1)}) - \mathcal{L}^*}{\mathcal{L}(\rmW_{\text{whitened}}^{(t)}) - \mathcal{L}^*}  
        = 1 - \frac{\|\rmH \rmW_{\text{whitened}}^{(t)}\|_1^2}{\text{Tr}\left[(\rmW_{\text{whitened}}^{(t)})^\top \rmH \rmW_{\text{whitened}}^{(t)}\right] \text{Tr}(\rmH)}. %\label{eq_sve_rate}  
    \end{equation*}  
  
    Finally, if we additionally enforce $\rmW^{(0)} \sim V^{m \times n}(\mathbb{R})$, i.e., we can parameterize $\rmW^{(0)} = \rmO$ where $\rmO$ is orthogonal, then it is trivial to verify that $\gradwhite$ reaches the optimal solution with a 1-step update. To see this, consider the $\gradwhite$ update:  
    $$  
    \rmW_{\text{whitened}}^{(1)} = \rmO - \eta^* \gradwhite(\rmH \rmO) = \rmO - \frac{\|\rmH \rmO\|_1}{\text{Tr}(\rmH)} \gradwhite(\rmH \rmO),  
    $$  
      
    noticing that $\frac{\|\rmH \rmO\|_1}{\text{Tr}(\rmH)} = 1$, and $\gradwhite(\rmH \rmO) = \mathcal{P}(\rmQ \Lambda \rmQ^\top \rmO) = \rmQ \rmQ^\top \rmO = \rmO$. Hence:  
    $$  
    \rmW_{\text{whitened}}^{(1)} = \rmO - \eta^* \gradwhite(\rmH \rmO) = \rmO - \rmO = \mathbf{0} = \rmW^*.  
    $$  
      
    Hence, the proof is complete.  
\end{proof}  

% \section{Proof of \Cref{thm: stability}}

\section{Proof of \Cref{prop_robust}}

\begin{proof}  
    Since $\rmW_{\text{whitened}}^{(t)} \neq \rmW^*$, the square of the trace of the gradient $\text{Tr}[\rmH \rmW_{\text{whitened}}^{(t)}]^2$ must exceed some positive constant $C_G$, that is, $\text{Tr}[\rmH \rmW_{\text{whitened}}^{(t)}]^2 > C_G$.  
      
    On the other hand, because:  
    \begin{enumerate}  
        \item The quadratic loss term $\text{Tr}[(\rmW_{\text{whitened}}^{(t)})^\top \rmH \rmW_{\text{whitened}}^{(t)}]$ is upper-bounded on $\mathbb{R}^{n \times n}$, and  
        \item $\text{Tr}[(\rmW_{\text{whitened}}^{(t)})^\top \rmH \rmW_{\text{whitened}}^{(t)}] \neq 0$ (due to $\rmW_{\text{whitened}}^{(t)} \neq \rmW^*$),  
    \end{enumerate}  
    we have that there exists a positive number $0 < C_{\mathcal{L}}$ such that   
    $$  
    0 < \text{Tr}[(\rmW_{\text{whitened}}^{(t)})^\top \rmH \rmW_{\text{whitened}}^{(t)}] < C_{\mathcal{L}}.  
    $$  
      
    Finally, since the norm of $\rmH$ is upper bounded, its trace must also be upper bounded by some constant $C_H$. Therefore, putting everything together, we have:  
    $$  
    Q > \frac{C_G^2}{C_{\mathcal{L}} C_H}.  
    $$  
      
    This inequality holds even as the condition number $\kappa \xrightarrow{} +\infty$.  
\end{proof}

\section{Proof of \Cref{thm_sve_dam_group}}

To prove \Cref{thm_sve_dam_group}, we first generalize existing work on the convergence rate lower bound (via contraction factor) of gradient descent and Adam (we only consider $\beta_2 = 1$) under the same setting:

\begin{theorem}
 [Contraction factor lower bound for gradient descent, generalized based on \citet{zhang2024transformers}]
\label{cor_gd_lower_bd_general}
Consider the optimization problem in \Cref{eq: quadratic}. Let $\rmW_{GD}^t$ be the output of GD after $t$ steps.  Then, for any step size $ \eta $, there exists an initial condition such that the following lower bound on the contraction rate holds:
$$
\mathcal{L}(\rmW_{GD}^{t+1}) - \mathcal{L}^* \geq \left(1 - \frac{2}{\kappa + 1}\right) \left(\mathcal{L}(\rmW_{GD}^t) - \mathcal{L}^*\right),
$$
where $ \mathcal{L}^* = \mathcal{L}(\rmW^*) $. Furthermore, under optimal $\eta = \frac{2}{\lambda_1 + \lambda_m}$, the bound becomes tight regardless of the settings of $\rmH$, where $\lambda_1$ and $\lambda_m$ are the largest and smallest eigen values of $\rmH$, respectively.
\end{theorem}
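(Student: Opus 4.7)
The plan is to reduce the matrix iteration to decoupled scalar recursions via spectral decomposition of $\rmH$, then realize the worst-case contraction by a single-eigen-direction adversarial initialization, and finally pin down the tight rate by a standard Chebyshev min-max calculation in $\eta$. Since $\rmC = 0$ we have $\rmW^\ast = 0$ and $\mathcal{L}^\ast = 0$. Diagonalize $\rmH = \rmQ \Lambda \rmQ^\top$ with $\Lambda = \mathrm{diag}(\lambda_1,\ldots,\lambda_m)$, $\lambda_1 \geq \cdots \geq \lambda_m > 0$, and introduce rotated iterates $\tilde{\rmW}^t := \rmQ^\top \rmW^t$. Both the loss and the GD update decouple along the row index:
\begin{align*}
\mathcal{L}(\rmW^t) = \tfrac{1}{2}\sum_{i=1}^m \lambda_i \,\|\tilde{\rmW}^t_{i,:}\|^2, \qquad \tilde{\rmW}^{t+1}_{i,:} = (1-\eta\lambda_i)\,\tilde{\rmW}^t_{i,:}.
\end{align*}

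Consequently,
\begin{align*}
\frac{\mathcal{L}(\rmW^{t+1})-\mathcal{L}^\ast}{\mathcal{L}(\rmW^t)-\mathcal{L}^\ast} \;=\; \frac{\sum_i \lambda_i (1-\eta\lambda_i)^2 \|\tilde{\rmW}^t_{i,:}\|^2}{\sum_i \lambda_i \|\tilde{\rmW}^t_{i,:}\|^2}
\end{align*}
is a convex combination of $\{(1-\eta\lambda_i)^2\}_i$ whose weights are fully controlled by the initial condition. I would then exhibit the adversarial initialization that concentrates $\tilde{\rmW}^0$ on a single row $i^\ast \in \argmax_i (1-\eta\lambda_i)^2$; since the decoupled dynamics preserve the support, $\tilde{\rmW}^t$ stays on row $i^\ast$ for all $t$ and the per-step ratio equals $(1-\eta\lambda_{i^\ast})^2$ along the entire trajectory.

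The final step is a Chebyshev min-max in $\eta$: minimizing $\max_i (1-\eta\lambda_i)^2$ over $\eta$ reduces to the equi-oscillation condition $(1-\eta\lambda_1) = -(1-\eta\lambda_m)$, giving $\eta^\ast = 2/(\lambda_1+\lambda_m)$ and
\begin{align*}
\min_\eta \,\max_i (1-\eta\lambda_i)^2 \;=\; \left(\frac{\lambda_1-\lambda_m}{\lambda_1+\lambda_m}\right)^2 \;=\; \left(\frac{\kappa-1}{\kappa+1}\right)^2,
\end{align*}
which depends only on $\lambda_1$ and $\lambda_m$; this yields the ``regardless of the settings of $\rmH$'' tightness claim. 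Combining the three ingredients proves the lower bound: for every $\eta$ the adversarial initialization realizes the min-max value, which at $\eta = \eta^\ast$ equals the claimed $(\kappa-1)/(\kappa+1) = 1 - 2/(\kappa+1)$ (up to the square, depending on whether one measures contraction on $\mathcal{L}-\mathcal{L}^\ast$ or on $\|\rmW-\rmW^\ast\|_\rmH = \sqrt{2(\mathcal{L}-\mathcal{L}^\ast)}$).

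The main obstacle is the Chebyshev step: one must verify the outer minimum is attained exactly at the equi-oscillation point and that the maximum in $i$ is always pinned to an endpoint of $[\lambda_m,\lambda_1]$, so that the bound holds uniformly in $\eta$ rather than just at $\eta^\ast$. A secondary subtlety is lifting the single-step bound to all $t$: one must check that $\tilde{\rmW}^0 = e_{i^\ast}v^\top$ for any nonzero $v \in \mathbb{R}^n$ back-rotates to a genuine $\rmW^0 = \rmQ\tilde{\rmW}^0 \in \mathbb{R}^{m\times n}$ and that the GD trajectory remains supported on row $i^\ast$, ensuring the ratio $(1-\eta\lambda_{i^\ast})^2$ is realized at every iteration.
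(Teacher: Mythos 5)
Your proposal is correct in substance but takes a genuinely different, self-contained route from the paper. The paper's proof is essentially a two-line reduction: it diagonalizes $\rmH = \rmU\rmS\rmU^\top$, changes variables to $\rmZ=\rmU^\top\rmW$, notes that the condition number is preserved under the orthogonal transformation, and then delegates both the adversarial initialization and the tightness at $\eta=2/(\lambda_1+\lambda_m)$ to Proposition~1 of \citet{zhang2024transformers} for the diagonal case. You instead re-derive everything: the row-wise decoupling $\tilde{\rmW}^{t+1}_{i,:}=(1-\eta\lambda_i)\tilde{\rmW}^{t}_{i,:}$, the per-step loss ratio as a convex combination of $(1-\eta\lambda_i)^2$, the rank-one adversarial initialization $\rmW^0=\rmQ e_{i^\ast}v^\top$ (a legitimate element of $\mathbb{R}^{m\times n}$ whose rotated row support is preserved, so the worst-case ratio is realized at every step), and the Chebyshev equi-oscillation argument showing the max over $i$ sits at an endpoint of the spectrum and the min-max over $\eta$ equals $\bigl(\tfrac{\kappa-1}{\kappa+1}\bigr)^2$. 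What your derivation buys is independence from the external reference plus an explicit worst-case value; what it also correctly surfaces is the square discrepancy you hedge on at the end. This is worth being blunt about: at $\eta^\ast=2/(\lambda_1+\lambda_m)$ \emph{every} initialization contracts the loss by a convex combination of $(1-\eta^\ast\lambda_i)^2\le\bigl(\tfrac{\kappa-1}{\kappa+1}\bigr)^2<1-\tfrac{2}{\kappa+1}$, so the un-squared lower bound on $\mathcal{L}-\mathcal{L}^\ast$ cannot simultaneously hold and be tight there; the statement is consistent only if the factor is read on $\|\rmW-\rmW^\ast\|_{\rmH}=\sqrt{2(\mathcal{L}-\mathcal{L}^\ast)}$ or if the factor is squared. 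The paper's citation-based proof silently inherits whichever convention \citet{zhang2024transformers} adopts and never resolves this, whereas your argument forces the issue; to finish, you should commit to one convention and state the bound with the matching exponent rather than leaving the parenthetical.
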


\begin{proof}  
    The proposition 1 in \citet{zhang2024transformers} has shown that the lower bound holds for diagonal positive definite Hessian $\rmH$. To show that the lower bound holds for a general positive definite Hessian $\rmH$ we will reformulate the problem to align with the setup in diagonal case (Prposition 1 of \citet{zhang2024transformers}).
  
    First, for any positive definite Hessian $\rmH$, we can perform an eigen decomposition $\rmH = \rmU \rmS \rmU^\top$, where $\rmU$ is an orthogonal matrix and $\rmS$ is a diagonal matrix containing the eigenvalues of $\rmH$. Define a change of variables $\rmZ = \rmU^\top \rmW$. Then, the optimization problem becomes  
    $$  
    \mathcal{L}(\rmZ) = \frac{1}{2} \text{Tr}(\rmZ^\top \rmS \rmZ),  
    $$  
    which reduces the problem to the case of a diagonal $\rmH$ with condition number $\kappa = \frac{\lambda_1}{\lambda_m}$, where $\lambda_1$ and $\lambda_m$ are the largest and smallest eigenvalues of $\rmH$, respectively.  
  
    Thus, by applying Proposition 1 of \citet{zhang2024transformers} to this transformed problem, we conclude that there exists initial point such that the lower bound on the contraction rate  
    $$  
    \mathcal{L}(\rmW_{GD}^{(t+1)}) - \mathcal{L}^* \geq \left(1 - \frac{2}{\kappa + 1}\right) \left(\mathcal{L}(\rmW_{GD}^{(t)}) - \mathcal{L}^*\right)  
    $$  
    holds for the transformed variables $\rmZ$ and, equivalently, for the original variables $\rmW$ since the condition number is preserved under orthogonal transformations.  
  
    Therefore, the lower bound for gradient descent applies to any general positive definite Hessian $\rmH$ provided the condition number $\kappa$ remains unchanged.  
  
    Finally, under the optimal step size $\eta = \frac{2}{\lambda_1 + \lambda_m}$, the bound becomes tight regardless of the settings of $\rmH$. This is achieved by selecting $\eta$ to minimize the contraction factor, aligning with well-known results regarding the optimal convergence rate of gradient descent on quadratic objectives \citep{nesterov2013introductory}.  
  
    This completes the proof of \Cref{cor_gd_lower_bd_general}.  
\end{proof}

\begin{corollary} [Lower bound on Adam ($\beta_2 = 1$)] \label{cor_adam_lower_bd_general} Consider the optimization problem in \Cref{eq: quadratic}. Assume the weight initialization $ \rmW^0 $ assigned zero probability to any set of zero Lebesgue measure in $\mathbb{R}^{m \times n}$.  Let $ \rmW_{\text{Adam}}^{(t)} $ be the parameter after $ t $ iterations of Adam with hyperparameters $ \beta_1 = 0 $ and $ \beta_2 = 1 $.  Then, for any step size $ \eta $, the following lower bound on the contraction rate holds:
$$
\mathcal{L}(\rmW_{Adam}^{(t+1)}) - \mathcal{L}^* \geq \left(1 - \frac{2}{\kappa'(\rmW^0) + 1}\right) \left(\mathcal{L}(\rmW_{Adam}^{(t)}) - \mathcal{L}^*\right),
$$
where $\kappa'(\rmW^0)$ is the $\rmW^0$-dependent condition number of the preconditioned Hessian $\text{diag} (|\rmH \rmW^0|^{-1}) \rmH$, and  $ \mathcal{L}^* = \mathcal{L}(\rmW^*) $.
\end{corollary}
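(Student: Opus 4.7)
}
The plan is to exploit the fact that Adam with $\beta_1=0$ and $\beta_2=1$ \emph{freezes} the preconditioner at initialization, turning the optimizer into preconditioned gradient descent with a constant preconditioner, and then conjugate away the preconditioner to reduce everything to standard gradient descent on a reshaped quadratic. The lower bound of \Cref{cor_gd_lower_bd_general} can then be invoked directly.

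First, I would write out the update explicitly. With $\beta_1=0$ and $\beta_2=1$, the Adam state collapses to $\vm^{(t)} = \rmG^{(t)} = \rmH\rmW^{(t)}$ and $\vnu^{(t)} = (\rmH\rmW^{0})^{\odot 2}$ for every $t\ge 1$, so the update reads
\begin{equation*}
   \rmW^{(t+1)} = \rmW^{(t)} - \eta\,\bigl(|\rmH\rmW^{0}|+\epsilon\bigr)^{\odot -1}\odot (\rmH\rmW^{(t)}).
\end{equation*}
Vectorising with $\vw^{(t)} := \mathrm{vec}(\rmW^{(t)})$ and writing $\vd := |\mathrm{vec}(\rmH\rmW^{0})|+\epsilon$, this becomes $\vw^{(t+1)} = \vw^{(t)} - \eta\,\mathrm{diag}(\vd)^{-1}(\rmI_{n}\otimes\rmH)\vw^{(t)}$. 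The almost-sure nonvanishing of each entry of $\vd$ follows from the same Lebesgue-measure argument as in the proof of \Cref{thm_sve_optimal}, so $\mathrm{diag}(\vd)$ is invertible.

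Next, I would perform the change of variables $\vz := \mathrm{diag}(\vd)^{1/2}\vw$. A direct computation shows
\begin{equation*}
   \mathcal{L}(\vw) = \tfrac12\vz^{\top}\tilde{\rmH}\,\vz,\qquad \vz^{(t+1)} = \vz^{(t)} - \eta\,\tilde{\rmH}\,\vz^{(t)},
\end{equation*}
where $\tilde{\rmH} := \mathrm{diag}(\vd)^{-1/2}(\rmI_{n}\otimes\rmH)\,\mathrm{diag}(\vd)^{-1/2}$ is symmetric positive definite. Hence, in the $\vz$-coordinates, the Adam trajectory is \emph{exactly} vanilla gradient descent with step size $\eta$ on a quadratic with Hessian $\tilde{\rmH}$. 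Moreover, $\tilde{\rmH}$ is similar via $\mathrm{diag}(\vd)^{1/2}$ to $\mathrm{diag}(\vd)^{-1}(\rmI_{n}\otimes\rmH)$, which is precisely the vectorised form of the operator $\mathrm{diag}(|\rmH\rmW^{0}|^{-1})\rmH$ appearing in the definition of $\kappa'(\rmW^{0})$. Consequently $\tilde{\rmH}$ and that operator share their spectrum, and in particular $\kappa(\tilde{\rmH}) = \kappa'(\rmW^{0})$.

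Finally, I would apply \Cref{cor_gd_lower_bd_general} to the gradient-descent dynamics in the $\vz$-coordinates. Because the loss value is coordinate-invariant ($\mathcal{L}(\vw^{(t)}) = \tfrac12 (\vz^{(t)})^{\top}\tilde{\rmH}\vz^{(t)}$) and $\mathcal{L}^{*}=0$, any lower bound on the one-step contraction factor of GD on $\tilde{\rmH}$ transfers verbatim to Adam on $\rmH$, yielding
\begin{equation*}
   \mathcal{L}(\rmW_{\text{Adam}}^{(t+1)}) - \mathcal{L}^{*} \;\ge\; \Bigl(1 - \tfrac{2}{\kappa'(\rmW^{0})+1}\Bigr)\bigl(\mathcal{L}(\rmW_{\text{Adam}}^{(t)}) - \mathcal{L}^{*}\bigr).
\end{equation*}
The main subtlety I anticipate is interpretive rather than computational: \Cref{cor_gd_lower_bd_general} is a worst-case-over-initial-conditions bound, and here the $\vz$-space initialisation $\vz^{0}=\mathrm{diag}(\vd)^{1/2}\vw^{0}$ is coupled to the very choice of $\vw^{0}$ that defines $\tilde{\rmH}$. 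I would handle this by picking $\vw^{0}$ so that $\vz^{0}$ aligns with the worst-case eigen-direction of $\tilde{\rmH}$ as required by the GD lower bound; the measure-zero failure set of initialisations where this alignment degenerates is excluded by the Lebesgue-measure hypothesis on $\rmW^{0}$. A secondary but minor point is that I would argue the $\epsilon$ stabiliser can be absorbed into $\vd$ without affecting the spectrum of $\tilde{\rmH}$ up to vanishingly small perturbations, so the bound is stated cleanly in terms of $\kappa'(\rmW^{0})$ alone.
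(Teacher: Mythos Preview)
Your proposal is correct and follows the same strategy as the paper's proof: observe that Adam with $\beta_1=0,\beta_2=1$ reduces to gradient descent preconditioned by the frozen diagonal $\mathrm{diag}(|\rmH\rmW^{0}|)^{-1}$, then invoke \Cref{cor_gd_lower_bd_general}. The paper's version is terser and applies \Cref{cor_gd_lower_bd_general} directly to the preconditioned update without your symmetrizing change of variables $\vz=\mathrm{diag}(\vd)^{1/2}\vw$; your conjugation step makes explicit why the non-symmetric operator $\mathrm{diag}(|\rmH\rmW^{0}|)^{-1}\rmH$ shares its spectrum with a bona fide SPD Hessian so that the GD bound legitimately transfers, and you also correctly flag the coupling between the initial condition and the preconditioner, a subtlety the paper does not discuss.
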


\begin{proof}  
    The update rule of Adam with $\beta_1 = 0$ and $\beta_2 = 1$ is given by \citet{zhang2024transformers}:  
    $$  
    \rmW_{\text{Adam}}^{(t+1)} = \rmW_{\text{Adam}}^{(t)} + \eta \, \text{diag}(|\rmH \rmW^{(0)}|^{-1}) \, \rmH \rmW_{\text{Adam}}^{(t)}.  
    $$  
    This can be interpreted as gradient descent with a preconditioned Hessian matrix $\text{diag}(|\rmH \rmW^{(0)}|^{-1}) \rmH$. By applying \Cref{cor_gd_lower_bd_general}, we conclude that the contraction rate for Adam under these settings satisfies the lower bound:  
    $$  
    \mathcal{L}(\rmW_{\text{Adam}}^{(t+1)}) - \mathcal{L}^* \geq \left(1 - \frac{2}{\kappa + 1}\right) \left(\mathcal{L}(\rmW_{\text{Adam}}^{(t)}) - \mathcal{L}^*\right),  
    $$  
    where $\kappa$ is the condition number of the Hessian matrix $\rmH$.  
      
    Therefore, the proof is complete.  
\end{proof}  

Next, We extend our \Cref{thm_sve_optimal} to block-diagonal Hessian case to prepare for discussions on group learning rates when comparing to Adam.

\begin{corollary}[\textbf{Upper Bound Convergence Rate of \name}]  
    \label{thm_mod_gd}  
    Consider the same quadratic loss function $\mathcal{L}(\rmW) = \frac{1}{2} \text{Tr}(\rmW^\top \rmH \rmW)$ with $\rmH$ being block-diagonal. That is, $
    \rmH = \text{diag}(\rmH_1, \rmH_2, \dots, \rmH_L),$
    where each $ H_l \in \mathbb{R}^{m_l \times n_l} $ is a positive definite matrix for $ l = 1, 2, \dots, L $, and $ \sum_{l=1}^L m_l =  m$ and $ \sum_{l=1}^L n_l =  n$. Assume the initialization distribution of $\rmW^{(0)}$ assignes zero probability to any zero measure set in $\mathbb{R}^{m \times n}$. Let $\rmW_{\text{whitened}}^{(t)}$ be the parameter matrix after $t$ iterations of the \name optimizer defined in \Cref{thm_sve_optimal}, with learning rate $\eta$. Then, under the conditions that \footnote{Note that according to Proposition \ref{prop_robust}, this is always achievable when $\rmH_l$ is poorly-conditioned ($\frac{\lambda_{l, d_l}}{\lambda_{l, 1}}$ is small enough). The lower interval above converges to zero, and one can simply pick e.g., $\eta = \min_{l \in [L]} \frac{\text{Tr}(\rmH_l \rmW_l^{(t)})}{\text{Tr}(\rmH_l)}$.}:  
    $$  
    \|\rmH_l \rmW_l^{(t)}\|_1^2 - \text{Tr}(\rmH_l) \cdot \text{Tr}\left((\rmW_l^{(t)})^\top \rmH_l \rmW_l^{(t)}\right) \cdot \frac{2 \lambda_{l, m_l}}{\lambda_{l, 1} + \lambda_{l, m_l}} > 0,  
    $$  
    where $\lambda_{l, 1}$ and $\lambda_{l, m_l}$ are the largest and smallest singular value of $\rmH_l$, respectively; then there exists a proper learning rate $\eta$ such that:  
    %  
    %($\eta \leq C_{l, 1} + C_{l, 2}$)  
    %  
    with probability 1, the loss satisfies:  
    \begin{equation}  
        \mathcal{L}(\rmW_{\text{whitened}}^{(t+1)}) - \mathcal{L}^* < \max_{l \in [L]} \left(1 - \frac{2}{\kappa_l + 1}\right) \left(\mathcal{L}(\rmW_{\text{whitened}}^{(t)}) - \mathcal{L}^*\right),  
    \end{equation}  
    where $\kappa_l$ is the condition number of $\rmH_l$.  
\end{corollary}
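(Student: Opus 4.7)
The plan is to exploit the block-diagonal structure of $\rmH$ to decouple the dynamics across the $L$ blocks, invoke a fixed-$\eta$ version of the per-block argument behind \Cref{thm_sve_optimal}, and then select a single global step size simultaneously good for every block. Partitioning the rows of $\rmW$ compatibly with $\rmH$ as $\rmW = [\rmW_1^\top, \dots, \rmW_L^\top]^\top$ with $\rmW_l \in \mathbb{R}^{m_l \times n}$, the loss splits as $\mathcal{L}(\rmW) = \sum_l \mathcal{L}_l(\rmW_l)$ with $\mathcal{L}_l(\rmW_l) = \tfrac{1}{2}\text{Tr}(\rmW_l^\top \rmH_l \rmW_l)$, and the gradient $\rmG = \rmH\rmW$ is itself block-diagonal in this row partition, with $l$-th block $\rmH_l \rmW_l$. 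Because $\rmG\rmG^\top$ is then block-diagonal, so is $(\rmG\rmG^\top)^{-1/2}$, and $\gradwhite$ acts block-wise: $[\gradwhite(\rmG)]_l = \gradwhite(\rmH_l \rmW_l)$. The \name{} update thus decouples into $L$ independent per-block updates $\rmW_l^{(t+1)} = \rmW_l^{(t)} - \eta\,\gradwhite(\rmH_l \rmW_l^{(t)})$.

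Next I would redo the expansion from the proof of \Cref{thm_sve_optimal} but \emph{without} optimizing in $\eta$. Using the SVD of $\rmH_l \rmW_l^{(t)}$ to evaluate the cross-term and the Hessian quadratic form exactly as in that proof gives
\begin{equation*}
\mathcal{L}_l(\rmW_l^{(t+1)}) - \mathcal{L}_l^{\star} = \bigl(\mathcal{L}_l(\rmW_l^{(t)}) - \mathcal{L}_l^{\star}\bigr) - \eta\,\|\rmH_l \rmW_l^{(t)}\|_1 + \tfrac{\eta^2}{2}\,\text{Tr}(\rmH_l),
\end{equation*}
so the per-block contraction factor $c_l(\eta)$ is an upward quadratic in $\eta$ minimized at $\eta_l^{\star} := \|\rmH_l \rmW_l^{(t)}\|_1/\text{Tr}(\rmH_l)$, with minimum value exactly the one from \Cref{thm_sve_optimal}. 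The corollary's hypothesis is precisely the statement that this minimum is strictly less than $1 - \tfrac{2}{\kappa_l+1}$ for every $l$, hence \emph{a fortiori} strictly less than $M := \max_{l'}\bigl(1 - \tfrac{2}{\kappa_{l'}+1}\bigr)$. By continuity, the set $I_l := \{\eta > 0 : c_l(\eta) < M\}$ is a nonempty open interval centered at $\eta_l^{\star}$ with half-width $\delta_l = \sqrt{(\eta_l^{\star})^2 - (1-M)\,\text{Tr}((\rmW_l^{(t)})^\top \rmH_l \rmW_l^{(t)})/\text{Tr}(\rmH_l)}$.

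The remaining task is to exhibit a single $\eta \in \bigcap_l I_l$. Following the footnote, in the ill-conditioned regime $\min_{l'}\lambda_{l',m_{l'}}/\lambda_{l',1}\to 0$ forces $1-M \to 0$, which drives $\delta_l \to \eta_l^{\star}$, so each $I_l$ approaches $(0,\,2\eta_l^{\star})$; under the alternative "regularity conditions" hinted at in the statement, \Cref{prop_robust} plays the same role by uniformly bounding $\delta_l$ from below. Either way, the candidate $\eta = \min_l \eta_l^{\star}$ lies in every $I_l$. With such an $\eta$ fixed, aggregation is immediate: since $\mathcal{L}(\rmW^{(t)}) - \mathcal{L}^{\star} = \sum_l \bigl(\mathcal{L}_l(\rmW_l^{(t)}) - \mathcal{L}_l^{\star}\bigr)$ with each summand nonnegative and $c_l(\eta) < M$ for every $l$,
\begin{equation*}
\mathcal{L}(\rmW^{(t+1)}) - \mathcal{L}^{\star} = \sum_l c_l(\eta)\bigl(\mathcal{L}_l(\rmW_l^{(t)}) - \mathcal{L}_l^{\star}\bigr) < M\bigl(\mathcal{L}(\rmW^{(t)}) - \mathcal{L}^{\star}\bigr),
\end{equation*}
which is the desired bound. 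The probability-1 clause is inherited from \Cref{thm_sve_optimal}: under the stated initialization, almost surely $\rmH_l \rmW_l^{(t)} \neq 0$ whenever $\rmW_l^{(t)} \neq \rmW_l^{\star}$, so every $\eta_l^{\star}$ is strictly positive and each quantity above is well defined.

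The main obstacle is the common-$\eta$ step: each per-block inequality $c_l(\eta) < M$ reserves only an open interval around the block-specific $\eta_l^{\star}$, and those intervals need not overlap in general. This is precisely why the corollary relies on a side condition (the "regularity conditions" or, alternatively, that every $\rmH_l$ be poorly conditioned): both routes enlarge each $I_l$ enough to ensure $\bigcap_l I_l \neq \varnothing$ and to validate the universal choice $\eta = \min_l \eta_l^{\star}$; absent such a condition, no single feasible $\eta$ need exist, which is a genuine limitation rather than a technical one.
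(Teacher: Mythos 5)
Your proposal follows essentially the same route as the paper's proof: expand the per-block loss decrease as a quadratic in $\eta$, observe that the stated hypothesis is exactly positivity of the discriminant needed for the per-block contraction factor to drop below $1-\tfrac{2}{\kappa_l+1}$, and then sum over blocks and bound by the worst factor $\max_l\bigl(1-\tfrac{2}{\kappa_l+1}\bigr)$. Your handling of the common-$\eta$ step is in fact more careful than the paper's, which writes down the per-block admissible interval and implicitly assumes a single $\eta$ lies in all of them; the explicit overlap condition only appears later, in the proof of Proposition~\ref{thm_sve_dam_group}, so your remark that the intervals need not intersect absent a side condition is a fair reading of a real looseness in the paper's own argument.

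One step of your justification is wrong as written, however. With the row-only partition $\rmW_l \in \mathbb{R}^{m_l\times n}$ that you adopt, $\rmG=\rmH\rmW$ is a vertical stack of blocks $\rmG_l=\rmH_l\rmW_l$, and the $(l,l')$ block of $\rmG\rmG^\top$ is $\rmG_l\rmG_{l'}^\top$, which does not vanish for $l\neq l'$ in general; hence $\rmG\rmG^\top$ is \emph{not} block-diagonal, and $\gradwhite$ applied to the full stacked gradient does not decouple into $\gradwhite(\rmH_l\rmW_l)$. The decoupling you need holds only if the whitening is applied separately to each block --- which is how \name{} is actually run (per weight matrix) and is consistent with the corollary partitioning both dimensions, $\sum_l m_l=m$ and $\sum_l n_l=n$; the paper's proof simply asserts that "the updates for each block are independent" under this reading. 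Replace the false claim about block-diagonality of $\rmG\rmG^\top$ with the assumption of per-block whitening, and the rest of your argument goes through.
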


\begin{proof}  
  
    Applying the arguments in the proof of Theorem \ref{thm_sve_optimal} to each block $l$, we have (for simplicity, we will drop the subscript ``whitened'' when there is no confusion):  
    \begin{align*}  
        \frac{\mathcal{L}(\rmW_l^{(t+1)}) - \mathcal{L}^*}{\mathcal{L}(\rmW_l^{(t)}) - \mathcal{L}^*} &= 1 - \frac{\mathcal{L}(\rmW_l^{(t)}) - \mathcal{L}(\rmW_l^{(t+1)})}{\mathcal{L}(\rmW_l^{(t)}) - \mathcal{L}^*} \\  
        &= 1 - \frac{\eta \|\rmH_l \rmW_l^{(t)}\|_1 - \frac{\eta^2}{2} \text{Tr}(\rmH_l)}{\frac{1}{2} \text{Tr}\left[(\rmW_l^{(t)})^\top \rmH_l \rmW_l^{(t)}\right]}.  
    \end{align*}  
      
    It is straightforward to verify via the quadratic formula that if one chooses $\eta$ satisfying:  
    \begin{align*}  
       & \frac{\|\rmH_l \rmW_l^{(t)}\|_1 - \sqrt{\|\rmH_l \rmW_l^{(t)}\|_1^2 - \text{Tr}(\rmH_l) \cdot \text{Tr}((\rmW_l^{(t)})^\top \rmH_l \rmW_l^{(t)}) \cdot \frac{2 \lambda_{l, m_l}}{\lambda_{l, 1} + \lambda_{l, m_l}}}}{\text{Tr}(\rmH_l)} < \eta \\  
       & < \frac{\|\rmH_l \rmW_l^{(t)}\|_1 + \sqrt{\|\rmH_l \rmW_l^{(t)}\|_1^2 - \text{Tr}(\rmH_l) \cdot \text{Tr}((\rmW_l^{(t)})^\top \rmH_l \rmW_l^{(t)}) \cdot \frac{2 \lambda_{l, m_l}}{\lambda_{l, 1} + \lambda_{l, m_l}}}}{\text{Tr}(\rmH_l)},  
    \end{align*}  
      
    then we have:  
    \begin{align*}  
          \frac{\frac{1}{2} \text{Tr}\left[(\rmW_l^{(t)})^\top \rmH_l \rmW_l^{(t)}\right]}{\eta \|\rmH_l \rmW_l^{(t)}\|_1 - \frac{\eta^2}{2} \text{Tr}(\rmH_l)} < \frac{\kappa_l + 1}{2}.  
    \end{align*}  
  
    Rearranging, we obtain:  
    $$  
    \frac{\mathcal{L}(\rmW_l^{(t+1)}) - \mathcal{L}^*}{\mathcal{L}(\rmW_l^{(t)}) - \mathcal{L}^*} < 1 - \frac{2}{\kappa_l + 1}.  
    $$  
      
    Since $\rmH$ is block-diagonal, the updates for each block $l$ are independent. Summing the loss over all blocks, we obtain:  
    $$  
    \mathcal{L}(\rmW_{\text{whitened}}^{(t+1)}) - \mathcal{L}^* = \sum_{l=1}^L \left(\mathcal{L}(\rmW_l^{(t+1)}) - \mathcal{L}^*\right) < \sum_{l=1}^L \left(1 - \frac{2}{\kappa_l + 1}\right) \left(\mathcal{L}(\rmW_l^{(t)}) - \mathcal{L}^*\right).  
    $$  
    Taking the maximum contraction factor across all blocks:  
    \begin{align*}
    \mathcal{L}(\rmW_{\text{whitened}}^{(t+1)}) - \mathcal{L}^* <& \max_{l \in [L]} \left(1 - \frac{2}{\kappa_l + 1}\right) \sum_{l=1}^L \left(\mathcal{L}(\rmW_l^{(t)}) - \mathcal{L}^*\right)\\
    =& \max_{l \in [L]} \left(1 - \frac{2}{\kappa_l + 1}\right) \left(\mathcal{L}(\rmW_{\text{whitened}}^{(t)}) - \mathcal{L}^*\right).  
    \end{align*} 
      
    Thus, the overall contraction factor for the \name optimizer is:  
    $$  
    \rho_{\text{\name}} = \max_{l \in [L]} \left(1 - \frac{2}{\kappa_l + 1}\right).  
    $$  
      
\end{proof}

Finally, we are ready to prove \Cref{thm_sve_dam_group}:

\setcounter{proposition}{2}

{  

\renewcommand\theproposition{2} % Hard code the proposition number to 2  
\begin{proposition}[\textbf{$\gradwhite$ with single lr vs Adam with tuned group lr}]  
Consider the optimization problem \Cref{eq: quadratic}. Assume  $ \rmH $ is block-diagonal, i.e.,    $    \rmH = \text{diag}(\rmH_1, \rmH_2, \dots, \rmH_L),    $    where each $ \rmH_l \in \mathbb{R}^{m_l \times m_l} $ is a positive definite matrix for $ l = 1, 2, \dots, L $, and $ \sum_{l=1}^L m_l = m $.  Assuming for $\gradwhite$ we use one global learning rate for all parameters; and for Adam, we use the optimally chosen group learning rate $\eta_l$ and initial condition $w_0$ for each block $\rmH_l$.   
Assume either if i) certain regularity conditions are met (see proof in Appendix), or ii), if $\rmH$ is poorly-conditioned (its condition number is large enough). Then: regardless of its initialization, $\gradwhite$ with a properly chosen learning rate will still have a strictly better convergence speed (i.e., smaller contraction factor) across all blocks $l \in [L]$ than Adam ($\beta_1=0, \beta_2 = 1$) under optimal group-wise learning rates and initial condition.  
\end{proposition}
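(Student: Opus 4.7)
My plan exploits the block-diagonal structure of $\rmH$ to decouple both algorithms across blocks: Adam with $\beta_1=0,\beta_2=1$ acts element-wise, and $\gradwhite$ (applied per-parameter-block) acts independently on each $\rmH_l$. So it suffices to prove the strict contraction inequality \emph{per block}, which already matches the ``strictly better across all blocks $l\in[L]$'' conclusion.

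For the $\gradwhite$ side, I would apply \Cref{thm_mod_gd} to each block $l$: with a suitably chosen single global learning rate $\eta$ (the corollary shows that such an $\eta$ exists, and in particular always exists in the ill-conditioned regime), the per-block contraction is strictly less than $1-\tfrac{2}{\kappa_l+1}$, where $\kappa_l$ is the condition number of $\rmH_l$. Crucially, \Cref{prop_robust} applied block-wise further guarantees that this upper bound stays bounded away from $1$ even as $\kappa_l\to\infty$: there exists $c_l>0$ independent of $\kappa_l$ such that the $\gradwhite$ contraction on block $l$ is at most $1-c_l$.

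For the Adam side, I would apply \Cref{cor_adam_lower_bd_general} to each block individually. For any group learning rate $\eta_l$ and any per-block initialization $\rmW^{(0)}_l$, Adam's contraction on block $l$ is lower-bounded by $1-\tfrac{2}{\kappa_l'(\rmW^{(0)}_l)+1}$, where $\kappa_l'(\rmW^{(0)}_l)$ is the condition number of the \emph{fixed} diagonal-preconditioned matrix $\text{diag}(|\rmH_l\rmW^{(0)}_l|^{-1})\rmH_l$; fixedness follows from $\beta_2=1$. Since Adam is allowed to optimize over $(\eta_l,\rmW^{(0)}_l)$, the effective question reduces to a purely spectral one: how small can $\kappa_l'(\rmW^{(0)}_l)$ be made by choosing the reachable diagonal preconditioner $\text{diag}(|\rmH_l\rmW^{(0)}_l|^{-1})$?

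The main obstacle is closing the gap between the two bounds, and it is here that the two cases enter. For case (ii), when $\kappa_l$ is large enough, I would show that any reachable diagonal preconditioner has entries sandwiched by spectral bounds of $\rmH_l$, so $\kappa_l'(\rmW^{(0)}_l)$ must grow with $\kappa_l$ uniformly over $\rmW^{(0)}_l$; the Adam lower bound $1-\tfrac{2}{\kappa_l'+1}$ then tends to $1$, while the $\gradwhite$ upper bound stays at most $1-c_l$ by \Cref{prop_robust}, yielding the strict separation. Case (i) requires a regularity assumption essentially stipulating that $\rmH_l$ is ``sufficiently non-diagonal'', so that no scaling of the form $\text{diag}(|\rmH_l \rmW^{(0)}_l|^{-1})\rmH_l$ can drive $\kappa_l'$ below $\kappa_l$; formally, a minimax condition $\min_{\rmW^{(0)}_l}\kappa_l'(\rmW^{(0)}_l)>\kappa_l$ over admissible initializations. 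Proving this minimax lower bound is the hardest step, as it must rule out any fortunate alignment between the direction $\rmH_l\rmW^{(0)}_l$ and the eigenstructure of $\rmH_l$ that would let a purely diagonal rescaling mimic true whitening; I would approach it via a compactness argument on the unit sphere of admissible directions together with the classical van der Sluis bound relating optimal diagonal preconditioning to the off-diagonal mass of $\rmH_l$.
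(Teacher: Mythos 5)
Your block decoupling and your use of \Cref{cor_adam_lower_bd_general} for the Adam side match the paper, but the pivot of your comparison is wrong and creates a genuine gap. You propose to bound the $\gradwhite$ contraction by $1-\tfrac{2}{\kappa_l+1}$ (with $\kappa_l$ the condition number of the raw block $\rmH_l$) and then close the argument by showing that Adam's preconditioned condition number satisfies $\kappa_l'(\rmW_l^{(0)})\geq\kappa_l$ uniformly over admissible initializations. That spectral comparison is false in general: since $\rmW_l^{(0)}$ is free and $\rmH_l$ is invertible, the reachable preconditioners $\text{diag}(|\rmH_l\rmW_l^{(0)}|^{-1})$ sweep out essentially all positive diagonal scalings, and diagonal scaling can drastically \emph{reduce} the condition number of a non-diagonal SPD matrix (e.g., $\rmH_l=\bigl(\begin{smallmatrix}100 & 1\\ 1 & 1\end{smallmatrix}\bigr)$ has $\kappa_l\approx 100$ but its unit-diagonal rescaling has condition number about $1.22$). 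Moreover, the constant-gradient initialization always realizes the identity scaling, so $\min_{\rmW_l^{(0)}}\kappa_l'\leq\kappa_l$ always holds and your proposed minimax regularity condition $\min_{\rmW_l^{(0)}}\kappa_l'>\kappa_l$ is vacuous. The same issue breaks your case (ii): $\kappa_l'$ need not grow with $\kappa_l$, so Adam's lower bound need not tend to $1$ even when $\rmH_l$ is arbitrarily ill-conditioned.

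The paper avoids this comparison entirely. In its proof, the quadratic-in-$\eta$ analysis from \Cref{thm_sve_optimal} is re-run with the target contraction set to $1-\tfrac{2}{\kappa_l'(\rmW_l^{(0)})+1}$ itself --- note that the discriminant and overlap conditions in the paper's Case~1 are written with $\lambda_{l,1}(\rmW_l^{(0)})$ and $\lambda_{l,m_l}(\rmW_l^{(0)})$, the extreme eigenvalues of the \emph{preconditioned} Hessian $\text{diag}(|\rmH_l\rmW_l^{(0)}|^{-1})\rmH_l$, not of $\rmH_l$. The ``regularity conditions'' of case (i) are precisely (a) positivity of these discriminants and (b) a nonempty intersection of the per-block learning-rate intervals, which directly yields a single global $\eta$ beating $1-\tfrac{2}{\kappa_l'+1}$ for every $l$; case (ii) uses \Cref{prop_robust} to show the lower endpoints of those intervals vanish as the (preconditioned) conditioning degrades, so $\eta=\min_l\text{Tr}(\rmH_l\rmW_l^{(t)})/\text{Tr}(\rmH_l)$ suffices. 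To repair your argument you would need to re-target your $\gradwhite$ bound at $\kappa_l'$ rather than $\kappa_l$; no van der Sluis or compactness machinery is then required.
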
  
}  

\begin{proof} 
For simplicity, we will drop the subscript ``whitened'' when there is no confusion. Let $\kappa'_l(\rmW_l^{(0)})$ denote the $ \rmW_l^{(0)} $-dependent condition number of the $ l $-th block preconditioned Hessian $\text{diag}(\left|\rmH_l \rmW_l^{(0)}\right|^{-1}) \rmH_l$. Let $\lambda_{l, m_l}(\rmW_l^{(0)})$ and $\lambda_{l, 1}(\rmW_l^{(0)})$ be the smallest and largest eigenvalues of $\text{diag}(\left|\rmH_l \rmW_l^{(0)}\right|^{-1}) \rmH_l$, respectively. Then,  
  
    \paragraph{Case 1:} Under the conditions that:  
    \begin{enumerate}  
        \item \textbf{Existence of roots:} $\forall l \in [L]$,   
        $$  
        \|\rmH_l \rmW_l^{(t)}\|_1^2 - \text{Tr}(\rmH_l) \cdot \text{Tr}\left((\rmW_l^{(t)})^\top \rmH_l \rmW_l^{(t)}\right) \cdot \frac{2 \lambda_{l, m_l}(\rmW_l^{(0)})}{\lambda_{l, 1}(\rmW_l^{(0)}) + \lambda_{l, m_l}(\rmW_l^{(0)})} > 0,  
        $$  
        and  
        \item \textbf{Overlap condition:}  
        \begin{align*}  
            & \min_{l \in [L]} \frac{\text{Tr}(\rmH_l \rmW_l^{(t)}) + \sqrt{\text{Tr}(\rmH_l \rmW_l^{(t)})^2 - \text{Tr}(\rmH_l) \cdot \text{Tr}\left((\rmW_l^{(t)})^\top \rmH_l \rmW_l^{(t)}\right) \cdot \frac{2 \lambda_{l, m_l}(\rmW_l^{(0)})}{\lambda_{l, 1}(\rmW_l^{(0)}) + \lambda_{l, m_l}(\rmW_l^{(0)})}}}{\text{Tr}(\rmH_l)} \\  
            & > \max_{l \in [L]} \frac{\text{Tr}(\rmH_l \rmW_l^{(t)}) - \sqrt{\text{Tr}(\rmH_l \rmW_l^{(t)})^2 - \text{Tr}(\rmH_l) \cdot \text{Tr}\left((\rmW_l^{(t)})^\top \rmH_l \rmW_l^{(t)}\right) \cdot \frac{2 \lambda_{l, m_l}(\rmW_l^{(0)})}{\lambda_{l, 1}(\rmH_l^{(0)}) + \lambda_{l, m_l}(\rmW_l^{(0)})}}}{\text{Tr}(\rmH_l)}.  
        \end{align*}  
    \end{enumerate}  
    Then, there exists a global learning rate $\eta$, such that for all $l \in [L]$,  
    \begin{align*}  
        \frac{\mathcal{L}(\rmW_{\text{whitened}}^{(t+1)})_l - \mathcal{L}^*_l}{\mathcal{L}(\rmW_{\text{whitened}}^{(t)})_l - \mathcal{L}^*_l} < 1 - \frac{2}{\kappa'_l(\rmW_l^{(0)}) + 1} \leq \frac{\mathcal{L}(\rmW_{\text{Adam}}^{(t+1)})_l - \mathcal{L}^*_l}{\mathcal{L}(\rmW_{\text{Adam}}^{(t)})_l - \mathcal{L}^*_l}.  
    \end{align*}  
  
    \paragraph{Case 2:} If $\rmH$ is poorly-conditioned, i.e., $\frac{\lambda_{l, m_l}(\rmW_l^{(0)})}{\lambda_{l, 1}(\rmW_l^{(0)})} \xrightarrow{} 0$, then \Cref{prop_robust} asserts that the following term  
    $$  
    \max_{l \in [L]} \frac{\text{Tr}(\rmH_l \rmW_l^{(t)}) - \sqrt{\text{Tr}(\rmH_l \rmW_l^{(t)})^2 - \text{Tr}(\rmH_l) \cdot \text{Tr}\left((\rmW_l^{(t)})^\top \rmH_l \rmW_l^{(t)}\right) \cdot \frac{2 \lambda_{l, m_l}(\rmW_l^{(0)})}{\lambda_{l, 1}(\rmW_l^{(0)}) + \lambda_{l, m_l}(\rmW_l^{(0)})}}}{\text{Tr}(\rmH_l)} \xrightarrow{} 0,  
    $$  
    and one can simply choose, for example, $\eta = \min_{l \in [L]} \frac{\text{Tr}(\rmH_l \rmW_l^{(t)})}{\text{Tr}(\rmH_l)}$. Under this choice of $\eta$, we still have  
    \begin{align*}  
        \frac{\mathcal{L}(\rmW_{\text{whitened}}^{(t+1)})_l - \mathcal{L}^*_l}{\mathcal{L}(\rmW_{\text{whitened}}^{(t)})_l - \mathcal{L}^*_l} < 1 - \frac{2}{\kappa'_l(\rmW_l^{(0)}) + 1} \leq \frac{\mathcal{L}(\rmW_{\text{Adam}}^{(t+1)})_l - \mathcal{L}^*_l}{\mathcal{L}(\rmW_{\text{Adam}}^{(t)})_l - \mathcal{L}^*_l}  
    \end{align*}  
    for all $l \in [L]$.  
\end{proof}

\section{Proof of \Cref{prop_llm_hessian_main}}

\begin{proof}  
    First, define $\rmV \in \mathbb{R}^{M_C \times n}$ as $\rmV := \rmU_C^\top \rmW$, and consider the Hessian with respect to $\rmV$ instead of $\rmW$. Notice that although the loss function $\mathcal{L}$ is unknown, its first-order derivatives are known. Specifically, they are given by:  
    $$  
    \frac{\partial \mathcal{L}}{\partial v_{lk}} = \dot{v}_{lk} = \mathbb{E}_{q=m} \left[ g_{h_k} x_l \right] e^{\frac{1}{2} \sum_{s} v_{ls}^2}.  
    $$  
      
    Therefore, the second-order derivatives, i.e., the Hessian matrix $\rmH(\rmV)$, are:  
    $$  
    \rmH(\rmV)_{lk,l'k'} = \frac{\partial^2 \mathcal{L}}{\partial v_{lk} \partial v_{l'k'}} = \frac{\partial \left[ \mathbb{E}_{q=m} \left[ g_{h_k} x_l \right] e^{\frac{1}{2} \sum_{s} v_{ls}^2} \right]}{\partial v_{l'k'}} = \dot{v}_{lk} v_{l'k'} \delta_{ll'},  
    $$  
    where $\delta_{ll'}$ is the Kronecker delta, which is $1$ if $l = l'$ and $0$ otherwise.  
      
    Based on Lemma B.6 in \citet{Zhao2024GaLoreML}, as $t \to \infty$, there exists an index subset $O_l \subset \{1, \dots, M_C\}$ such that:  
    $$  
    v_{l^* k} \gg v_{lk}, \quad \dot{v}_{l^* k} \gg \dot{v}_{lk}, \quad \forall l^* \in O_l, \; l \notin O_l, \; \forall k.  
    $$  
    Consequently,  
    $$  
    \rmH(\rmV)_{l^* k, l^* k'} \gg \rmH(\rmV)_{lk, l'k'}, \quad \forall l^* = l'^* \in O_l, \; l, l' \notin O_l, \; \forall k, k'.  
    $$  
      
    After normalization, as $t \to \infty$, we have  
    $$  
    \frac{\rmH(\rmV)_{lk,l'k'}}{\sum_{l,l'} \rmH(\rmV)_{lk,l'k'}} \xrightarrow{t \to \infty}   
    \begin{cases}  
        1, & \text{if } l = l' \in O_l, \\  
        0, & \text{otherwise}.  
    \end{cases}  
    $$  
      
    Reverting back to the $\rmW$ space, we have  
    $$  
    \rmH(\rmW) = (\rmI_K \otimes \rmU_C) \rmH(\rmV) (\rmI_K \otimes \rmU_C)^\top,  
    $$  
    where $\otimes$ denotes the Kronecker product and $\rmI_K$ is the identity matrix of appropriate dimensions.  
      
    Therefore, for all $1 \leq s, s' \leq d$ and $1 \leq k, k' \leq n$, we obtain  
    $$  
    \frac{\rmH(\rmW)_{sk,s'k'}}{\sum_{s,s'} \rmH(\rmW)_{sk,s'k'}} = \frac{\rmH(\rmW)_{sk',s'k'}}{\sum_{s,s'} \rmH(\rmW)_{sk',s'k'}} \quad \text{as } t \to \infty.  
    $$  
    This holds for all $1 \leq s, s' \leq d$ and $1 \leq k, k' \leq n$.  
\end{proof}

\section{Empirical verification of theoretical insights from \Cref{sec: analysis} regarding $\gradnorm$ and $\gradwhite$} \label{app: verification}

\subsubsection{Does $\gradnorm$ Stabilize Gradient Distributions of SGD?}  \label{sec: gradnorm_exp}

\begin{figure}[t!]
    \centering
    % First row
    \begin{subfigure}[b]{0.49\textwidth}
        \centering
        \includegraphics[width=\textwidth]{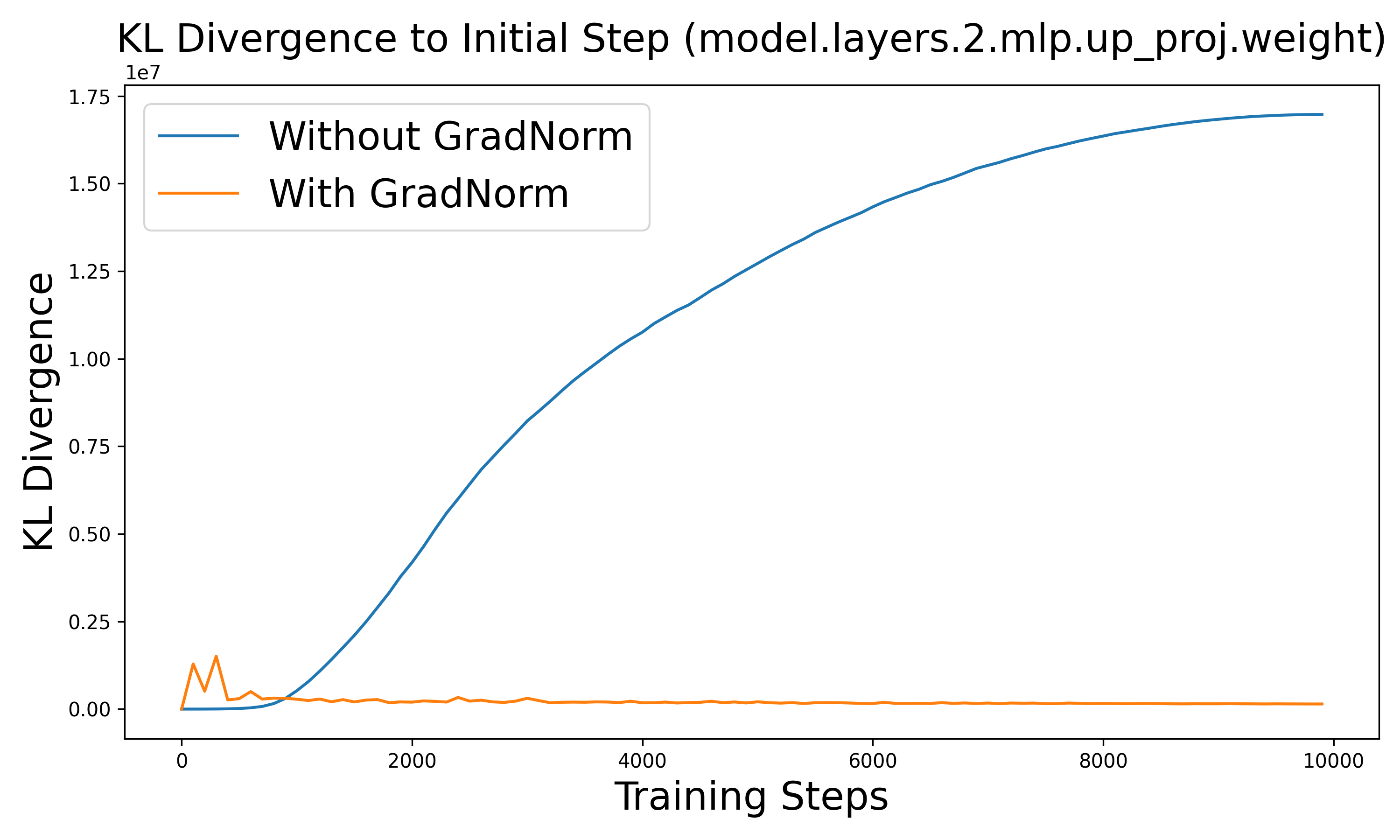} % Replace with your image
        \caption{}
        \label{fig:sub1}
    \end{subfigure}
    \hfill
    \begin{subfigure}[b]{0.49\textwidth}
        \centering
        \includegraphics[width=\textwidth]{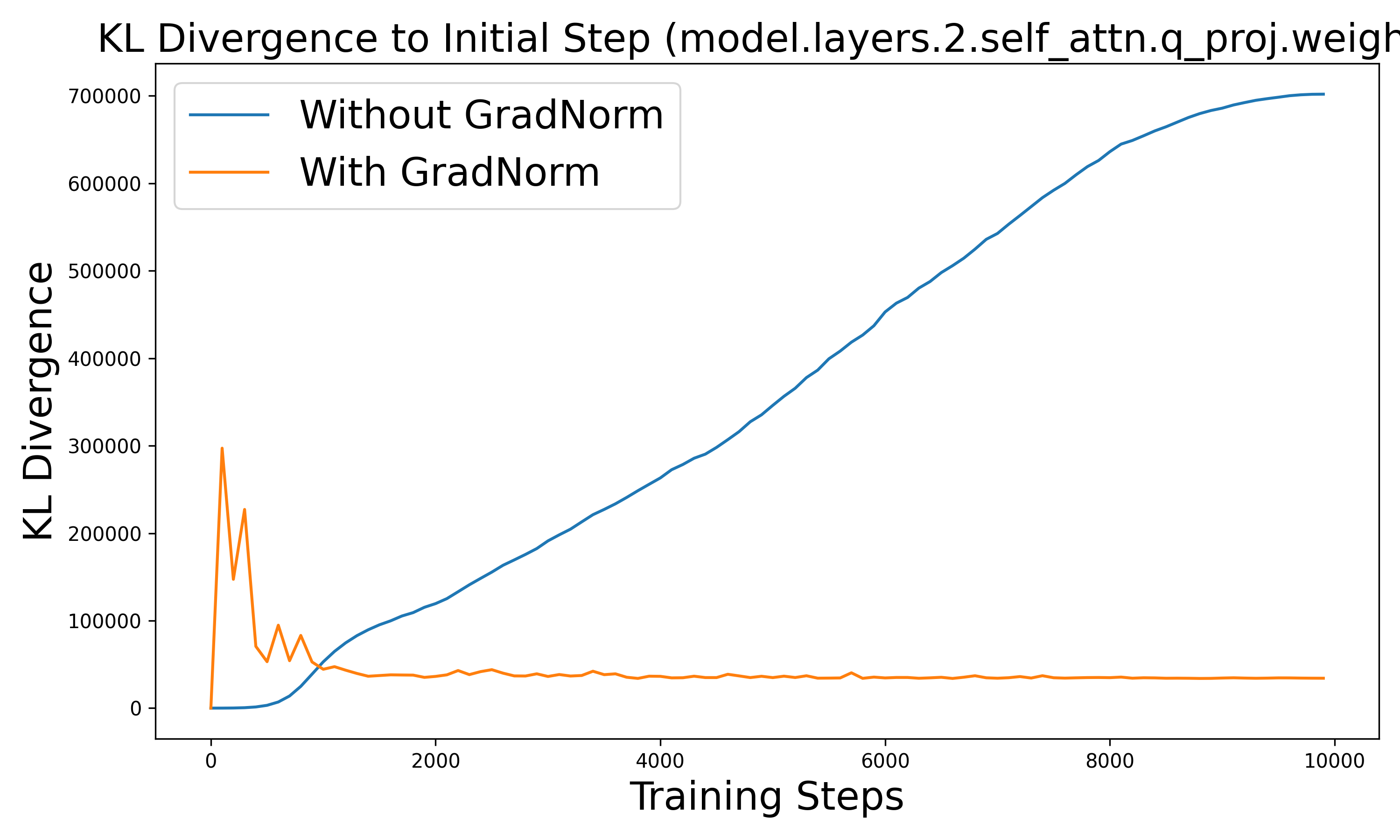} % Replace with your image
        \caption{}
        \label{fig:sub2}
    \end{subfigure}
    % Second row
    % \vskip\baselineskip
    % \begin{subfigure}[b]{0.49\textwidth}
    %     \centering
    %     \includegraphics[width=\textwidth]{} % Replace with your image
    %     \caption{}
    %     \label{fig:sub3}
    % \end{subfigure}
    % \hfill
    % \begin{subfigure}[b]{0.49\textwidth}
    %     \centering
    %     \includegraphics[width=\textwidth]{} % Replace with your image
    %     \caption{}
    %     \label{fig:sub4}
    % \end{subfigure}
    \caption{ KL divergence comparison of gradient distributions against initial gradient distribution across training Steps. We use the projection weights in attention and MLP modules of the second layer as an example. The plots compare standard training with $\gradnorm$-augmented training. Lower KL divergence values indicate greater stability in gradient distributions. 
    % (d): Training loss curve comparing standard training vs $\gradnorm$-augmented training
    }
    \label{fig: gradnorm_toy}
\end{figure}

To examine whether $\gradnorm$ stabilizes the distribution of the stochastic gradients as suggested by \Cref{thm: stability}, we conduct controlled experiments using a scaled-down LLaMA-based model (about 10 million parameters)~\citep{Lialin2023ReLoRAHT}, trained on the C4 dataset. Our goal is to measure how $\gradnorm$ affects the distribution of stochastic gradients over multiple training steps.
Specifically, we employ a small-scale LLaMA-based model with approximately 10 million parameters ~\citep{Lialin2023ReLoRAHT}. Training is conducted on the C4 dataset~\citep{2020t5}.

\paragraph{Baselines} We compare:
\begin{itemize}
    \item \textbf{Standard training}: This uses an SGD optimizer with a learning rate of $5 \times 10^{-4}$ and a linear learning rate scheduler, including a 10\% warm-up of total training steps (10,000 steps).
    \item \textbf{$\gradnorm$-processed training}: This applies $\gradnorm$ to pre-process the stochastic gradient before the parameter update. All other settings match the standard training baseline.
\end{itemize}

 \paragraph{Methodology} We measure gradient statistics in the presence of mini-batch noise. At step $t=0$, we sample 16 additional mini-batches (batch size 64 each) and compute the mean and standard deviation of the corresponding raw or $\gradnorm$ gradients in each batch, and obtain the approximated initial gradient distribution.
 After each training step of baseline methods, we perform the same procedure and calculate the Kullback-Leibler divergence between the resulting gradient distributions and the initial gradient distributions. This process tracks how the gradient distribution changes over time.

\paragraph{Results} \Cref{fig: gradnorm_toy} shows the KL divergence of gradient distributions for standard and $\gradnorm$-augmented training, relative to the corresponding initial approximated distributions. Apart from early spikes, $\gradnorm$ reduces fluctuations in the gradient distribution throughout training. 

\subsubsection{Does $\gradwhite$ Counteracts Local Curvature and Provide Fast Convergence on ill-conditioned problems?} \label{sec: whitening_exp}

This subsection evaluates the optimization performance of gradient descent when combined with $\gradwhite$. We use three classic problem settings:

\begin{figure}[t!]
    \centering
    % First row
    \begin{subfigure}[b]{0.49\textwidth}
        \centering
        \includegraphics[width=\textwidth]{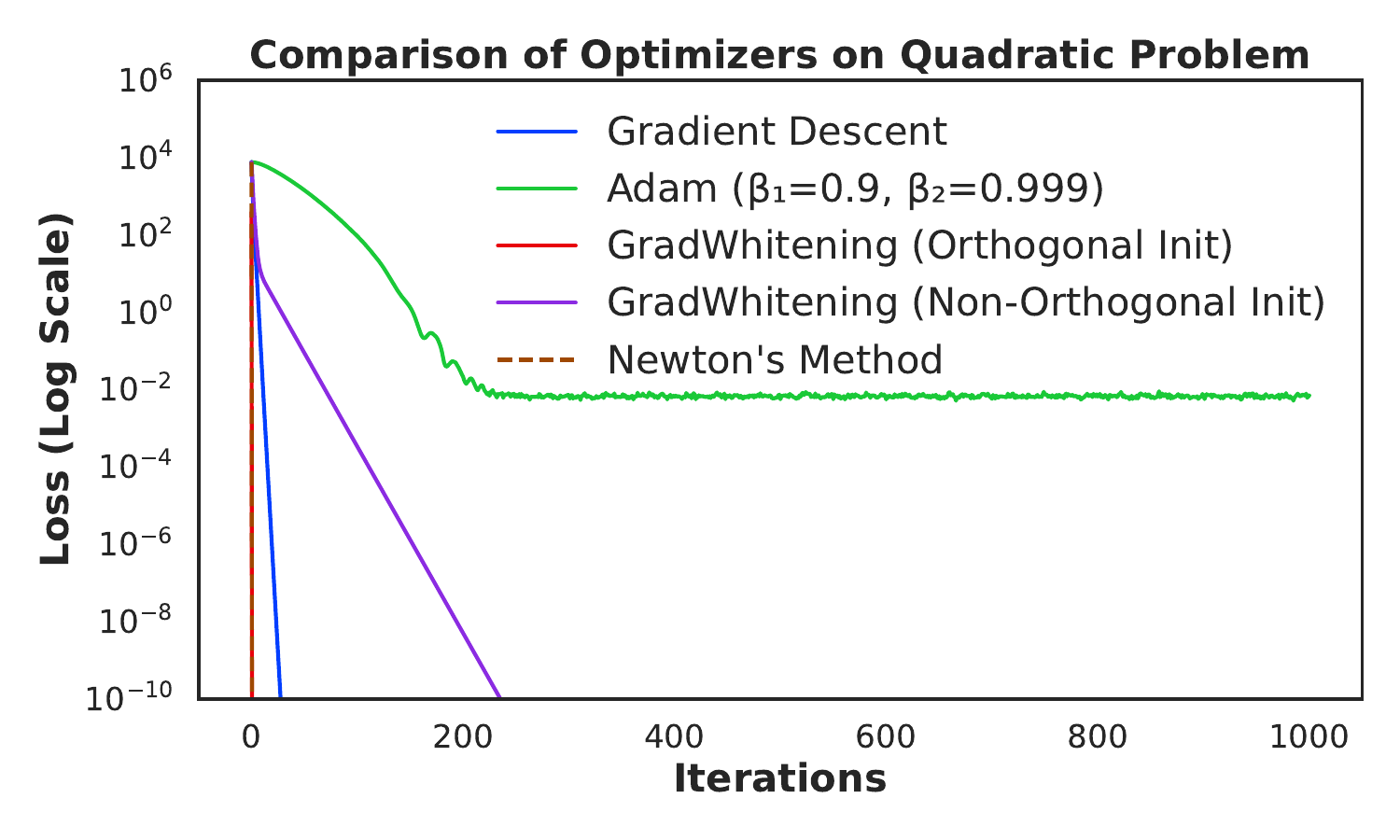} % Replace with your image
        \caption{Quadratic optimization, well-conditioned}
    \end{subfigure}
    \hfill
    \begin{subfigure}[b]{0.49\textwidth}
        \centering
        \includegraphics[width=\textwidth]{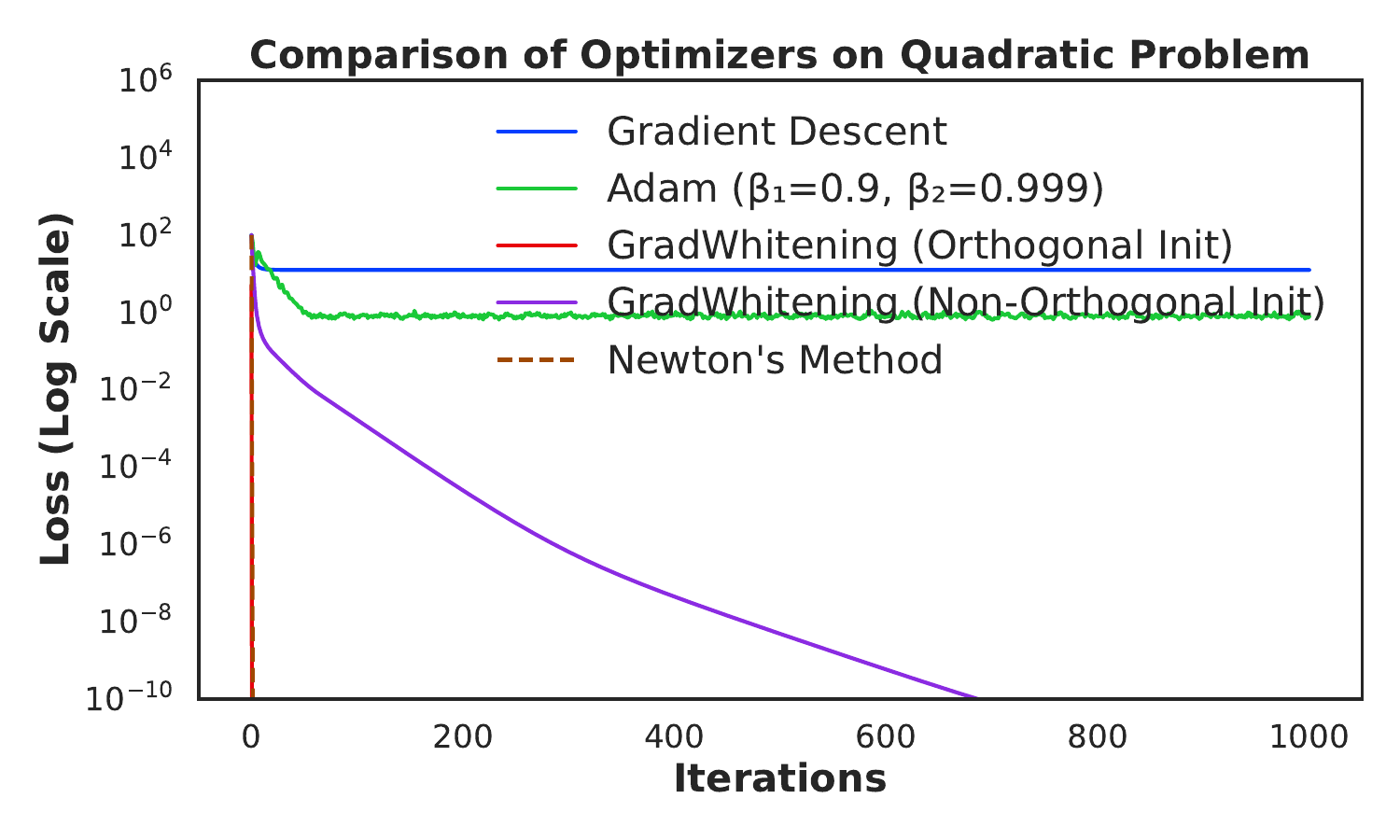} % Replace with your image
        \caption{Quadratic optimization, ill-conditioned}
    \end{subfigure}
    \hfill
    \vskip\baselineskip
    \begin{subfigure}[b]{0.49\textwidth}
        \centering
        \includegraphics[width=\textwidth]{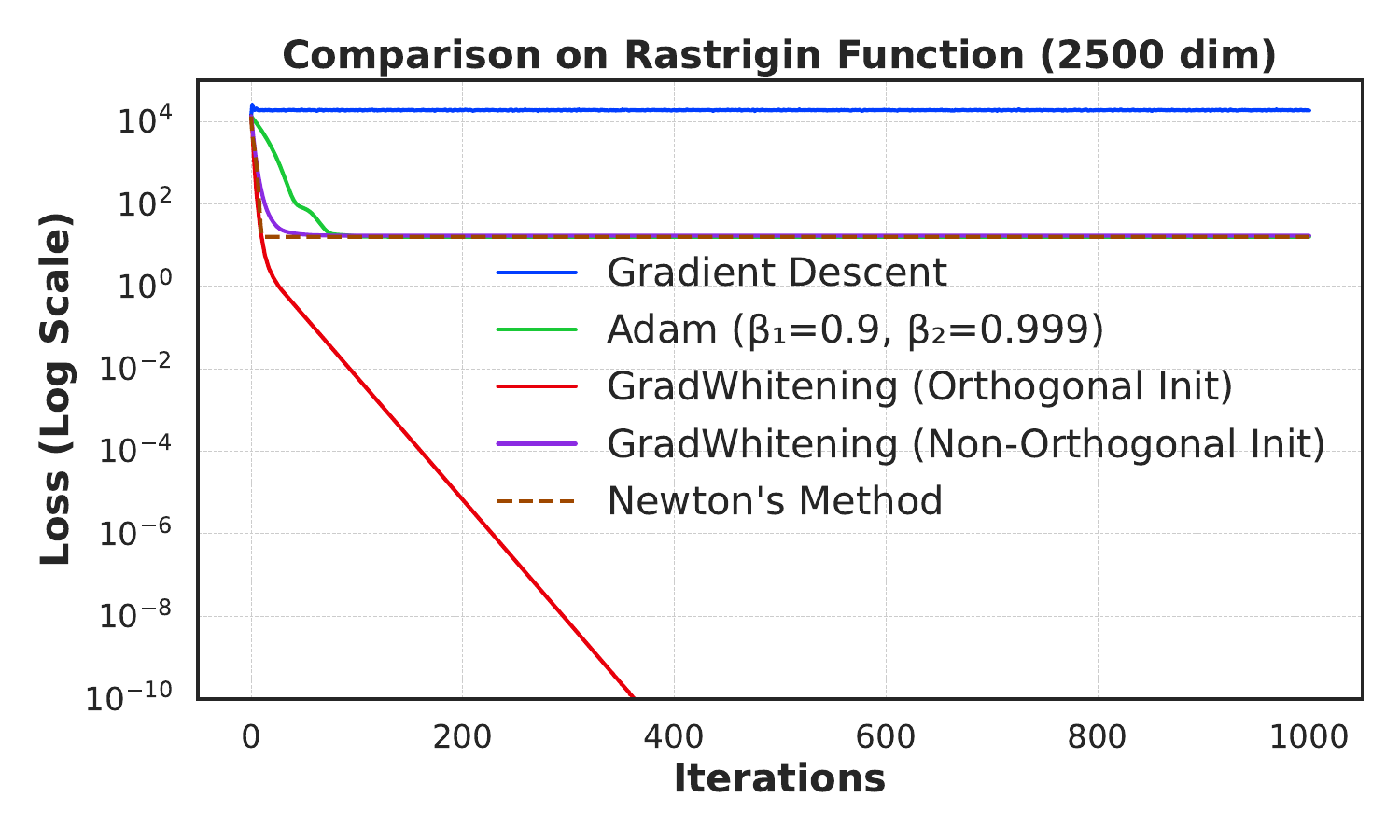} % Replace with your image
        \caption{Rastrigin function optimization}
    \end{subfigure}
    \hfill
    \begin{subfigure}[b]{0.49\textwidth}
        \centering
        \includegraphics[width=\textwidth]{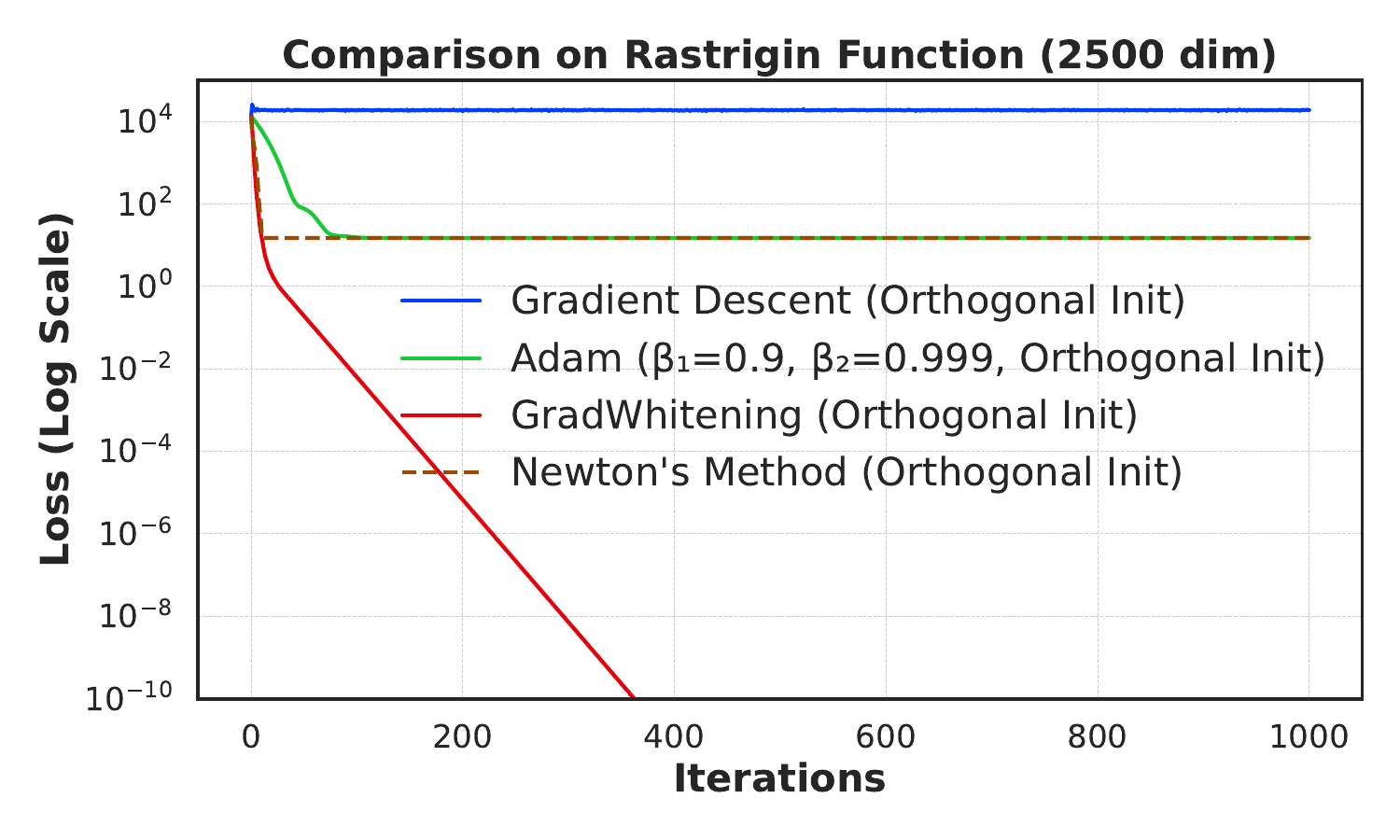} % Replace with your image
        \caption{Effect of initialization}
    \end{subfigure}
    \caption{ Comparison of convergence rate of different methods on quadratic and non-convex optimization problems. (a): 2500-dimensional quadratic optimization with well-conditioned $\rmH$. (b): 2500-dimensional quadratic optimization with ill-conditioned $\rmH$ (c): 2500-dimensional Rastrigin function optimization. (d): 2500-dimensional Rastrigin function optimization, but forcing all methods to use the same orthogonal initial location.}
    \label{fig: gradwhite_toy}
\end{figure}

\begin{itemize}
    \item \textbf{High-dimensional quadratic optimization.} A quadratic problem of the form in \Cref{eq: quadratic}, where $\rmW \in \mathbb{R}^{50 \times 50}$. 
    \item \textbf{Ill-conditioned quadratic optimization.} Same setup as above, but with a deliberately chosen ill-conditioned $\rmH$. 
    \item \textbf{Non-convex optimization with multiple local optima.} We use the multivariate Rastrigin function:
    \[
      f(\rmW) = m^2 \rmA + \frac{1}{2}\text{Tr}[\rmW^\top \rmW] - \rmA \sum_{ij} \text{cos}(2\pi W_{ij}),
    \]
    where $\rmW$ is an $m \times m$ matrix and $m = 50$. This function has $10^{m^2}$ possible local optima.
\end{itemize}

\paragraph{Baselines} We compare five methods on all three problems: gradient descent (GD) with the theoretical optimal learning rate in \Cref{cor_gd_lower_bd_general}, Adam with $\beta_1 = 0.9$, $\beta_2 = 0.999$ and a hand-tuned learning rate, Newton's method with a tuned learning rate, and two $\gradwhite$-based variants (with and without orthogonal initialization). This is to verify, under orthogonal initialization, $\gradwhite$-processed GD behaves similarly to Newton’s method, as discussed in \Cref{sec: gradwhite_discuss} and \Cref{thm_sve_optimal}. All methods share the same initialization, except for the orthogonal $\gradwhite$ variant, which projects the initial parameters onto an orthogonal matrix.

\paragraph{Results} From \Cref{fig: gradwhite_toy} (a)--(c), we summarizes the following outcomes:

\begin{itemize}
    \item \textbf{Quadratic problems (\Cref{fig: gradwhite_toy} (a) and (b)).} 
    $\gradwhite$ with orthogonal initialization and Newton's method converge to optimum in one step, aligning with the theoretical predictions in \Cref{sec: gradwhite_discuss} and \Cref{thm_sve_optimal}.
    
    \item \textbf{Well- vs.\ ill-conditioned cases (\Cref{fig: gradwhite_toy} (a) and (b)).} 
    In the well-conditioned setting (a), standard GD outperforms both Adam and $\gradwhite$ (non-orthogonal initialization). In the ill-conditioned setting (b), $\gradwhite$ (non-orthogonal initialization) outperforms GD by a large margin, while GD experiences slow convergence.
     
    \item \textbf{Comparison with Adam (\Cref{fig: gradwhite_toy} (a)--(c)).}
    In all three settings, $\gradwhite$ with non-orthogonal initialization consistently outperforms Adam, consistent with \Cref{thm_sve_dam_group}.
   
    \item \textbf{Rastrigin function (\Cref{fig: gradwhite_toy} (c)).}
    On this non-convex problem, $\gradwhite$ performs comparably to Newton's method, with or without orthogonal initialization. 

    \item \textbf{Effect of initialization (\Cref{fig: gradwhite_toy} (d))} Furthermore, we force all methods to share the same orthogonalized initialization. As shown by the result, $\gradwhite$ GD with orthogonal initialization still consistently outperforms all baselines, confirming that this initialization is only beneficial to $\gradwhite$ GD among all baselines. 
\end{itemize}

\section{Implementation details}\label{app: code}

% \paragraph{Hyperparameters} We use the publicly available hyperparameter configurations from \cite{Zhao2024GaLoreML} for all baselines
% %
% %\footnote{This does not include Adam, since the authors of \cite{Zhao2024GaLoreML} did not disclose their Adam configs. However, we used the same learning rate sweep procedure as described in their paper. This result in an optimal learning rate of 0.001 for 60M, 130M and 350M models, and 0.0007 for 1.3 B models.}, 
% including method-specific learning rates selected through a grid search. For \name{}, we use a default setting without extensive tuning. We match the global learning rate (0.001) used by Adam, except for the 1.3B model (Adam diverges at 0.001, see \Cref{app: code}). We use 10 steps of Newton–Schulz iteration for $\gradwhite$. Unlike Adam, \name{} does not require a warm-up phase to ensure stable training. All other scheduling parameters remain the same as in the baselines. 

\paragraph{General setup} We describe the implementation setups for \name{} used in LLM pre-training tasks. To enable a more straightforward and comparable analysis, we simply replicate the setting of \cite{Zhao2024GaLoreML}, under exactly the same model configs and optimizer hyperparameter configs, whenever possible. This includes the same model architecture, tokenizer, batch size, context length, learning rate scheduler, learning rates, subspace scaling, etc.  

\paragraph{Precision} All baselines uses BF16 for model weights, gradients, and optimizer states storage. For all \name{} variants, we use BF16 for model weights and gradients. For the $\gradwhite$ step of \name{}-0 and \name{}$\dag$ we use FP32 to whiten the BF16 gradients and then convert it back to BF16. We found that this helps to improve trainining stability and performance. However, for the NSDS scheme of \name{}$\ddag$ we oberve that FP32 does not offer performance boost over BF16 (\Cref{app: precision_ablation}), therefore we stick with BF16.

\paragraph{Learning rate scheduling} we use exactly the same scheduler as in \cite{Zhao2024GaLoreML}, with the exception of \name{}-0, which does not require any learning rate warmup. Therefore, for \name{}-0, we directly start with maximum learning rate, and enter the learning rate decay phase, using the same decay parameters as \cite{Zhao2024GaLoreML}.

\paragraph{Reproducing baseline results} Most baseline results are cited from respective papers \citep{Zhao2024GaLoreML, zhu2024apollo} as we share the exact same setup. We also tried to reproduce their results using the same opensourced code, and generally obtain slightly worse results for Galore, Apollo and Adam for larger models (350M and 1B, see \Cref{tab: reproduced}). Therefore in the main paper we only compare with the official results. For the reproduced results of Adam, we specify the details below as it was not disclosed in \cite{Zhao2024GaLoreML}.  We use same learning rate tuning procedure as suggested by \cite{Zhao2024GaLoreML} (i.e., performing grid search over $\{0.01, 0.005, 0.001, 0.0005, 0.0001\}$). We found that the optimal learning rates for Adam is 0.001. The only exception is that for a model of size 1.3B: as we already know that a larger model requires smaller learning rates, we conduct a learning search for Adam over a smaller but more fine-grained grid of $\{ 0.001, 0.0007, 0.0005, 0.0003, 0.0001\}$. As a result, the optimal learning rate found for Adam on 1.3B is 0.0007. Finally, one baseline that does not exist in the literature is the Momentum + $\gradwhite$ (Muon-like optimizer without Nestrov acceleration) baseline; and we report our own results. We start from the default learning rates used by Muon \cite{jordan2024muon} and tuned them over a grid of ${0.01, 0.02, 0.03, 0.04, 0.05}$.

\begin{table*}[t]
    \centering
    \caption{\small{Reproduced results.  } }
    \label{tab: reproduced}
    \begin{tabular}{lcccc}
    \toprule
                & \textbf{130M} & \textbf{350M} & \textbf{1.3 B} \\
    \midrule
    Adam (reproduced) & 24.44 (0.75G) & 19.24 (2.05G) & 16.44 (7.48G) \\
    Apollo-mini (reproduced)  & 23.97 (0.43G) & 17.60 (0.93G) & 14.37 (2.98G) \\ 
    Galore (reproduced) & {24.67} (0.57G) & {19.74} (1.29G) & {15.89} (4.43G) \\
    \bottomrule
    $r$ of low-rank methods  & 256 & 256 & 512 \\
    Training Steps  & 20K & 60K & 100K \\ %
    \bottomrule
    \end{tabular}
\end{table*}

\paragraph{\name{} Settings and hyperparameters} Since \name{} utilizes matrix-level operations on gradients, it can only be applied to 2D parameters. Therefore, in our experiments, we only apply \name{} on all linear projection weights in transformer blocks. Similar to Galore \citep{Zhao2024GaLoreML}, the rest of the non-linear parameters still uses Adam as the default choice. Therefore, we follow the learning rate setup of Galore, where we fix some global learning rate across all model sizes and all modules. Then, for the linear projection modules where \name{} is applied, we simply apply a scaling factor $\alpha$ on top of the global learning rate.  For all \name{} variants, we adopt a \emph{lazy-tuning approach} (hyperparameters are set without extensive search), as detailed below. This helps to reduce the possibility of unfair performance distortion due to excessive tuning.   

\begin{itemize}
    \item \textbf{\name{}-0} uses naive NS-iteration for whitening, disabled learning rate warmup, and use similar learning rates optimized for Adam. We fix the global learning rate to be the same as Adam, and fix $\alpha=1$.  The only exception is the 1.3 B case. This is because we observe that the optimal learning rate of Adam under 1.3B becomes smaller than 0.001, hence we also reduce the learning rate on \name{}, where we used $\alpha = 0.3$, resulting an effective learning rate of $0.0003$.  To summarize the hyperparameter of \textbf{\name{}-0} is set to be similar to Adam, without any tuning. This is to demonstrate the robustness of \name{} series optimizers and their capability to work out-of-the-box as a replacement for Adam.
    \item \textbf{\name{}}$^\dag$, is the vanilla version of our method, in which we enabled learning rate warmup, and allowed the use of optimized learning rates that largely differ from Adam. We notice that \name{} allows larger learning rates than Adam. We use a global learning rate of 0.02, as well as the scaling factor $\alpha = 0.05$. This is selected by simply searching the learning rate over a constraint grid $\{0.01, 0.02, 0.05\}$, and then setting $\alpha = 0.05$ such that the effective learning rate is scaled back to 0.001. There is no guarantee that this heuristic rule is optimal; but we found that this usually does not make a run fail (e.g., with loss divergence).

    \item Finally, \textbf{\name{}$^\ddag$}, the most efficient version of \name{} that employs the proposed NSDS scheme for fast whitening (\cref{sec: practical}). Similar to \textbf{\name{}}$^\dag$, we use the same global learning rate of 0.02, as well as the scaling factor $\alpha = 0.05$ across all model sizes. We suspect with more careful tuning, its performance can be significantly improved; however, this is out of the scope of the paper.
\end{itemize}

The configurations of $\gradwhite$ is discussed next.

% We choose $\alpha = 1$ for all model sizes except for 1.3 B case. This is because we observe that the optimal learning rate of Adam under 1.3B becomes smaller than 0.001, hence we also reduce the learning rate on \name{}, where we used $\alpha = 0.3$, resulting an effective learning rate of $0.0003$. Note that this choice is quite arbitrary, and it is not extensively grid-searched. This is to show how the method work out-of-the box, without extensive problem-specific tuning.

\paragraph{Implementation of $\gradwhite$} For $\gradwhite$, before N-S iteration, we further normalize its input matrix by its Frobenius norm. This is applied in all our ablation studies as well, regardless of whether the gradient is processed by $\gradnorm$. For our proposed Newton-Schulz with Diagonal Substitution (NSDS) used in \name$^\ddag$, we only run it for 2 steps across all model sizes. We found that using a step size $\beta \neq 0.5$ in the $\gradwhite$ operator (\Cref{alg:optimizer2}) can improve its convergence. We set $\beta = 0.4$. We found that NSDS is generally robust to $\beta$ as long as it is not too large; and our specific choice of parameters usually already gives satisfactory performance in LLM pretraining. For the naive N-S iteration used in \name{}-0 and \name{}$^\dag$, we run 10 steps which is usually sufficient. We set $\beta = 0.8$. The naive NS is run in FP32 precision, on top of BF16 gradients and weights; while the NSDS is run in BF16.

\paragraph{Computational Overhead.} Below, we discuss the computational overhead of Naive Newton-Schulz. In practice, we only run the naive Newton-Schulz for $\leq$ 10 iterations, which corresponds to $\leq$ 50  matrix multiplications. These matrix multiplications are in general GPU friendly, hence for the task of training LLMs, the batch size is the more dominant factor for compute. For example QWen 14B \citep{bai2023qwen} has 4M batch size vs a a model dimension $d_{\text{model}}$  = 5120, DeepSeek \citep{Bi2024DeepSeekLS} 67B has 6M batch size vs $d_{\text{model}}$ = 8192, and LLama 3 \citep{llama3} 405B has 4-16M batch size vs $d_{\text{model}}$ = 16384. To estimate the computational overhead,  assuming the N-S iteration involves approximately $ 50 \times d_{\text{model}}^3 $ FLOPs. In contrast, the primary training cost scales with the batch size and is proportional to $ \text{batch\_size} \times d_{\text{model}}^2 $ FLOPs. In those examples, the estimated computational overhead of Newton–Schulz is typically below $ \leq 7\%$. A similar estimation has been given in \citep{jackson2023isometric} ($\leq 5 \%$), as well as in \citep{jordan2024muon} ($\leq 1 \%$). 

However, note that whether the above analysis hold for large scale, distributed LLM training is still unclear. This is due to a few factors. First, the N-S iteration is after all a $\mathcal{O}(m^3$ complexity operation and might need to scale as model size increases. Second, the distributed computation of N-S steps needs extra non-trivial effort, which brings additional infrastructure challenges. This motivates us to propose the NSDS scheme, which enables Adam-level throughput without performance compromise.

\end{document}